\documentclass{article}

\usepackage{microtype}
\usepackage{graphicx}
\usepackage{subfigure}
\usepackage{booktabs}

\usepackage{hyperref}

\usepackage{tikz}
\usetikzlibrary{positioning, arrows.meta}

\usepackage[accepted]{icml2026}

\usepackage{amsmath}
\usepackage{amssymb}
\usepackage{mathtools}
\usepackage{amsthm}
\usepackage{multirow}
\usepackage{enumitem}
\usepackage{float}
\usepackage{capt-of}

\usepackage[capitalize,noabbrev]{cleveref}

\theoremstyle{plain}
\newtheorem{theorem}{Theorem}[section]
\newtheorem{proposition}[theorem]{Proposition}

\newtheorem{corollary}[theorem]{Corollary}

\newtheorem{assumption}{Assumption}
\theoremstyle{definition}
\newtheorem{definition}[theorem]{Definition}
\theoremstyle{remark}
\newtheorem{remark}[theorem]{Remark}

\usepackage[textsize=tiny]{todonotes}

\icmltitlerunning{The Geometry of Projection Heads: Conditioning, Invariance, and Collapse}

\begin{document}

\twocolumn[
\icmltitle{The Geometry of Projection Heads: Conditioning, Invariance, and Collapse}

\icmlsetsymbol{equal}{*}

\begin{icmlauthorlist}
\icmlauthor{Faris Chaudhry}{imperial}
\end{icmlauthorlist}
\icmlaffiliation{imperial}{Department of Computing, Imperial College London, United Kingdom}
\icmlcorrespondingauthor{Faris Chaudhry}{faris.chaudhry@outlook.com}

\icmlkeywords{Representation Learning, Contrastive Learning, Projection Head, InfoNCE, Self-Supervised Learning, Representation Geometry}

\vskip 0.3in
]

\printAffiliationsAndNotice{}  

\begin{abstract}
We develop a geometric theory of projection heads in self-supervised learning by modeling the head as a trainable Riemannian metric on the backbone representation manifold. We show that linear heads perform implicit subspace whitening, while nonlinear heads adapt local metrics to satisfy the specific topological constraints of the loss, with head depth empirically dictating this capacity. Analyzing dimensional collapse, we prove that smooth nonlinear heads natively induce negative eigenvalues in the Hessian at collapsed equilibria, making them unstable. We empirically validate this by continuously tracking the optimization geometry during training, which reveals that smooth activations like Swish can generate explicit negative curvature to escape collapse, whereas linear and ReLU heads under continuous-time gradient flow cannot, relying instead on discrete-time optimization dynamics and BatchNorm. Finally, we geometrically characterize how metric degeneracy governs the information-invariance trade-off, explaining why the head must be discarded. Evaluated across contrastive and decorrelation-based objectives on foundation models, our results demonstrate that the projection head acts as a universal geometric buffer, decoupling the semantic backbone from the rigid, destructive constraints of the pretraining objective.
\end{abstract}

\section{Introduction}

Often, labeled data is scarce, motivating the use of self-supervised learning (SSL) in the form of pretraining to learn semantic information before supervised fine-tuning. A dominant paradigm operates on the principle of representation invariance: augmented views of the same image (e.g., rotations or recolorations) are pulled together in embedding space. This approach, alongside non-contrastive and decorrelation-based methods, has proven remarkably robust, yet it relies on a specific, nonobvious architectural choice: the projection head. Rather than applying the loss directly to the backbone representations $z$, modern methods map $z$ through a nonlinear multi-layer perceptron (MLP) $h(\cdot)$ and apply the loss to $h(z)$. Surprisingly, this head is typically discarded after training, with the raw backbone $z$ used for downstream fine-tuning.

This `train-with, deploy-without' strategy presents a theoretical puzzle: if the head is necessary for training, why is the representation it produces suboptimal for inference? Furthermore, why must the head be nonlinear? Intuitively, the pretraining loss demands extreme geometric distortions. Consider a downstream task classifying cat breeds by coat color. If we pretrain with a contrastive loss using color-jitter augmentations without a projection head, the network is forced to become perfectly color-invariant, severely degrading the backbone's ability to perform the downstream task. The projection head acts as a disposable preconditioner---a sacrificial set of layers that absorbs these extreme invariance demands, shielding the upstream backbone.

While prior works have framed this head as an information bottleneck that filters nuisance variables \citep{tishby2015informationbottleneck, tan2024informationflow, ouyang2025projection} or a mechanism to prevent dimensional collapse \citep{jing2022understanding, tian2021understanding}, these explanations primarily describe what happens rather than the geometric mechanics of how the head alters the optimization landscape. Specifically, existing theories do not fully explain how the head simultaneously filters information, accelerates convergence, and escapes collapsed saddle points \citep{dauphin2014identifyingattackingsaddlepoint, ge2015escapingsaddlepoints}.

As a contribution to the broader program of geometric deep learning \citep{bronstein2021geometricdeeplearning}, we build upon Riemannian network analysis \citep{ollivier2015riemannian} and information geometry \citep{amari2016information} to analyze the projection head as a trainable Riemannian preconditioner. It induces a pullback metric that warps the representation space, aligning loss stiffness with augmentation directions and decoupling the backbone from rigid objective constraints (which require destroying information). Specifically, we prove that while linear heads perform implicit subspace whitening, nonlinear heads locally adapt the metric along curved trajectories, with head depth empirically scaling the geometric capacity to compress augmentation orbits. Furthermore, we show that head curvature transforms dimensional collapse into an unstable saddle point; empirical Hessian tracking reveals that smooth activations natively inject negative curvature to escape collapse, whereas linear/ReLU heads survive only via discrete-time dynamics \citep{cohen2022gradientdescentneuralnetworks} and BatchNorm \citep{santurkar2019how}. Finally, we demonstrate that invariance objectives force the head to induce a metric singularity. This causes Fisher information degeneracy along augmentation directions \citep{amari2016information}, explaining why discarding the head preserves linearly separable backbone semantics---a geometric buffering effect that also extends to explicit whitening losses (e.g., VICReg \citep{bardes2022vicreg}) at foundation-model scale.

\subsection{Related Work}

Self-supervised learning via similarity preservation has a long history, rooted in Siamese networks and contrastive losses designed for dimensionality reduction~\citep{hadsell2006dimensionality}. Early iterations focused on pretext tasks~\citep{doersch2016unsupervised}, but a paradigm shift occurred with instance discrimination~\citep{wu2018unsupervised} and the scaling of contrastive objectives like InfoNCE~\citep{oord2019representation}. This culminated in the SimCLR~\citep{chen2020simclr} and MoCo~\citep{he2020moco} frameworks, which established that massive data augmentation and large batches of negative samples can learn representations rivaling supervised counterparts. However, these methods rely heavily on a specific architectural component: the nonlinear projection head.

The empirical necessity of the projection head was first observed by~\citet{chen2020simclr}, who demonstrated that, while the head is essential for pretraining, the representation it produces is suboptimal for downstream tasks, a phenomenon termed the guillotine effect~\citep{bordes2023guillotine}. Prior explanations suggest the head acts as an information bottleneck, filtering out nuisance variables irrelevant to the contrastive task \citep{jing2022understanding, ouyang2025projection}. Complementary to this, recent work has explored how the implicit bias of training algorithms induces layerwise progressive feature weighting, allowing lower layers to retain features that are entirely discarded by the deeper projection head \citep{xue2024projection}. While these works describe what is lost information-theoretically or through feature skew, our work provides the geometric mechanism: we prove the head induces a singular pullback metric that annihilates information along the tangent space of the augmentation manifold.

A major theoretical puzzle emerged with non-contrastive methods like BYOL~\citep{grill2020byol} and SimSiam~\citep{chen2021exploring}, which dispense with negative samples entirely. Standard optimization theory suggests these models should suffer from dimensional collapse. Various mechanisms have been proposed to explain why they avoid this, including BatchNorm~\citep{richemond2020byol}, stop-gradient operations~\citep{chen2021exploring}, and the implicit bias of SGD in deep linear networks~\citep{tian2021understanding, wen2023mechanism}. Unlike these works, which focus on first-order dynamics or linear architectures, we show that the intrinsic curvature of a smooth nonlinear head (e.g., Swish or GELU) is sufficient to natively destabilize collapsed equilibria, converting stable minima into unstable saddle points.

Our work connects the projection head to natural gradient descent~\citep{amari1998natural} and Riemannian optimization. Explicit whitening approaches, such as Barlow Twins~\citep{zbontar2021barlow} and VICReg~\citep{bardes2022vicreg}, enforce isotropy by adding decorrelation terms to the loss. In contrast, we propose the standard MLP projection head performs implicit whitening. By learning a state-dependent metric, the head acts as a trainable preconditioner that warps the representation space to align with the loss Hessian~\citep{yang2022tensor}. Furthermore, because the local geometry of the loss basin controls optimal choice of hyperparameters like contrastive temperature \citep{chaudhry2026asymptotic}, this dynamic preconditioning is essential for maintaining navigable and stable optimization trajectories \citep{chaudhry2026trajectoryrestricted}.

Finally, studies on invariance often treat augmentations as group actions.~\citet{tiwari2022geometry} explored the interaction between data geometry and augmentation, suggesting SSL recovers the underlying manifold structure. We extend this by analyzing the specific role of the projection head's Jacobian. Our result on metric degeneracy formalizes the trade-off between the invariance required by the pretraining task and the expressivity required for downstream transfer, providing a geometric foundation for the emergence of invariance discussed by~\citet{achille2018emergence}.

\section{Problem Setup and Notation}
\label{sec:problem-setup}

Let $\mathcal X$ denote the input space, $\mathcal Z \subseteq \mathbb{R}^d$ be the representation space of the backbone, and $\mathcal H \subseteq \mathbb{R}^k$ be the embedding space of the projection head. Further, let $f_\theta : \mathcal X \rightarrow \mathcal Z$ be a backbone network, $h_\phi : \mathcal Z \rightarrow \mathcal H$ a projection head, and $\mathcal D$ denote the data-generating distribution on $\mathcal{X}$ from which training samples (i.e., images) $x$ are drawn. This pipeline is visualized in Figure~\ref{fig:pipeline}.

\begin{figure}[ht]
\centering
\includegraphics[width=0.5\textwidth]{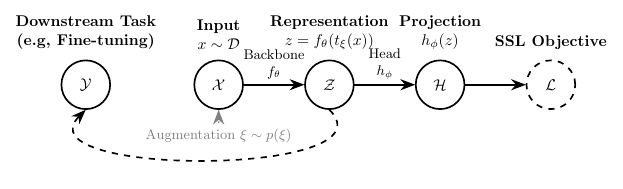}
\caption{\textbf{The SSL pipeline with a projection head.} The backbone $f_\theta$ maps augmented inputs $t_\xi(x)$ to the representation manifold $\mathcal{Z}$. The projection head $h_\phi$ acts as a Riemannian preconditioner, mapping $z$ to the loss space where invariance is enforced. Supervised downstream tasks (consisting of labeled data $(x, y) \in \mathcal{X} \times \mathcal{Y}$) operate directly on $z$. The fact that we bypass $h_\phi(z)$ to use $z$ directly for the downstream task is discarding the projection head; the so-called guillotine effect.}
\label{fig:pipeline}
\vspace{-10pt}
\end{figure}

Throughout this work, we use the standard differential geometry convention where $T_p \mathcal{M}$ denotes the tangent space to a manifold $\mathcal{M}$ at a point $p$ (e.g., $T_z \mathcal{Z}$ and $T_h \mathcal{H}$).

Consider a family of stochastic data augmentation operators $\mathcal{T}$ where each $t_\xi \in \mathcal{T}$ is a deterministic transformation conditioned on a random parameter vector $\xi \sim p(\xi)$. These parameters decompose into continuous components $\xi_c \in \mathbb{R}^m$ (e.g., color-jitter intensity or angle of rotation) and discrete components $\xi_d$ (e.g., horizontal flips). For a fixed input $x$ and fixed discrete choice $\xi_d$, continuously varying $\xi_c$ generates a smooth surface in the representation space called the augmentation orbit:
\begin{equation*}
    \mathcal{O}_z(x, \xi_d) = \{ f_\theta(t_{(\xi_c, \xi_d)}(x)) \mid \xi_c \in \Xi_{\text{cont}} \},
\end{equation*}
which can be thought of as the space of all images generated by augmenting the original view. The goal of invariance-based learning is to approximately collapse this manifold to a single point in the projection space $\mathcal{H}$.

To analyze local invariance, we define the augmentation tangent space $\mathcal{V}_{\text{aug}}(z)$ (evaluated at the current state $z = f_\theta(t_{\xi=0}(x))$) as the subspace spanned by the infinitesimal variations of the continuous parameters:
\begin{equation*}
    \mathcal{V}_{\text{aug}}(z) = \text{span}\left\{ \frac{\partial}{\partial [\xi_c]_i} f_\theta(t_\xi(x)) \bigg|_{\xi=0} \right\}_{i=1}^m \subset T_z \mathcal{Z}.
\end{equation*}
These tangent vectors $v \in \mathcal{V}_{\text{aug}}$ represent the local nuisance directions that the projection head must suppress. In high-dimensional representation learning, we typically assume $m \ll d$. This reflects that while the input $x$ is high-dimensional, the degrees of freedom corresponding to nuisance transformations (e.g., rotation, brightness) are relatively few. The projection head's task is to geometrically collapse these $m$ dimensions while preserving the $d - m$ semantic dimensions.

As visualized in Figure~\ref{fig:metric_singularity}, the backbone representation space $\mathcal{Z}$ contains high-variance orbits spanned by $\mathcal{V}_{\text{aug}}$. The central thesis of our analysis is that the projection head acts as a trainable Riemannian metric that topologically folds this orbit into a tight equivalence class. By structurally crushing these nuisance directions to satisfy the invariance objective, the head absorbs the metric degeneracy, shielding the upstream backbone.

\begin{figure}
\centering
\includegraphics[width=0.48\textwidth]{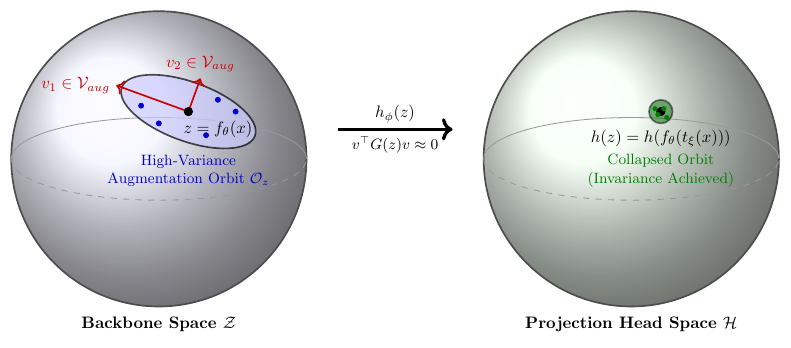}
\caption{\textbf{The geometric role of the projection head.} In the backbone representation space $\mathcal{Z}$ (left), augmented views of an image (blue dots) are not mapped to the same point and form a high-variance orbit $\mathcal{O}_z$ spanned by the local tangent space $\mathcal{V}_{\text{aug}}$. The projection head $h_\phi$ (right) learns a local metric that compresses this orbit into a tight equivalence class, satisfying the invariance objective while shielding the upstream backbone from metric degeneracy. That is, alternative views on the same image are represented in the same way by the projection head.}
\label{fig:metric_singularity}
\vspace{-10pt}
\end{figure}

\subsection{Motivating Intuition: InfoNCE with a Linear Head}
To build intuition for this geometric role, consider the standard InfoNCE loss \citep{oord2019representation} used in SimCLR. Suppose the projection head is strictly linear, such that $h(z) = Wz$ for some weight matrix $W \in \mathbb{R}^{k \times d}$. Ignoring normalization for clarity, the similarity between a positive pair becomes:
\begin{equation*}
    \langle h(z_i), h(z_j) \rangle = (W z_i)^\top (W z_j) = z_i^\top (W^\top W) z_j.
\end{equation*}
Defining $M = W^\top W$, we see that a linear projection head is equivalent to learning a Mahalanobis metric $M$ directly on the backbone representation space $\mathcal{Z}$. The optimization of $W$ is an implicit metric learning task: the head learns to warp the geometry of $\mathcal{Z}$ to maximize separation. As we will show, nonlinear heads extend this by learning a dynamic, state-dependent metric.

\subsection{The Effective Hessian and Induced Geometry}

We study the general class of pairwise SSL objectives of the form:
\begin{equation*}
    \min_{\theta,\phi} \mathbb E_{x \sim \mathcal D, \; \xi_1, \xi_2 \sim p(\xi)} \big[ \mathcal L( h_\phi(f_\theta(t_{\xi_1}(x))), h_\phi(f_\theta(t_{\xi_2}(x))) ) \big], 
\end{equation*}
where $\mathcal L : \mathcal H \times \mathcal H \to \mathbb{R}$ is the pairwise loss function.

The representation space $\mathcal Z$ is treated as an open subset equipped with coordinates inherited from the ambient Euclidean space. To analyze conditioning, we distinguish between the ambient dimensions of the head and the directions where the loss possesses intrinsic curvature.

\begin{definition}[Intrinsic Hessian Rank]
\label{def:intrinsic-rank}
The intrinsic rank $r(u,v)$ of a pairwise SSL objective at $(u,v)$ is defined as the rank of the Hessian of the loss with respect to the projection head outputs. That is, for $u = h(z)$ and $v = h(z')$,
\begin{equation*}
    r(u,v) := \operatorname{rank}(\nabla_u^2 \mathcal L(u,v)) \le k.
\end{equation*}
This rank determines the dimension of the transverse subspace $\mathcal R_u := \operatorname{Image}(\nabla_u^2 \mathcal L)
\subseteq T_u \mathcal H$, on which the loss geometry is strictly curved. Directions in the kernel $\ker(\nabla_u^2 \mathcal L)$ correspond to invariances (e.g., radial scaling) along which the loss is locally flat. In what follows, we assume that $r(u,v)$ is constant in a neighborhood of the optimum and denote it simply by $r$.
\end{definition}

The projection head induces a Riemannian metric structure on the representation space $\mathcal{Z}$ via the pullback of the loss geometry. To analyze the optimization dynamics, we define the effective Hessian $H_{\text{eff}}(z) \in \mathbb{R}^{d \times d}$ as the Hessian of the SSL objective with respect to the representation space. Following the chain rule, this operator decomposes into a first-order (Gauss-Newton) term and a second-order interaction term:
\begin{equation}
    \label{eq:hessian-full}
    H_{\text{eff}}(z) := \underbrace{J_h(z)^\top \nabla_h^2 \mathcal{L} J_h(z)}_{\text{Pullback Metric } G(z)} + \underbrace{\sum_{i=1}^k [\nabla_h \mathcal{L}]_i \nabla_z^2  h_i(z)}_{\text{Interaction Term}}.
\end{equation}
This matrix dictates the local curvature of the optimization landscape at the interface between the backbone and the projection head. While the loss Hessian $\nabla_h^2 \mathcal{L}$ resides in the $k$-dimensional embedding space, the effective Hessian $H_{\text{eff}}$ resides in the $d$-dimensional representation space and acts nontrivially only on the active pulled-back subspace $\mathcal S_z := \{ v \in T_z \mathcal Z : J_h(z)v \in \mathcal R_{h(z)} \}$.

\textbf{Theoretical assumptions (informal).} To analyze these geometric dynamics and permit the existence of the metric tensors, we operate under a set of standard structural conditions. For readability, we summarize them informally below; the formal definitions and discussion are deferred to Appendix~\ref{app:assumptions}. First, assume the loss and projection head are twice continuously differentiable ($C^2$), ensuring the interaction term's Hessian tensor $\nabla^2 h_\phi$ is well-defined. Furthermore, assume the optimization exhibits timescale separation, where the projection head adapts faster than the backbone \citep{raghu2020rapid, tian2021understanding}, allowing us to treat the head as a locally equilibrated preconditioner. Finally, to analyze collapse avoidance, assume that the projection head parameters are initialized generically to span indefinite matrices, that the backbone parameter-to-representation Jacobian is nondegenerate, and that architectural asymmetries (e.g., predictor lag or stop-gradients \citep{chen2021exploring}) prevent the residual gradient $\|\nabla_h \mathcal{L}\|_2$ from vanishing identically near collapsed states.

\section{Geometric Conditioning and Expressivity}
\label{sec:optimization}

The speed and stability of neural network optimization are largely dictated by the condition number $\kappa$ of the effective Hessian. If the loss landscape is highly anisotropic, gradient descent struggles with slow convergence and requires meticulous learning rate tuning. In this section, we formalize how the projection head reshapes the effective loss landscape, focusing on the conditioning properties of the pullback metric $G(z)$, which dominates the optimization dynamics near stable equilibria. 

\subsection{Linear Heads: Subspace Whitening}

Explicit decorrelation methods like VICReg and Barlow Twins enforce an isotropic geometry by adding covariance penalties directly to the objective. Other methods have been developed to encourage this whitening effect too~\citep{ermolov2021whitening}. We argue that standard contrastive methods can achieve this implicitly through the projection head's pullback metric. 

First consider the linear case where $h(z) = Wz$ with $W \in \mathbb{R}^{k \times d}$. While linear heads are often considered insufficient for complex tasks, they provide a baseline mechanism for resolving global anisotropy.

\begin{theorem}[Local Subspace Whitening]
\label{thm:linear-whitening}
Let $z^*$ be a fixed point. Under Assumptions~\ref{ass:regularity} and~\ref{ass:hessian-rank}, let $r = \operatorname{rank}(\nabla_h^2 \mathcal{L}|_{h(z^*)})$ be the intrinsic rank of the loss. For any subspace $\mathcal{S} \subset T_{z^*} \mathcal{Z}$ of dimension $r$, there exists a linear projection head $W \in \mathbb{R}^{k \times d}$ (with $k \ge r$) such that the effective Hessian restricted to $\mathcal{S}$ is isometric to the identity on $\mathcal{S}$:
\begin{equation*}
    v^\top H_{\text{eff}}(z^*) v = \|v\|_2^2, \quad \forall v \in \mathcal{S}.
\end{equation*}
\end{theorem}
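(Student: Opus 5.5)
Because the head is linear, $h(z)=Wz$, we have $\nabla_z^2 h_i\equiv 0$ for every $i$. The interaction term in \eqref{eq:hessian-full} therefore vanishes identically, and
\begin{equation*}
H_{\text{eff}}(z^*) = W^\top A\, W, \qquad A := \nabla_h^2\mathcal L\big|_{h(z^*)} .
\end{equation*}
The claim thus reduces to a finite-dimensional linear-algebra construction. We must find $W$ such that $v^\top W^\top A W v=\|v\|_2^2$ for all $v\in\mathcal S$. In other words, $W$ restricted to $\mathcal S$ must be an isometry from $(\mathcal S,\langle\cdot,\cdot\rangle_2)$ into $(\mathbb R^k,\langle\cdot,A\,\cdot\rangle)$.

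\textbf{Step 1 (spectral decomposition).} Under Assumption~\ref{ass:hessian-rank}, and using that $z^*$ is a fixed point, I will take $A$ to be symmetric positive semidefinite of rank $r$. This positivity must be checked against the appendix statement of the assumption; it is exactly what a stable equilibrium of the loss in $h$ provides. Write $A=\sum_{i=1}^r \lambda_i u_i u_i^\top$ with $\lambda_i>0$ and $\{u_i\}$ orthonormal in $\mathcal R_{h(z^*)}$.

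\textbf{Step 2 (construction on $\mathcal S$).} Fix an orthonormal basis $s_1,\dots,s_r$ of $\mathcal S$. Define
\begin{equation*}
W = \sum_{i=1}^r \lambda_i^{-1/2}\, u_i s_i^\top + W_\perp ,
\end{equation*}
where $W_\perp$ vanishes on $\mathcal S$ and is otherwise free on $\mathcal S^\perp$; this requires $k\ge r$. For $v=\sum_i c_i s_i\in\mathcal S$ we get $Wv=\sum_i \lambda_i^{-1/2}c_i u_i$. Hence $v^\top W^\top A W v=\sum_i \lambda_i\lambda_i^{-1}c_i^2=\|v\|_2^2$, which is the desired identity. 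Equivalently, $W|_{\mathcal S} = (A^{+})^{1/2} Q$, where $Q$ is an isometry of $\mathcal S$ onto $\mathcal R_{h(z^*)}$ and $A^{+}$ is the pseudo-inverse of $A$.

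\textbf{Step 3 (self-consistency, the main obstacle).} The matrix $A$ is evaluated at $h(z^*)=Wz^*$, which itself depends on $W$. We therefore need $Wz^*=u^*$, where $u^*$ is the point at which $A$ was computed.

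\emph{Case $z^*\notin\mathcal S$.} Write $z^*=z_{\mathcal S}+z_\perp$ with $z_\perp\neq 0$. Choose $W_\perp$ so that $W_\perp z_\perp = u^*-Wz_{\mathcal S}$. This is possible because a rank-one map on $\mathcal S^\perp$ can send $z_\perp$ anywhere, and the choice does not affect the restriction of $W$ to $\mathcal S$.

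\emph{Case $z^*\in\mathcal S$.} Here $Wz^*$ is pinned by the construction, and the difficulty is genuine. I would first try to exploit the remaining freedom in $Q$ (an orthogonal rotation within $\mathcal R$) together with the overall scale, and solve the fixed-point equation $u = (A(u)^{+})^{1/2}Q\,z^*$. The constant-rank hypothesis in Definition~\ref{def:intrinsic-rank} makes $u\mapsto (A(u)^{+})^{1/2}$ continuous near the optimum. However, $\mathcal R_u$ itself moves with $u$, so $Q$ must be transported continuously along with it. I would then apply Brouwer's theorem on a small compact neighborhood, or a contraction argument if $A$ is locally Lipschitz, to obtain a solution.

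\emph{Fallback.} If that fails, I would fall back on the timescale-separation assumption: treat $A$ as frozen at the equilibrated head output. This is the reading under which ``fixed point'' in the statement is intended. Alternatively, for normalized losses such as InfoNCE, I would use the invariance of $A$ under radial scaling, which lies in $\ker A$ by Definition~\ref{def:intrinsic-rank}, to absorb the mismatch.
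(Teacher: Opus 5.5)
Your Steps 1--2 are exactly the paper's proof. The paper takes $W=U_r\Lambda_r^{-1/2}Q$ with $Q\in\mathbb R^{r\times d}$ semi-orthogonal and rows spanning $\mathcal S$, which is your construction with $W_\perp=0$, gets $H_{\text{eff}}(z^*)=Q^\top Q$, and treats $H=\nabla_h^2\mathcal L|_{h(z^*)}$ as a fixed matrix independent of $W$ (your fallback reading), so the self-consistency issue of Step 3 never arises there.

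If you pursue the self-consistent version, drop the radial-scaling fallback. For a loss that is scale-invariant in $u$, differentiating Euler's identity $u^\top\nabla_u\mathcal L(u)=0$ gives $A(u)u=-\nabla_u\mathcal L(u)$, hence $u^\top A(u)u=0$. So $z^{*\top}H_{\text{eff}}(z^*)z^*=(Wz^*)^\top A(Wz^*)\,Wz^*=0$ for every linear $W$, and the case $0\neq z^*\in\mathcal S$ is impossible, not merely hard. For general losses, adding to $W|_{\mathcal S}$ a map into $\ker A$ reduces that case to finding some $u^*$ with $u^{*\top}A(u^*)u^*=\|z^*\|_2^2$, and no fixed-point argument is needed.
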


Theorem~\ref{thm:linear-whitening} explains why linear heads may improve convergence over no head: they can whiten the spectrum of the task, such that all feature directions in the relevant subspace are learned at the same speed. However, this preconditioning is global; it applies the same distortion $W$ everywhere. Furthermore, as the rank constraint on the number of nuisance dimensions ($m \ll r$) implies, a linear head is fundamentally unable to condition directions where the loss is flat locally but curved globally. This motivates the need for nonlinear heads.

\subsection{Nonlinear Heads: Local Metric Adaptation}
\label{sec:nonlinear-head-metric-adapt}

In contrastive learning, the loss landscape is often curved (e.g., due to embeddings being constrained on the hypersphere from $\ell_2$ normalization or the distribution of hard negatives). A single global transformation cannot flatten such a manifold. We now show that nonlinear heads (such as MLPs) can overcome this by learning a state-dependent metric.

\begin{theorem}[Trajectory Linearization]
\label{thm:nonlinear-adaptation}
Let $\gamma(t) \subset \mathcal{Z}$ be a smooth, regular, and non-self-intersecting optimization trajectory on a compact interval $t \in [0,T]$. Under Assumptions~\ref{ass:regularity} and~\ref{ass:hessian-rank}, assume the intrinsic loss Hessian $\nabla_h^2 \mathcal{L}$ has constant rank $r$ along the trajectory. Then, for any $\epsilon > 0$, there exists a nonlinear projection head $h_\phi$ (parameterized by an MLP) such that the induced effective Hessian is $\epsilon$-isotropic along the trajectory:
\begin{equation*}
    \sup_{z \in \gamma(t)} \left\| P_{\mathcal{S}_z}^\top \left( J_h(z)^\top \nabla_h^2 \mathcal{L}(h(z)) J_h(z) - I_r \right) P_{\mathcal{S}_z} \right\|_F \le \epsilon.
\end{equation*}
\end{theorem}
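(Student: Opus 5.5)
The strategy has two stages. First, construct an exact target map $h^\star$ defined on a tubular neighbourhood of the curve $\gamma$ whose pullback metric is exactly isotropic on $\mathcal{S}_z$ at every point of $\gamma$. Then approximate $h^\star$ by an MLP in the $C^1$ topology on a compact set. The key observation is that the pullback term $J_h^\top \nabla_h^2\mathcal{L}\,J_h$ involves only first derivatives of $h$ (and $\nabla_h^2\mathcal{L}$ evaluated at $h(z)$). So $C^1$-closeness of $h_\phi$ to $h^\star$, together with continuity of $\nabla_h^2\mathcal{L}$ from Assumption~\ref{ass:regularity}, controls the error uniformly. For this reason we never need to control the interaction term in \eqref{eq:hessian-full}.

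\textbf{Step 1 (pointwise whitening along the curve).} At each $t$, apply the construction behind Theorem~\ref{thm:linear-whitening}. Write $\nabla_h^2\mathcal{L}(h(\gamma(t))) = U_t\Lambda_tU_t^\top$ on its rank-$r$ image $\mathcal{R}$, choose an orthonormal basis $Q_t$ of an $r$-dimensional subspace $\mathcal{S}(t)\subset T_{\gamma(t)}\mathcal{Z}$, and set $A(t) = U_t|\Lambda_t|^{-1/2}Q_t^\top$. The absolute value handles indefinite spectra; the statement is effectively about the positive part, and I would add this assumption if needed. Then $A(t)^\top\nabla_h^2\mathcal{L}\,A(t) = Q_tQ_t^\top$, which is the identity on $\mathcal{S}(t)$.

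Constant rank $r$ (Assumption~\ref{ass:hessian-rank}) makes the image bundle $\mathcal{R}$ continuous in $t$, and the same holds for the spectral projector onto it. I would therefore choose $U_t,\Lambda_t,Q_t$ to depend continuously on $t$, e.g.\ via $(\nabla_h^2\mathcal{L})^{+1/2}$ defined through the continuous Riesz projector. One complication is circular: the evaluation point $h(\gamma(t))$ depends on $h$ itself. I would break this by prescribing the values $h^\star(\gamma(t)) := c(t)$ along some fixed curve $c$ first, and only then computing the Hessian there.

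\textbf{Step 2 (extension to a map with prescribed value and Jacobian on $\gamma$).} Because $\gamma$ is regular, non-self-intersecting and compact, it is an embedded arc and admits a tubular neighbourhood $N\cong[0,T]\times B^{d-1}$ with normal coordinates $(t,n)$. Define
\[
h^\star(\gamma(t)+n) = c(t) + A(t)\,n .
\]
The obstacle here is compatibility. Along the tangent direction, $J_{h^\star}\dot\gamma(t) = \dot c(t)$ is forced, so I would either require $\dot\gamma\notin\mathcal{S}(t)$, choosing $\mathcal{S}(t)$ inside the normal space, or choose $c$ with $\dot c(t) = A(t)\dot\gamma(t)$, i.e.\ $c(t) = c(0)+\int_0^t A\dot\gamma$. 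The second choice is self-consistent once the Hessian in $A$ is evaluated at the prescribed $c(t)$, which makes this an ODE for $c$; existence on $[0,T]$ follows from continuity and compactness. I expect this step to be the main obstacle. It requires choosing $A$, $c$ and $\mathcal{S}$ consistently so that $h^\star$ is $C^1$ on $N$ with $J_{h^\star}|_{\mathcal{S}}$ equal to the whitening map, and if necessary mollifying $A(t)$ to make $h^\star$ smooth.

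\textbf{Step 3 (MLP approximation).} Extend $h^\star$ to a $C^1$ map on a compact $K\supset\gamma$ using Whitney or Tietze extension. By universal approximation in $C^1$ on compacta (Hornik 1991, valid for smooth nonpolynomial activations such as Swish or GELU), for every $\delta>0$ there is an MLP $h_\phi$ with $\|h_\phi-h^\star\|_{C^1(K)}<\delta$. Uniform continuity of $\nabla_h^2\mathcal{L}$ on a compact neighbourhood of $h^\star(K)$ then gives
\[
\sup_{\gamma}\big\|J_{h_\phi}^\top\nabla^2\mathcal{L}(h_\phi)J_{h_\phi} - J_{h^\star}^\top\nabla^2\mathcal{L}(h^\star)J_{h^\star}\big\|_F \le C\delta + \omega(\delta),
\]
where $\omega$ is the modulus of continuity of $\nabla_h^2\mathcal{L}$.

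A second subtlety is that $\mathcal{S}_z$ is defined through $J_{h_\phi}$, so it moves as $h_\phi$ changes. Constant rank makes the relevant projectors continuous under small perturbations, so $P_{\mathcal{S}_z}$ stays $O(\delta)$-close to the exact projector. Choosing $\delta$ small enough then yields the required $\epsilon$ bound.
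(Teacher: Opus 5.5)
Your proposal is correct in outline. It shares the paper's two-stage skeleton: an ideal head $h^\star$ on a tubular neighbourhood of $\gamma$, then $C^1$ universal approximation with a triangle-inequality bound on the Gauss--Newton term only. The difference is in how $h^\star$ is built.

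\textbf{The paper's construction.} It works intrinsically. It makes $\mathcal{S}_z$ a smooth subbundle via Rellich's theorem and restricts the pullback $G$ to it to get a metric $g_z$. It extends this to a full metric $\tilde g$ and takes a tubular-neighbourhood chart $\psi$ with $\tilde g_{ij}=\delta_{ij}$ for $i,j\le r$. Then $h^\star$ is the first $r$ coordinates of $\psi$, padded with zeros. This fits the Riemannian narrative and avoids bookkeeping over output values. However, the paper's $G$ is a pullback through an unspecified head, and its isometry step only gives $J_{h^\star}^\top J_{h^\star}=G$ on $\mathcal{S}_z$. That is an isometry \emph{out of} $(\mathcal{S}_z,g_z)$ into Euclidean $\mathbb{R}^r$. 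Whitening needs the reverse: Euclidean $\mathcal{S}_z$ mapped isometrically \emph{into} $(\mathcal{H},\nabla_h^2\mathcal{L})$.

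\textbf{Your construction.} You transport the explicit linear whitening of Theorem~\ref{thm:linear-whitening} along the curve and extend it affinely in normal coordinates, $h^\star(\gamma(t)+n)=c(t)+A(t)n$. This gives $J_{h^\star}^\top\nabla_h^2\mathcal{L}(h^\star)J_{h^\star}=Q_tQ_t^\top$ by design. You also remove the circularity by fixing the output values first. Your error control handles two things the paper's bound skips:
\begin{enumerate}
\item the dependence of $\mathcal{S}_z$ on $h_\phi$, where the paper uses one fixed $P_{\mathcal{S}_z}$ for both heads;
\item the fact that Assumption~\ref{ass:regularity} only makes $\nabla_h^2\mathcal{L}$ continuous, so a modulus of continuity $\omega$ is the right tool rather than the paper's Lipschitz constant $L_H$.
\end{enumerate}

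\textbf{Points to tighten.}
\begin{enumerate}
\item $|\Lambda_t|^{-1/2}$ does not rescue indefinite spectra. It gives $A^\top\nabla_h^2\mathcal{L}\,A=Q_t\operatorname{sgn}(\Lambda_t)Q_t^\top$. By Sylvester's law of inertia, no $J$ makes $J^\top\nabla_h^2\mathcal{L}\,J$ the identity on an $r$-dimensional subspace unless $\nabla_h^2\mathcal{L}$ has $r$ positive eigenvalues. So the positivity in Assumption~\ref{ass:hessian-rank} is genuinely needed, not optional.
\item In Step 2, your first option fails on its own if $\mathcal{S}_z$ is the range of $G$, as in the paper. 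The tangential column $J\dot\gamma=\dot c$ leaks into $G$: with $d=2$ and $r=k=1$, $\nabla_h^2\mathcal{L}=1$, the nonzero eigenvalue is $1+|\dot c|^2$. You therefore need $\nabla_h^2\mathcal{L}(c)\,\dot c=0$.
\item Your ODE $\dot c=A\dot\gamma$ is the right condition, but existence on all of $[0,T]$ does not follow from continuity and compactness. $\|A\|$ blows up as the smallest nonzero eigenvalue degenerates, and $c$ may leave the neighbourhood where Assumption~\ref{ass:hessian-rank} holds.
\item The clean fix when $r<d$ is to take $\mathcal{S}(t)\perp\dot\gamma(t)$, spanned by $r$ vectors of a smooth normal frame. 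Then $A(t)\dot\gamma(t)=0$, the ODE gives $c\equiv c_0$, and $J_{h^\star}(\gamma(t))=A(t)$ exactly. The Hessian is evaluated at a single point, so circularity, mollification and global existence all disappear.
\item Tietze only gives a $C^0$ extension. Multiply by a smooth cutoff supported in the tube instead.
\item The last step is simplest via Weyl's inequality. Constant rank near $c_0$ gives $\operatorname{rank}G_\phi\le r$. Its top $r$ eigenvalues lie within $\|G_\phi-G^\star\|_2=O(\delta+\omega(\delta))$ of $1$. Hence the compression onto its own range is within $\sqrt r\,\|G_\phi-G^\star\|_2$ of $I_r$, with no projector-perturbation argument needed.
\end{enumerate}
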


This result is existential: it proves that MLPs have the capacity to condition the landscape along curved paths, a property linear heads lack.

\begin{proposition}[Curvature Barrier for Linear Heads]
\label{prop:linear-curvature-barrier}
Assume Assumption~\ref{ass:regularity}. Let $g_{\mathcal{L}}(u) := \nabla_u^2 \mathcal{L}(u)$ be the Riemannian metric induced by the loss. If the intrinsic geometry $(\mathcal{H}, g_{\mathcal{L}})$ has nonvanishing Riemann curvature $R_{\mathcal{L}} \not\equiv 0$, there exists no global constant linear map $W \in \mathbb{R}^{k \times d}$ such that the induced effective Hessian $H_{\text{eff}}(z) = W^{\top} g_{\mathcal{L}}(Wz) W$ is everywhere nondegenerate and isotropic on the optimization manifold $\mathcal{Z}$.
\end{proposition}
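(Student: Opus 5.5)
Argue by contradiction, using the pullback metric $H_{\text{eff}}(z) = W^\top g_{\mathcal{L}}(Wz)W$ as a Riemannian metric on $\mathcal{Z}$. Since the head is linear, $\nabla_z^2 h \equiv 0$, so the interaction term in \eqref{eq:hessian-full} vanishes and $H_{\text{eff}}$ is exactly the pullback $W^\ast g_{\mathcal{L}}$. Suppose some constant $W$ makes $H_{\text{eff}}(z)$ everywhere nondegenerate and isotropic on $\mathcal{Z}$. I read isotropic as $H_{\text{eff}}(z) = c\,I$ for a positive constant $c$ (after rescaling, $c=1$), in the spirit of Theorem~\ref{thm:linear-whitening}. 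Then $(\mathcal{Z}, W^\ast g_{\mathcal{L}})$ is a flat Euclidean metric in the inherited coordinates, so its Riemann tensor vanishes identically.

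\textbf{Step 1: transport curvature.} Nondegeneracy of $W^\top g_{\mathcal{L}} W$ forces $W$ to be injective on $T_z\mathcal{Z}$, so $\operatorname{rank} W = d \le k$. The map $\iota: z \mapsto Wz$ is then a linear embedding of $\mathcal{Z}$ onto an open piece of the $d$-dimensional subspace $\operatorname{Im} W \subseteq \mathcal{H}$. It is an isometry from $(\mathcal{Z}, H_{\text{eff}})$ onto $(\iota(\mathcal{Z}), \iota^\ast g_{\mathcal{L}})$. Curvature is an isometry invariant, so the metric $g_{\mathcal{L}}$ restricted to that affine slice must be flat.

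\textbf{Step 2: reach a contradiction with $R_{\mathcal{L}} \not\equiv 0$.} This is the main obstacle. In the case $d = k$, $W$ is invertible and $\iota$ is a diffeomorphism onto an open subset of $\mathcal{H}$, so $R_{\mathcal{L}}$ vanishes there. If that image covers the region where $R_{\mathcal{L}}\neq 0$ (the optimization manifold reaches the curved region, as the statement implicitly requires), this is an immediate contradiction. In the case $d<k$, the induced metric on a linear slice need not inherit ambient curvature, so I would use the Gauss equation. On a linear (totally geodesic in the Euclidean sense, but not in $g_{\mathcal{L}}$) slice, the second fundamental form involves the Christoffel symbols of $g_{\mathcal{L}}$, which depend on derivatives of $g_{\mathcal{L}}$. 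Rather than fight this, I would exploit the stronger conclusion that $W^\top g_{\mathcal{L}}(Wz) W$ is \emph{constant}, not merely flat. Differentiating in $z$ gives $W^\top (\partial g_{\mathcal{L}}[Wv]) W = 0$ for all $v$, so $g_{\mathcal{L}}$ restricted to $\operatorname{Im}W$ has vanishing tangential derivatives along $\operatorname{Im} W$. Since the constant $W$ must also work at every point of the optimization manifold, and Assumption~\ref{ass:regularity} allows a generic/full-dimensional $\mathcal{Z}$ image (I would argue $\operatorname{Im}W$ exhausts the directions of $\mathcal{H}$ that matter, i.e., effectively $d=k$ on the active subspace), $g_{\mathcal{L}}$ becomes constant on an open set. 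A constant metric has zero Christoffel symbols and hence $R_{\mathcal{L}} = 0$ there, contradicting $R_{\mathcal{L}}\not\equiv 0$ on that region.

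\textbf{Step 3: conclude.} The constancy route is what I would present as the core argument: isotropy gives $H_{\text{eff}} = cI$, hence $\partial_z H_{\text{eff}}=0$, hence $g_{\mathcal{L}}$ is constant on $W\mathcal{Z}$, hence $R_{\mathcal{L}} = 0$ there. The honest caveat is that the contradiction requires $W\mathcal{Z}$ to meet an open set where the curvature is nonzero. I expect to have to state this as the interpretation of ``on the optimization manifold'' (or assume $d \ge k$ with $W$ surjective). By contrast, Theorem~\ref{thm:nonlinear-adaptation} sidesteps this obstruction because a state-dependent $J_h(z)$ can absorb the variation of $g_{\mathcal{L}}$.
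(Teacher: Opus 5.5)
Your argument works in the case $d=k$. Nondegeneracy forces $W$ to be injective, so $W$ is invertible, and $W^\top g_{\mathcal{L}}(Wz)W = cI$ gives $g_{\mathcal{L}} \equiv c\,W^{-\top}W^{-1}$ on the open set $W\mathcal{Z}$. Hence $R_{\mathcal{L}}=0$ there, which contradicts $R_{\mathcal{L}}\not\equiv 0$ once $W\mathcal{Z}$ meets the curved region. Reading ``isotropic'' as $cI$ with constant $c$ costs nothing: $H_{\text{eff}}=\nabla_z^2(\mathcal{L}\circ W)$, and a Hessian of the form $c(z)I$ with $d\ge 2$ has locally constant $c$.

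The gap is the case $d<k$, at the step where you say $\operatorname{Im}W$ ``effectively'' exhausts $\mathcal{H}$ and conclude that $g_{\mathcal{L}}$ is constant on an open set. Differentiating $W^\top g_{\mathcal{L}}(Wz)W=cI$ only shows that the tangential $d\times d$ block of $g_{\mathcal{L}}$ is constant along the slice $W\mathcal{Z}$. It says nothing about the normal and mixed blocks, or about derivatives of $g_{\mathcal{L}}$ transverse to the slice, and the curvature can live entirely there.

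This step cannot be repaired, because the statement is false for $d<k$. Take $k=d+2$, $\mathcal{L}(u)=\tfrac12\sum_{i\le d}u_i^2+\log\bigl(1+e^{u_{d+1}}+e^{u_{d+2}}\bigr)$ and $W=\begin{pmatrix} I_d\\ 0\end{pmatrix}$. Then $g_{\mathcal{L}}$ is the product of the flat metric on $\mathbb{R}^d$ with the Fisher metric of the three-category exponential family in natural coordinates. That factor has constant curvature $\tfrac14$, so $R_{\mathcal{L}}\neq 0$ everywhere. Yet $H_{\text{eff}}(z)=W^\top g_{\mathcal{L}}(Wz)W=I_d$ for every $z\in\mathbb{R}^d$.

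The paper's proof does not supply the missing idea either; it handles $d<k$ with two claims that the same example refutes.
First, it asserts that a linear immersion has vanishing second fundamental form. As you observed, the second fundamental form of a linear slice is built from the Christoffel symbols of $g_{\mathcal{L}}$ and is nonzero in general.
Second, it asserts that the restriction of a nonvanishing $R_{\mathcal{L}}$ to an affine slice cannot vanish identically. In the product example the slice is totally geodesic, and the Gauss equation gives $R_G=R_{\mathcal{L}}|_{\text{slice}}=0$.

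In codimension zero, the paper's Gauss-equation argument reduces to your isometry/constancy argument, so the two proofs agree exactly where a proof exists. What is actually provable is your caveated version: $d=k$ (so nondegeneracy makes $W$ invertible), with $W\mathcal{Z}$ meeting the set where $R_{\mathcal{L}}\neq 0$. That second condition is automatic when $\mathcal{Z}=\mathbb{R}^d$. State it as an explicit hypothesis rather than folding it into an ``effectively $d=k$'' step.
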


It is now natural to ask what architecture the projection head requires to successfully reshape the optimization landscape. In the proof of Theorem~\ref{thm:nonlinear-adaptation}, it is assumed the architecture of $h_\phi$ has sufficient depth and width to act as a universal approximator for the ideal metric $h^*$. In practice, restricted architectures---such as shallow or purely linear heads---incur a nonzero approximation error. The following proposition formalizes how this architectural limitation directly degrades the optimization geometry.

\begin{proposition}[Perturbation via Capacity Limits]
\label{prop:capacity-limits}
Let $h^*$ be the ideal projection head that achieves perfect isotropic conditioning, yielding a Gauss-Newton effective Hessian $H_{\text{eff}}^*$. Suppose a restricted-capacity MLP head $h_\phi$ approximates $h^*$ with a worst-case Jacobian error of $\sup_{z} \|J_{h_\phi}(z) - J_{h^*}(z)\|_2 \le \epsilon$. Assuming $h^*$ is $L$-Lipschitz ($\|J_{h^*}\|_2 \le L$) and the base loss Hessian is bounded by $\| H_{\text{loss}} \|_2 \le M$, the induced effective Hessian $H_{\text{eff}}^\phi$ satisfies:
\begin{equation*}
    \| H_{\text{eff}}^\phi - H_{\text{eff}}^* \|_2 \le 2 L M \epsilon + M \epsilon^2.
\end{equation*}
\end{proposition}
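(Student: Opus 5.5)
The bound follows from a purely algebraic perturbation of the Gauss--Newton form. Write $J^* := J_{h^*}(z)$, $J_\phi := J_{h_\phi}(z)$ and $\Delta := J_\phi - J^*$, so that $\|\Delta\|_2 \le \epsilon$ uniformly in $z$. Both effective Hessians are Gauss--Newton matrices with the same base loss Hessian, $H_{\text{eff}}^\phi = J_\phi^\top H_{\text{loss}} J_\phi$ and $H_{\text{eff}}^* = J^{*\top} H_{\text{loss}} J^*$. This follows the statement's framing, which compares only the pullback-metric term $G(z)$ of \eqref{eq:hessian-full} and drops the interaction term. I will treat $H_{\text{loss}}$ as evaluated at a common point. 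If instead it is evaluated at $h_\phi(z)$ versus $h^*(z)$, I will either absorb the difference into the hypothesis or note that the stated bound concerns the Jacobian perturbation with the loss Hessian frozen, as the statement implicitly does.

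Substituting $J_\phi = J^* + \Delta$ and expanding gives
\begin{equation*}
H_{\text{eff}}^\phi - H_{\text{eff}}^* = J^{*\top} H_{\text{loss}} \Delta + \Delta^\top H_{\text{loss}} J^* + \Delta^\top H_{\text{loss}} \Delta .
\end{equation*}
I then apply the triangle inequality and submultiplicativity of the spectral norm, using $\|A^\top\|_2 = \|A\|_2$. Each of the two cross terms is bounded by $\|J^*\|_2 \, \|H_{\text{loss}}\|_2 \, \|\Delta\|_2 \le L M \epsilon$. The quadratic term is bounded by $\|\Delta\|_2^2 \, \|H_{\text{loss}}\|_2 \le M\epsilon^2$. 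Summing the three bounds gives $2LM\epsilon + M\epsilon^2$ pointwise in $z$. Since all constants are uniform, the bound also holds in the supremum over $z$.

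There is no real analytic obstacle; the only delicate point is bookkeeping. I need to be explicit that the comparison holds the base loss Hessian fixed and concerns the Gauss--Newton part only. Otherwise an extra term would appear, of the form $J^\top (H_{\text{loss}}(h_\phi) - H_{\text{loss}}(h^*)) J$, which would require a Lipschitz assumption on $\nabla^2 \mathcal{L}$. I will state this convention at the start of the proof.
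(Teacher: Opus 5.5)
Your proposal is correct and follows the paper's proof: both use the Gauss--Newton forms with a common $H_{\text{loss}}$, substitute $J_\phi = J^* + E$, expand into two cross terms and one quadratic term, and bound each by submultiplicativity and $\|A^\top\|_2 = \|A\|_2$ to obtain $2LM\epsilon + M\epsilon^2$. Your explicit remark that the loss Hessian is held fixed states a convention the paper leaves implicit; without it, an extra term requiring a Lipschitz bound on $\nabla^2\mathcal{L}$ would appear, as it does in the proof of Theorem~\ref{thm:nonlinear-adaptation}.
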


Consequently, while the theory requires projection heads to be sufficiently expressive approximators, it is agnostic to their specific architecture. Following the empirical ablations in foundational works like SimCLR \citep{chen2020simclr, chen2020big} and BYOL \citep{grill2020byol}, the community has standardized 2- or 3-layer MLPs as the canonical architecture capable of satisfying these capacity bounds.

\begin{corollary}[Conditioning Failure Threshold]
\label{cor:failure-threshold}
For the projection head to guarantee isotropic conditioning, its approximation error must be bounded by $\epsilon < \lambda_{\min}(H_{\text{eff}}^*) / (2LM)$ (ignoring higher-order terms). If the network capacity is too shallow, $\epsilon$ exceeds this threshold, removing the conditioning guarantees and rendering the collapsed states stable.
\end{corollary}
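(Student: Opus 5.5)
The plan is to derive the threshold from the perturbation bound of Proposition~\ref{prop:capacity-limits}, combined with Weyl's inequality for symmetric matrices. Throughout, $H_{\text{eff}}^\phi$ and $H_{\text{eff}}^*$ denote the Gauss--Newton (pullback) parts $J^\top H_{\text{loss}} J$. These are symmetric, so their eigenvalues move by at most the spectral norm of the perturbation.

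\textbf{Step 1: eigenvalue perturbation.} Weyl's inequality gives, at every $z$,
\begin{equation*}
    \lambda_{\min}(H_{\text{eff}}^\phi(z)) \ge \lambda_{\min}(H_{\text{eff}}^*(z)) - \| H_{\text{eff}}^\phi(z) - H_{\text{eff}}^*(z)\|_2 .
\end{equation*}
Substituting the bound of Proposition~\ref{prop:capacity-limits} yields
\begin{equation*}
    \lambda_{\min}(H_{\text{eff}}^\phi) \ge \lambda_{\min}(H_{\text{eff}}^*) - 2LM\epsilon - M\epsilon^2 .
\end{equation*}
Dropping the higher-order term $M\epsilon^2$, as the statement permits, the right-hand side is strictly positive exactly when $\epsilon < \lambda_{\min}(H_{\text{eff}}^*)/(2LM)$. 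Under that condition $H_{\text{eff}}^\phi$ stays positive definite on the active subspace $\mathcal S_z$. Its condition number is then bounded by
\begin{equation*}
    \frac{\lambda_{\max}^* + 2LM\epsilon}{\lambda_{\min}^* - 2LM\epsilon},
\end{equation*}
which tends to $1$ as $\epsilon \to 0$. This is the conditioning guarantee. For a rigorous rather than first-order statement, I would solve the quadratic $M\epsilon^2 + 2LM\epsilon < \lambda_{\min}^*$ exactly, giving $\epsilon < -L + \sqrt{L^2 + \lambda_{\min}^*/M}$, and note that this agrees with the stated threshold to first order.

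\textbf{Step 2: the failure direction.} If $\epsilon$ exceeds the threshold, the Weyl lower bound becomes nonpositive, so the guarantee is lost. This half of the corollary is the main obstacle. Weyl's inequality only bounds the spectrum; it does not force degeneracy. So the honest reading of ``removing the conditioning guarantees'' is that the bound no longer certifies positivity.

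To support the claim that collapsed states become \emph{stable}, I would try to exhibit an adversarial perturbation. Take $J_{h_\phi} = J_{h^*}(I - \epsilon' P_v)$, where $v$ is the bottom eigenvector of $H_{\text{eff}}^*$ and $\epsilon'$ is chosen so that $\|J_{h_\phi} - J_{h^*}\|_2 \le \epsilon$. This drives the curvature along $v$ to zero, which shows the threshold is tight up to constants.

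The stability statement additionally requires linking this back to the interaction term in \eqref{eq:hessian-full}. At a collapsed equilibrium, a shallow (linear or ReLU) head has $\nabla_z^2 h = 0$ almost everywhere. The negative-curvature contribution that would make the collapse a saddle therefore vanishes, and the remaining Gauss--Newton part is positive semidefinite. Hence no escape direction exists, and the collapsed state is stable in the second-order sense.

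I expect this last link to be the weakest part. I would state it as holding under the paper's assumptions rather than for every shallow architecture.
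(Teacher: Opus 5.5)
Your Step 1 is the paper's proof: Weyl's inequality for the symmetric Gauss--Newton matrices, the bound $2LM\epsilon+M\epsilon^2$ from Proposition~\ref{prop:capacity-limits}, and dropping $M\epsilon^2$ to get $\epsilon<\lambda_{\min}(H_{\text{eff}}^*)/(2LM)$. Your exact root $-L+\sqrt{L^2+\lambda_{\min}^*/M}$ and your restriction to the active subspace are genuine refinements. On all of $T_z\mathcal Z$ one has $\lambda_{\min}(H_{\text{eff}}^*)=0$ whenever $r<d$, which would make the threshold vacuous; compressing both matrices to the active subspace of $h^*$ keeps Weyl valid. For the second sentence, the paper does nothing beyond your ``honest reading'': once $2LM\epsilon\ge\lambda_{\min}(H_{\text{eff}}^*)$ the lower bound is nonpositive, and the guarantee is declared void.

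What you add beyond that does not hold and should be removed or labeled heuristic.

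First, the adversarial perturbation does not show tightness up to constants. With $J_{h_\phi}=J_{h^*}(I-\epsilon' P_v)$, the curvature along $v$ is $(1-\epsilon')^2\lambda_{\min}^*$. This vanishes only at $\epsilon'=1$, i.e.\ at Jacobian error $\|J_{h^*}v\|_2$. Since $\lambda_{\min}^*=(J_{h^*}v)^\top H_{\text{loss}}J_{h^*}v\le M\|J_{h^*}v\|_2^2$, that error is at least $\sqrt{\lambda_{\min}^*/M}$. This is at least $2L\sqrt{M/\lambda_{\min}^*}\ge 2$ times the threshold, a factor that is unbounded in general. Indeed, for $H_{\text{loss}}\succeq 0$ and unit $u$ in the active subspace, $\sqrt{u^\top H_{\text{eff}}^\phi u}\ge\|H_{\text{loss}}^{1/2}J_{h^*}u\|_2-\sqrt{M}\epsilon\ge\sqrt{\lambda_{\min}^*}-\sqrt{M}\epsilon$. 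So no error below $\sqrt{\lambda_{\min}^*/M}$ creates a flat direction. Also, $J^\top H_{\text{loss}}J\succeq 0$ for every $J$, so Jacobian error can at worst flatten a direction. It never produces the negative curvature that matters for collapse.

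Second, your stability link never uses $\epsilon$. It is Theorem~\ref{thm:instability}(1) restated, true of a linear head whatever its approximation error. A shallow \emph{smooth} head that violates the threshold still has a nonzero interaction term in \eqref{eq:hessian-full}, and is generically unstable by Theorem~\ref{thm:instability}(2). ReLU heads are outside Assumption~\ref{ass:regularity} anyway. So ``threshold exceeded $\Rightarrow$ collapse stable'' is not established.

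The underlying mismatch is this. The Weyl certificate concerns positivity of the Gauss--Newton part near the isotropic optimum. Collapse instability instead comes from negative eigenvalues of the interaction term on the null space of the Gauss--Newton part at $z^*$. A certified positive-definite Gauss--Newton part at a critical point would favor stability, so losing the certificate cannot be what makes collapse stable. Moreover, a PSD Hessian with zero eigenvalues gives only ``non-repelling'' (the second-order test is inconclusive), not ``stable''.

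State the stability clause as the paper effectively does: as the loss of a guarantee, not as a proved consequence.
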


\section{Stability via Curvature: Collapse Geometry}
\label{sec:stability}

A central challenge in SSL is dimensional collapse, where the backbone maps all inputs to a constant vector or a low-rank subspace, rendering the representation useless. This is particularly puzzling in non-contrastive frameworks like BYOL and SimSiam where all samples act as positive attractors and the loss does not explicitly penalize collapse using negative samples.

We now show that the intrinsic curvature of the projection head provides a mechanical explanation for how these networks avoid this failure mode.

\subsection{Curvature-Induced Instability}

Let $z^*$ denote a collapsed configuration where $f_\theta(x) \approx c$ for all inputs. In standard MSE-based objectives, the intrinsic Hessian of the loss is PSD ($H_{\mathcal{L}} \succeq 0$). To analyze the stability of the backbone parameters $\theta$, we invoke a timescale separation argument (Assumption~\ref{ass:timescale}), which allows us to treat the projection head as locally equilibrated. Recalling the effective Hessian from~\eqref{eq:hessian-full}:
\begin{equation*}
    H_{\text{eff}}(z^*) = \underbrace{J_h(z^*)^\top \nabla_h^2 \mathcal{L} J_h(z^*)}_{\text{Pullback Metric } G(z^*) \text{ (PSD)}} + \underbrace{\sum_{i=1}^k [\nabla_h \mathcal{L}]_i \nabla_z^2  h_i(z^*)}_{\text{Interaction Term } M(z^*)}.
\end{equation*}
While $G(z^*)$ is PSD, the interaction term is driven by the head's intrinsic curvature. In high-dimensional settings, the intrinsic rank $r$ of the loss is typically smaller than the representation dimension ($r < d$), and $G(z^*)$ possesses a nontrivial nullspace where the negative eigenvalues of $M(z^*)$ can emerge.

\begin{theorem}[Generic Instability of Collapse]
\label{thm:instability}
Let $z^*$ be a collapsed state. Under Assumptions~\ref{ass:regularity},~\ref{ass:asymmetry}, and~\ref{ass:timescale}, assume the residual gradient vector $\rho = \nabla_h \mathcal{L}$ is nonzero at $z^*$.
\begin{enumerate}
    \item \textbf{Linear heads preserve collapse:} If $h(z) = Wz$ is linear, the interaction term vanishes ($\nabla^2 h = 0$). The effective Hessian is PSD ($H_{\text{eff}} \succeq 0$). Thus, the collapsed state is a non-repelling critical region and gradient descent has no infinitesimal escape direction.
    
    \item \textbf{Nonlinear heads destabilize collapse:} If $h_\phi$ is a nonlinear MLP with smooth activations, then under Assumption~\ref{ass:generic-init}, the collapsed state $z^*$ is generically a strict instability region.
\end{enumerate}
\end{theorem}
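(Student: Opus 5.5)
The plan is to work directly from the decomposition \eqref{eq:hessian-full}, $H_{\text{eff}}(z^*) = G(z^*) + M(z^*)$, and treat the two parts of Theorem~\ref{thm:instability} separately. Under Assumption~\ref{ass:timescale} the head is held at its local equilibrium, so stability of the backbone reduces to the sign structure of $H_{\text{eff}}(z^*)$ on $T_{z^*}\mathcal{Z}$. The nondegenerate backbone Jacobian (part of Assumption~\ref{ass:generic-init}) then transfers a negative direction in $z$-space to a negative direction of the parameter Hessian in $\theta$.

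For part 1, take $h(z)=Wz$. Then $J_h = W$ is constant and $\nabla_z^2 h_i \equiv 0$, so $M(z^*)=0$ and $H_{\text{eff}}(z^*) = W^\top \nabla_h^2\mathcal{L}\, W$. Because the intrinsic loss Hessian is PSD for the MSE-type objectives under consideration, the congruence $v^\top W^\top H_{\mathcal L} W v = (Wv)^\top H_{\mathcal L}(Wv)\ge 0$ gives $H_{\text{eff}}\succeq 0$. Along $\ker H_{\text{eff}}$, which contains $\ker W$ together with the preimage of $\ker H_{\mathcal L}$, the second-order variation is zero. It is also zero at all orders in the $\ker W$ directions, since the loss depends on $z$ only through $Wz$. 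Hence there is no direction of strictly negative curvature, and the collapsed region is non-repelling for gradient flow.

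For part 2, I would show that $M(z^*)$ has a negative eigenvalue on $\mathcal{N}:=\ker G(z^*)$, which is nontrivial because $r<d$. Since $G\succeq 0$, any unit $v\in\mathcal N$ with $v^\top M v<0$ gives $v^\top H_{\text{eff}}v<0$, so $\lambda_{\min}(H_{\text{eff}})<0$ and $z^*$ is a strict saddle. Write the head as $h(z)=W_2\sigma(W_1 z+b)$, so that
\begin{equation*}
M(z^*) = W_1^\top \operatorname{diag}\big( (W_2^\top\rho)\odot \sigma''(W_1 z^*+b)\big) W_1 .
\end{equation*}
This is a congruence of a diagonal matrix whose entries $(W_2^\top\rho)_j\,\sigma''(a_j)$ are real-analytic functions of the head parameters whenever $\sigma$ is smooth (Swish, GELU). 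The set of parameters for which the restriction $P_{\mathcal N}^\top M P_{\mathcal N}\succeq 0$ is then the set where a finite collection of analytic functions, the leading principal minors (or the minimal eigenvalue), take constrained signs. I would exhibit one parameter choice where this restriction is indefinite: pick a unit $n\in\mathcal N$, choose a hidden unit $j$ with $\sigma''(a_j)\neq 0$, and orient $W_2$ so that $(W_2^\top\rho)_j\sigma''(a_j)<0$. This is possible because $\rho\neq 0$ by Assumption~\ref{ass:asymmetry}. Then place $(W_1)_{j,:}$ along $n$. Because the condition $v^\top Mv<0$ is open, instability holds on a nonempty open set, and Assumption~\ref{ass:generic-init} (initialisation spanning indefinite matrices) supplies this generically.

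The main obstacle is the coupling in part 2: $\mathcal N=\ker G$ itself depends on $W_1$ and $W_2$, so perturbing $W_1$ to place curvature along $n$ can move the nullspace. To handle this I would use hidden-unit redundancy. Dedicate one hidden unit whose outgoing weights lie (approximately) in $\ker H_{\mathcal L}$ or are small in the $J_h$ contribution, so that it contributes to $M$ but negligibly to $G$. Then control the remaining error by eigenvalue perturbation (Weyl's inequality), in the same spirit as Proposition~\ref{prop:capacity-limits}. If this becomes too delicate, the fallback is to argue for a generic indefinite $M$ of rank larger than $r$: its negative inertia cannot be fully dominated by a rank-$r$ PSD $G$, because $H_{\text{eff}}$ restricted to a $(d-r)$-dimensional subspace equals $M$ there.
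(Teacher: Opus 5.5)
Your part 1 follows the paper's argument. For part 2 you take a more concrete route. You compute $M(z^*)=W_1^\top\operatorname{diag}\big((W_2^\top\rho)\odot\sigma''(a)\big)W_1$ and try to exhibit one unstable configuration. The paper constructs nothing. It takes the full-support hypothesis of Assumption~\ref{ass:generic-init} at face value and notes that on $\ker G(z^*)$ (dimension at least $d-r$) stability is decided by $M$ alone. It then asserts that the parameter set keeping $H_{\text{eff}}\succeq 0$, although it has nonempty interior, has relative volume tending to zero as $d$ grows; the Wigner-type remark after the proof is offered as support.

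The gap is your final step: ``instability holds on a nonempty open set, and Assumption~\ref{ass:generic-init} supplies this generically.'' A nonempty open set of unstable parameters gives instability only with positive probability under a continuous initialisation. That is essentially what the full-support hypothesis already gives you for free. Neither openness nor analyticity can upgrade it to genericity. Analyticity makes sets cut out by \emph{equalities} negligible, but the stable set is cut out by the \emph{inequality} $H_{\text{eff}}\succeq 0$, and it has nonempty interior. For instance, if every coefficient $(W_2^\top\rho)_j\,\sigma''(a_j)$ is positive, then $M\succeq 0$ and hence $H_{\text{eff}}=G+M\succeq 0$; this is an open condition on the head parameters. What is missing is a quantitative argument that the stable set is small. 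That is exactly the dimension-asymptotic ingredient of the paper's proof, which the paper only supports heuristically.

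Your fallback contains the right lemma for this, once it is stated correctly. The relevant quantity is the negative index of $M$, not its rank: an indefinite $M$ of rank $>r$ with a single negative eigenvalue can be cancelled by a rank-one $G$. The correct statement is this. If $M$ has $p>r$ negative eigenvalues, its negative eigenspace meets $\ker G$ (codimension at most $r$) in a subspace of dimension at least $p-r>0$, and any nonzero $v$ there satisfies $v^\top H_{\text{eff}}v=v^\top Mv<0$. This is cleaner than the paper's random-matrix heuristic. It still only reduces genericity to showing $\operatorname{neg}(M)>r$ with probability tending to one. One way is Sylvester's law of inertia: when $W_1$ is square and invertible, $\operatorname{neg}(M)$ equals the number of negative coefficients. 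A binomial tail bound on their signs would then finish, but that count needs a condition relating $r$ to $d$ and is not in your proposal.

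By contrast, the kernel-tracking obstacle you spend most effort on is not a real one for existence. You never need $v\in\ker G$, only $v^\top Gv+v^\top Mv<0$. Your hidden-unit idea can be made exact. Place unit $j$ at a point with $\sigma'(a_j)=0\neq\sigma''(a_j)$; such points exist for Swish and GELU. Then the unit contributes nothing to $J_h$ or $G$. Set its incoming weight vector to $t\,n$ and compensate with the bias so that $a_j$ stays fixed. Then $h(z^*)$, $\rho$ and $G$ are unchanged, while the unit's contribution to $M$ (made negative by orienting $(W_2)_{:,j}$ as you describe) grows like $t^2$.

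Finally, lifting the negative direction to $\theta$ uses Assumption~\ref{ass:backbone-rank}, not Assumption~\ref{ass:generic-init}. It belongs to Corollary~\ref{cor:parameter-instability} rather than to this theorem.
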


\begin{corollary}[Instability in Parameter Space]
\label{cor:parameter-instability}
Under the assumptions of Theorem~\ref{thm:instability} and Assumption~\ref{ass:backbone-rank}, if $H_{\text{eff}}(z^*)$ possesses a negative eigenvalue, then the full parameter Hessian $\nabla_\theta^2\mathcal{L}$ also has a negative eigenvalue at $(\theta,\phi)$. Consequently, collapsed configurations are unstable saddle points for the full network objective. 
\end{corollary}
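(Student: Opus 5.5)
The plan is to transfer a negative direction of $H_{\text{eff}}(z^*)$ in representation space to a negative direction of the full parameter Hessian by the chain rule, with the head frozen by the timescale separation of Assumption~\ref{ass:timescale}. Write the objective as $\mathcal{J}(\theta) = \mathbb{E}[\mathcal{L}(h_\phi(f_\theta(\cdot)),\,\cdot)]$ and let $J_f := \partial f_\theta/\partial\theta$ be the backbone parameter-to-representation Jacobian. The second-order chain rule gives
\begin{equation*}
\nabla_\theta^2 \mathcal{J} = \mathbb{E}\Big[ J_f^\top H_{\text{eff}}(z) J_f + \sum_{j=1}^d [\nabla_z (\mathcal{L}\circ h)]_j \, \nabla_\theta^2 f_{\theta,j} \Big].
\end{equation*}
At a collapsed configuration every input maps to $z^* \approx c$. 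The first term is then the congruence $J_f^\top H_{\text{eff}}(z^*) J_f$ averaged over inputs. The second term is the backbone's own interaction term. It is weighted by $\nabla_z(\mathcal{L}\circ h) = J_h^\top \rho$.

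The key step is to choose a test direction. Let $v \in T_{z^*}\mathcal{Z}$ be a unit eigenvector of $H_{\text{eff}}(z^*)$ with eigenvalue $\lambda<0$. By Assumption~\ref{ass:backbone-rank}, $J_f$ is nondegenerate (surjective onto $T_{z^*}\mathcal{Z}$), so there is some $u$ with $J_f u = v$. Taking the minimum-norm preimage $u = J_f^{+} v$ gives $u^\top J_f^\top H_{\text{eff}} J_f u = \lambda < 0$. Because the representation is collapsed, the same $u$ moves every input's representation by essentially the same $v$, so the expectation preserves the sign. An alternative is to use the bias of the backbone's final layer: shifting that bias by $v$ moves all $z$ uniformly, its Jacobian is exactly the identity, and its second derivative $\nabla^2_\theta f$ vanishes.

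The main obstacle is the residual backbone-curvature term $\sum_j [J_h^\top\rho]_j\, u^\top \nabla_\theta^2 f_j\, u$. It need not vanish, because $\rho \neq 0$ by Assumption~\ref{ass:asymmetry}. I would try to make it vanish outright by restricting $u$ to the final-layer bias (or more generally to the last linear layer of the backbone), where $f$ is affine in those parameters and the term is identically zero. The quadratic form then equals exactly $v^\top H_{\text{eff}}(z^*) v = \lambda < 0$. Since the restriction of a Hessian to a subspace having a negative Rayleigh quotient implies the full Hessian has a negative eigenvalue, this concludes the argument. The Hessian in $(\theta,\phi)$ jointly contains $\nabla_\theta^2$ as a principal block, so it inherits the negative eigenvalue as well.

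Finally, we need $\nabla_\theta \mathcal{J} \approx 0$ at collapse to call the point a saddle. Collapse is assumed to be a critical region, so this holds. A critical point whose Hessian has a negative eigenvalue is a strict saddle, which gives the claimed instability for the full network objective.
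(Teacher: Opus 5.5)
Your skeleton matches the paper's proof: the same second-order chain rule $\nabla_\theta^2\mathcal{L} = J_f^\top H_{\text{eff}} J_f + \nabla_z\mathcal{L}\cdot\nabla_\theta^2 f$, and a lift of a negative direction $v$ of $H_{\text{eff}}(z^*)$ to some $u$ with $J_f u = v$. The genuine difference is how you handle the backbone curvature term.

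The paper takes an arbitrary point of the high-dimensional preimage of $v$. It then asserts that, because $\nabla_\theta^2 f$ is ``structurally uncorrelated'' with the head's residual, one can generically choose $u$ so the negative first term dominates. This is stated, not proved. You instead remove the term exactly by restricting $u$ to parameters in which $f_\theta$ is affine. The Rayleigh quotient is then exactly $v^\top H_{\text{eff}}(z^*)v = \lambda < 0$, and the principal-block argument gives the joint $(\theta,\phi)$ claim.

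Your version is more rigorous. It also repairs something the paper silently skips: the parameter Hessian is an average over inputs, and a bias shift moves every collapsed sample by the same $v$, so the sign survives the average. The cost is an architectural hypothesis that is not part of Assumption~\ref{ass:backbone-rank}. Your route in fact makes that assumption superfluous, since the Jacobian in the bias is the identity. The hypothesis holds, e.g., for a ViT whose representation exits a final LayerNorm with shift $\beta$. It does not hold literally for a ResNet whose representation is average-pooled post-ReLU features, where the Jacobian in the last BatchNorm shift is an input-dependent mask.

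For your ``last linear layer'' variant, a general perturbation $\Delta W$ gives $\mathbb{E}_x[(\Delta W a(x))^\top H_{\text{eff}}(z^*)\,\Delta W a(x)]$, not $v^\top H_{\text{eff}}(z^*) v$. Take the rank-one $\Delta W = v w^\top$ to get $\mathbb{E}_x[(w^\top a(x))^2]\,\lambda < 0$.

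Two points to fix.

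\emph{First step.} Your claim that a generic preimage such as $u = J_f^{+}v$ moves every input by essentially the same $v$ is false. Collapse of the values $f_\theta(x)=c$ says nothing about the per-input Jacobians; take $f_W(x)=Wx$ at $W=0$. Drop that step and rely only on the bias construction.

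\emph{Criticality.} This is not among the hypotheses and is in tension with Assumption~\ref{ass:asymmetry}. The $\theta$-gradient is $\mathbb{E}[J_f^\top J_h^\top\rho]$, which need not vanish when $\rho\neq 0$. The paper sidesteps this by asserting instability ``even in the presence of a residual gradient'' rather than claiming a critical point. If you want the word ``saddle'', note that with a final bias and exact collapse, criticality in the bias forces $J_h(z^*)^\top\rho = 0$. That vector is precisely the weight of the backbone curvature term, so the term then vanishes for every $u$ and your ``main obstacle'' disappears. Without that extra condition, state the conclusion as the paper does: a negative-curvature direction exists, not a strict saddle.
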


Theorem~\ref{thm:instability} offers a geometric resolution to the stability puzzle in non-contrastive methods. While a collapsed representation might minimize the raw objective function (e.g., zero MSE), linear heads trap the optimization in a flat valley. Conversely, smooth nonlinear heads dynamically inject negative curvature ($\lambda_{\min} < 0$), transforming the collapsed basin into a strict saddle point. Standard results in nonconvex optimization~\citep{lee2016gradient, sagun2018empirical} guarantee that gradient descent escapes such strict saddle points almost surely. Thus, the projection head destabilizes trivial solutions mechanically, without requiring explicit negative samples.

\subsection{The ReLU Gap and Discrete Dynamics}
\label{subsec:relu_gap}

A consequence of Theorem~\ref{thm:instability} is that piecewise-linear activation functions (notably ReLU) lack intrinsic local instability under continuous gradient flow. Because $\nabla^2 \text{ReLU}(x) = 0$ almost everywhere, the curvature interaction term $M(z^*)$ vanishes for ReLU heads, theoretically rendering the collapsed state a non-repelling valley just like a linear head. 

This theoretical ReLU gap creates a friction with empirical practice, as many successful SSL architectures apply ReLU activations. As we will demonstrate empirically, smooth activations (e.g., Swish/GELU) provide a local, native geometric guarantee of instability. In contrast, ReLU heads possess no such curvature and rely entirely on higher-order heuristics---specifically finite step-size noise \citep{cohen2022gradientdescentneuralnetworks} and BatchNorm \citep{santurkar2019how}---to artificially simulate escape dynamics. Our theory thus advocates for the use of smooth projection heads for robust optimization.

\begin{remark}[Persistence of Instability]
One might hypothesize that optimization could drive the weights into a configuration where the interaction term becomes PSD, restabilizing the collapse. However, in high-dimensional parameter spaces, the set of weight configurations yielding a PSD interaction term constitutes a measure-zero closed cone. By the transversality theorem, a generic optimization trajectory in the parameter space will intersect this measure-zero stable region only at isolated points, if at all. Furthermore, since the backbone is constantly perturbed by stochastic noise (from SGD), the system possesses a natural exploration mechanism that drives the state away from such non-repelling boundaries. Consequently, the collapsed equilibrium $z^*$ remains a strict instability region almost surely throughout the training duration, satisfying the conditions for escape established by~\citet{lee2016gradient}.
\end{remark}

\section{Invariance and the Information Bottleneck}
\label{sec:invariance}

The final architectural puzzle of contrastive learning is the `train-with, deploy-without' strategy: if the projection head is essential for minimizing the loss, why is the backbone representation $z$ empirically superior for downstream tasks?

Aligning with recent information-theoretic perspectives~\citep{ouyang2025projection}, we provide a complementary geometric perspective. To minimize the contrastive loss, the head must mechanically destroy information to enforce invariance to nuisance variables (data augmentations). Specifically, the head must geometrically collapse the directions corresponding to augmentations, destroying information in the process. Yet, downstream tasks may require that specific information (e.g., color for image classification).

\subsection{Linear Heads Cause Dimensional Bottlenecking}
\label{sub:linear-rank}

First, we consider the case of linear projection heads ($h(z) = Wz$). A pervasive empirical observation is that linear heads force the backbone representations to become lower-rank than their nonlinear counterparts. We attribute this to the interaction between the low-rank head bottleneck and the implicit bias of the optimizer. Specifically, we assume that the optimization process is biased toward minimum-norm solutions, where directions in $\mathcal{Z}$ that do not materially contribute to the loss experience an effective shrinking pressure (see Assumption~\ref{ass:min-norm} in Appendix~\ref{app:assumptions}) which is standard in the analysis of implicit regularization in overparameterized regimes.

\begin{theorem}[Linear Rank Propagation]
\label{thm:linear-rank}
Let $h(z) = Wz$ be a linear projection head with rank $k < d$. Under Assumptions~\ref{ass:regularity} and~\ref{ass:min-norm}, and assuming the backbone Jacobian is nondegenerate during training, if the loss forces the output covariance $\Sigma_h$ to be rank-deficient (rank $r \le k$), then in the limit of training iterations, the backbone covariance $\Sigma_z$ aligns with the row space of $W$, limiting its rank to at most $k$.
\end{theorem}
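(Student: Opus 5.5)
The approach is to decompose the backbone representation space along the row space of $W$ and show that the loss sees only one component, while the implicit minimum-norm bias of Assumption~\ref{ass:min-norm} drives the other to zero.

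\textbf{Step 1: orthogonal split.} Write $T_z\mathcal Z = \mathcal R(W^\top) \oplus \ker W$, where $\dim \mathcal R(W^\top) = \operatorname{rank} W \le k$. Let $P$ be the orthogonal projector onto $\mathcal R(W^\top)$. Then every representation decomposes as $z = Pz + (I-P)z$, and $h(z) = WPz$. The loss therefore depends on $z$ only through $Pz$.

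\textbf{Step 2: no gradient in the kernel.} Differentiating the pairwise objective gives $\nabla_z \mathcal L = W^\top \nabla_h \mathcal L \in \mathcal R(W^\top)$, so the loss gradient has no component in $\ker W$. By the chain rule through the backbone, $\nabla_\theta \mathcal L = J_f^\top W^\top \nabla_h\mathcal L$. Since the backbone Jacobian is nondegenerate, parameter updates can move $z$ freely in every direction. The component $(I-P)z$ is therefore a genuinely free direction: it does not affect the loss, yet it is reachable by the optimizer.

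\textbf{Step 3: shrinkage of the kernel component.} Apply Assumption~\ref{ass:min-norm}. Because loss-irrelevant directions experience an effective shrinking pressure, there is an effective dynamics $\frac{d}{dt}(I-P)z = -\lambda (I-P)z + (\text{terms vanishing with the loss gradient})$. Taking expectations gives $(I-P)\Sigma_z(I-P) \to 0$. By Cauchy--Schwarz on the covariance blocks, the cross terms $(I-P)\Sigma_z P$ also vanish, so $\Sigma_z \to P\Sigma_z P$ and
\[
\operatorname{rank}\Sigma_z \le \operatorname{rank} P \le k .
\]
The rank-deficiency hypothesis on $\Sigma_h = W\Sigma_z W^\top$ (rank $r$) can then sharpen this. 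On $\mathcal R(W^\top)$ the map $W$ is injective, so $\operatorname{rank}(P\Sigma_zP) = \operatorname{rank}\Sigma_h = r \le k$. This yields the alignment claim and the bound.

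\textbf{Main obstacle.} The hardest part is making Step 3 rigorous. The min-norm assumption is stated informally, and $W$ itself evolves during training, so $P$ is time-dependent. My first attempt would use the timescale separation of Assumption~\ref{ass:timescale} to treat $W$ as converged, so that $P$ is fixed in the limit. I would then apply a Grönwall-type bound to the kernel component, with a decay rate uniform in time. The remaining gap is controlling the drift of $P$; I would argue that $\|\dot P\|$ decays as the head equilibrates and is therefore integrable.
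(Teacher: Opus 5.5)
Your proposal is correct and follows essentially the same route as the paper's proof. You split $\mathbb{R}^d = \operatorname{Row}(W) \oplus \ker W$, note that $\nabla_{z_\perp}\mathcal{L} = 0$, and use the min-norm shrinkage with the nondegenerate backbone Jacobian to get $\dot z_\perp \approx -\alpha z_\perp$, which confines $z$ asymptotically to $\operatorname{Row}(W)$ and gives $\operatorname{rank}(\Sigma_z) \le k$. Your sharpening $\operatorname{rank}(P\Sigma_z P) = \operatorname{rank}(W\Sigma_z W^\top) = r$, via injectivity of $W$ on its row space, is a cleaner version of the paper's informal step (``the active row space aligns with the $r$-dimensional subspace''); also, the paper simply treats $W$ as fixed rather than invoking timescale separation to control the drift $\dot P$.
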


Theorem~\ref{thm:linear-rank} explains why linear heads are insufficient for learning rich representations: they implicitly prune backbone dimensions that are orthogonal to the low-dimensional projection task. This is not a failure of convergence (the loss is minimized), but a failure of capacity preservation.

\subsection{Nonlinear Heads and the Guillotine Effect}

A nonlinear head relaxes the rigid global rank constraint of Theorem~\ref{thm:linear-rank}, allowing the backbone to remain full rank. However, to minimize the loss, the head must still satisfy local invariance requirements. We show that this produces a targeted, geometric destruction of information: the projection enforces invariance by collapsing directions tangent to augmentation orbits.

Recall the augmentation tangent space $\mathcal{V}_{\text{aug}}(z)$ defined in Section~\ref{sec:problem-setup}. Ideally, the contrastive loss forces the head to be locally invariant to these parameters such that $\nabla_\xi h(f(t_\xi(x))) \approx 0$. Assume $h$ and the augmentation map $t_\xi$ are $C^1$ in a neighborhood of $z$.

\begin{proposition}[Metric Singularity]
\label{prop:metric-singularity}
If a smooth projection head $h$ achieves local invariance to a set of continuous augmentations (i.e., the loss is minimized for all $t_\xi \in \mathcal{T}$), then the pullback metric $G(z) = J_h(z)^\top J_h(z)$ is singular. Specifically, the augmentation tangent space $\mathcal{V}_{\text{aug}}(z)$ lies in the null space of $G(z)$:
\begin{equation*}
    v^\top G(z) v = 0, \quad \forall v \in \mathcal{V}_{\text{aug}}(z).
\end{equation*}
\end{proposition}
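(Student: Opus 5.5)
The plan is to make the informal hypothesis ``local invariance'' precise and then apply the chain rule. I will read ``the loss is minimized for all $t_\xi \in \mathcal{T}$'' as saying that, for fixed $x$ and fixed discrete choice $\xi_d$, the map $\xi_c \mapsto h(f_\theta(t_{(\xi_c,\xi_d)}(x)))$ is constant on a neighborhood of $\xi_c = 0$. This is the zero-alignment-loss condition: every pair of augmented views is sent to the same point of $\mathcal{H}$. I would state this reduction explicitly at the start. For the pairwise objectives of Section~\ref{sec:problem-setup}, where the alignment term is minimized exactly when $h(f(t_{\xi_1}(x))) = h(f(t_{\xi_2}(x)))$, it follows directly from attaining the minimum for all pairs $(\xi_1,\xi_2)$.

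First I differentiate this constant map. By the $C^1$ assumptions on $h$, $f_\theta$ and $t_\xi$, the chain rule applies at $\xi = 0$. For each $i = 1,\dots,m$ it gives
\begin{equation*}
0 = \frac{\partial}{\partial [\xi_c]_i} h\bigl(f_\theta(t_\xi(x))\bigr)\Big|_{\xi=0} = J_h(z)\, v_i, \qquad v_i := \frac{\partial}{\partial [\xi_c]_i} f_\theta(t_\xi(x))\Big|_{\xi=0},
\end{equation*}
where $z = f_\theta(t_0(x))$. The vectors $v_i$ are exactly the spanning set of $\mathcal{V}_{\text{aug}}(z)$. Since $J_h(z)$ is linear, $J_h(z) v = 0$ for every $v \in \mathcal{V}_{\text{aug}}(z)$. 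Hence $v^\top G(z) v = \|J_h(z) v\|_2^2 = 0$, and indeed $G(z) v = J_h(z)^\top J_h(z) v = 0$. So $\mathcal{V}_{\text{aug}}(z) \subseteq \ker G(z)$.

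Singularity then follows whenever $\mathcal{V}_{\text{aug}}(z) \neq \{0\}$, that is, whenever the backbone is not already invariant to the augmentations. I would add this as a nondegeneracy condition; in the degenerate case $G$ need not be singular. I would also remark that the same computation applies to the loss-weighted pullback $J_h^\top \nabla_h^2\mathcal{L}\, J_h$ from~\eqref{eq:hessian-full}, because it too annihilates $\ker J_h$.

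The main obstacle is the first step: passing from ``the loss is minimized'' to pointwise constancy of $h$ along the orbit. If invariance holds only approximately, say $\|\nabla_\xi h(f(t_\xi(x)))\| \le \delta$, then the argument gives only the quantitative bound $v^\top G v \le \delta^2 \|c\|^2 / \sigma_{\min}(V)^2$ for $v = Vc$, where $V$ is the matrix of tangents $v_i$. This is near-singularity, not exact singularity. I would state the exact version under exact invariance and note this approximate version as a remark.
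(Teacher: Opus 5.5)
Your proposal is correct and is essentially the paper's proof: read local invariance as $\nabla_\xi (h\circ f_\theta\circ t_\xi)|_{\xi=0}=0$, apply the chain rule to get $J_h(z)v=0$ for each spanning tangent $v$ of $\mathcal{V}_{\text{aug}}(z)$, and conclude $v^\top G(z)v=\|J_h(z)v\|_2^2=0$. Your caveat that singularity requires $\mathcal{V}_{\text{aug}}(z)\neq\{0\}$ is a useful refinement the paper leaves implicit. There is one small slip in your approximate-invariance remark: with $v=Vc$ the bound is $v^\top G v\le\delta^2\|c\|_2^2\le\delta^2\|v\|_2^2/\sigma_{\min}(V)^2$, and your $\delta^2\|c\|_2^2/\sigma_{\min}(V)^2$ can fail when $\sigma_{\min}(V)>1$.
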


This singularity implies that the projection head acts as a geometric low-pass filter, compressing finite distances along augmentation orbits in $\mathcal{Z}$ to zero in $\mathcal{H}$. Geometrically, this corresponds to replacing $\mathcal{Z}$ with elements of the quotient space $\mathcal{Z} / \mathcal{T}$ induced by the augmentation group action, collapsing orbits to equivalence classes. While this satisfies the SSL objective, it constitutes an irreversible loss of information.

We quantify this loss using the Fisher information matrix (FIM) with respect to the augmentation parameters $\xi$.

\begin{theorem}[Information Hierarchy]
\label{thm:information-hierarchy}
Let $\mathcal{I}_z(\xi)$ denote the FIM of the backbone representation $z$ with respect to augmentation parameters. If the projection head enforces invariance as per Proposition~\ref{prop:metric-singularity} and the augmentation manifold is nontrivial ($\dim(\mathcal{V}_{\text{aug}}) > 0$), then the information matrix of the projection is rank-deficient relative to the backbone:
\begin{equation*}
    \operatorname{rank}(\mathcal{I}_{h(z)}) \le \operatorname{rank}(\mathcal{I}_z) - \dim(\mathcal{V}_{\text{aug}}).
\end{equation*}
\end{theorem}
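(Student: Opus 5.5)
We will model the Fisher information matrices as pullbacks through the Jacobians of the maps $\xi \mapsto z(\xi) := f_\theta(t_\xi(x))$ and $\xi \mapsto h(z(\xi))$. Concretely, we assume an observation model in which the representation is observed with isotropic additive noise; the argument is unchanged for any fixed nondegenerate noise covariance, after whitening. Under this model the FIM with respect to the continuous parameters $\xi_c$ takes the Gauss--Newton form
\begin{equation*}
    \mathcal{I}_z(\xi) = J_\xi^\top J_\xi, \qquad \mathcal{I}_{h(z)}(\xi) = J_\xi^\top J_h(z)^\top J_h(z) J_\xi = J_\xi^\top G(z) J_\xi,
\end{equation*}
where $J_\xi \in \mathbb{R}^{d\times m}$ is the Jacobian of $\xi_c \mapsto z(\xi)$ at $\xi = 0$. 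By definition of $\mathcal{V}_{\text{aug}}(z)$, the column space of $J_\xi$ is exactly $\mathcal{V}_{\text{aug}}(z)$. Hence $\operatorname{rank}(\mathcal{I}_z) = \operatorname{rank}(J_\xi) = \dim \mathcal{V}_{\text{aug}}$.

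Next we invoke Proposition~\ref{prop:metric-singularity}. It gives $v^\top G(z) v = 0$ for all $v \in \mathcal{V}_{\text{aug}}(z)$. Since $G = J_h^\top J_h$ is PSD, this yields $\|J_h v\|^2 = 0$, so $J_h(z) v = 0$ on $\mathcal{V}_{\text{aug}}$. Therefore $J_h J_\xi = 0$ and $\mathcal{I}_{h(z)} = 0$, giving $\operatorname{rank}(\mathcal{I}_{h(z)}) = 0 = \operatorname{rank}(\mathcal{I}_z) - \dim \mathcal{V}_{\text{aug}}$. This is the claimed inequality, with equality. In words, the entire Fisher information about the augmentation parameters carried by $z$ is annihilated by the head.

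To obtain the statement in the more general form the authors likely intend, we would handle partial invariance in the same framework. Suppose the FIM is taken with respect to a richer parameter vector whose tangent image $\mathcal{U} := \operatorname{Im}(J) \supseteq \mathcal{V}_{\text{aug}}$ also includes semantic directions. Then $\mathcal{I}_{h(z)} = J^\top G J$ and
\begin{equation*}
    \operatorname{rank}(\mathcal{I}_{h(z)}) = \operatorname{rank}(J_h|_{\mathcal{U}}) = \dim \mathcal{U} - \dim(\ker J_h \cap \mathcal{U}).
\end{equation*}
Because $\mathcal{V}_{\text{aug}} \subseteq \ker J_h \cap \mathcal{U}$, we get $\operatorname{rank}(\mathcal{I}_{h(z)}) \le \operatorname{rank}(\mathcal{I}_z) - \dim \mathcal{V}_{\text{aug}}$ by rank--nullity. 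This version subsumes the computation above.

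The main obstacle is not the linear algebra but fixing a legitimate definition of the FIM for deterministic maps, since $z$ and $h(z)$ are deterministic functions of $\xi$. We would state the Gaussian-noise observation model explicitly, or equivalently identify the FIM with the pulled-back metric $J^\top J$ as in information geometry. We would also note that the conclusion is invariant to the noise scale as long as the same fixed noise model is used for both $z$ and $h(z)$, which makes the modeling choice harmless.
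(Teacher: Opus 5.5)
Your proposal is correct and follows essentially the same route as the paper. Both model the FIM as the Jacobian Gram matrix under additive isotropic noise, use the chain rule $S_h = J_h S_z$ with $\operatorname{Col}(S_z)=\mathcal{V}_{\text{aug}}$, invoke the kernel inclusion from Proposition~\ref{prop:metric-singularity}, and finish with rank--nullity applied to the restriction of $J_h$. Your explicit observation that the bound as stated degenerates to $\mathcal{I}_{h(z)}=0$ matches the paper's ``ideal case $S_h=0$''. Your extension to a richer parameter vector with tangent image $\mathcal{U}\supseteq\mathcal{V}_{\text{aug}}$ is a modest sharpening that makes the inequality non-vacuous, not a different approach.
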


This hierarchy geometrically formalizes the guillotine effect. Consider a downstream task that classifies cat breeds by coat color. If we pretrain with a loss that enforces invariance to color-jitter, the network must destroy color information. Theorem~\ref{thm:information-hierarchy} proves that because the backbone lies upstream of the singular metric $G(z)$, it preserves the full dimensionality of the data manifold. Discarding the projection head is therefore not merely an empirical heuristic; it is a theoretical necessity to recover the linearly separable information filtered out during invariance learning.

\section{Validation of Geometric Mechanisms}
\label{sec:experiments}


We empirically validate our theoretical framework, focusing on two primary mechanisms: i) curvature-induced instability and the ReLU gap (Theorem~\ref{thm:instability}), and ii) metric singularities and orbit compression (Proposition~\ref{prop:metric-singularity}). Full experimental details and further discussion are provided in Appendices~\ref{app:exp-details} and~\ref{app:extra-results}. In particular, Appendices~\ref{app:timescale-empirical} and \ref{app:residual} provide empirical validation of our theoretical assumptions. In Appendices~\ref{app:cifar100} and~\ref{app:vit}, we confirm that these geometric effects persist across semantically denser datasets (CIFAR-100) and architectures with different inductive biases (vision transformers). Finally, we validate the architectural capacity limits with ablations on head depth (Appendix~\ref{app:depth-ablation}).

\subsection{Hessian Tracking and the ReLU Gap}

We first aim to verify the theory of Sections~\ref{sec:optimization} and~\ref{sec:stability}. Namely, we track the condition number and the existence of negative eigenvalues during training from both regular and pseudo-collapsed initializations without BatchNorm to observe the geometric dynamics of escaping collapse.

\begin{figure*}[t]
    \centering
    \includegraphics[width=0.75\textwidth]{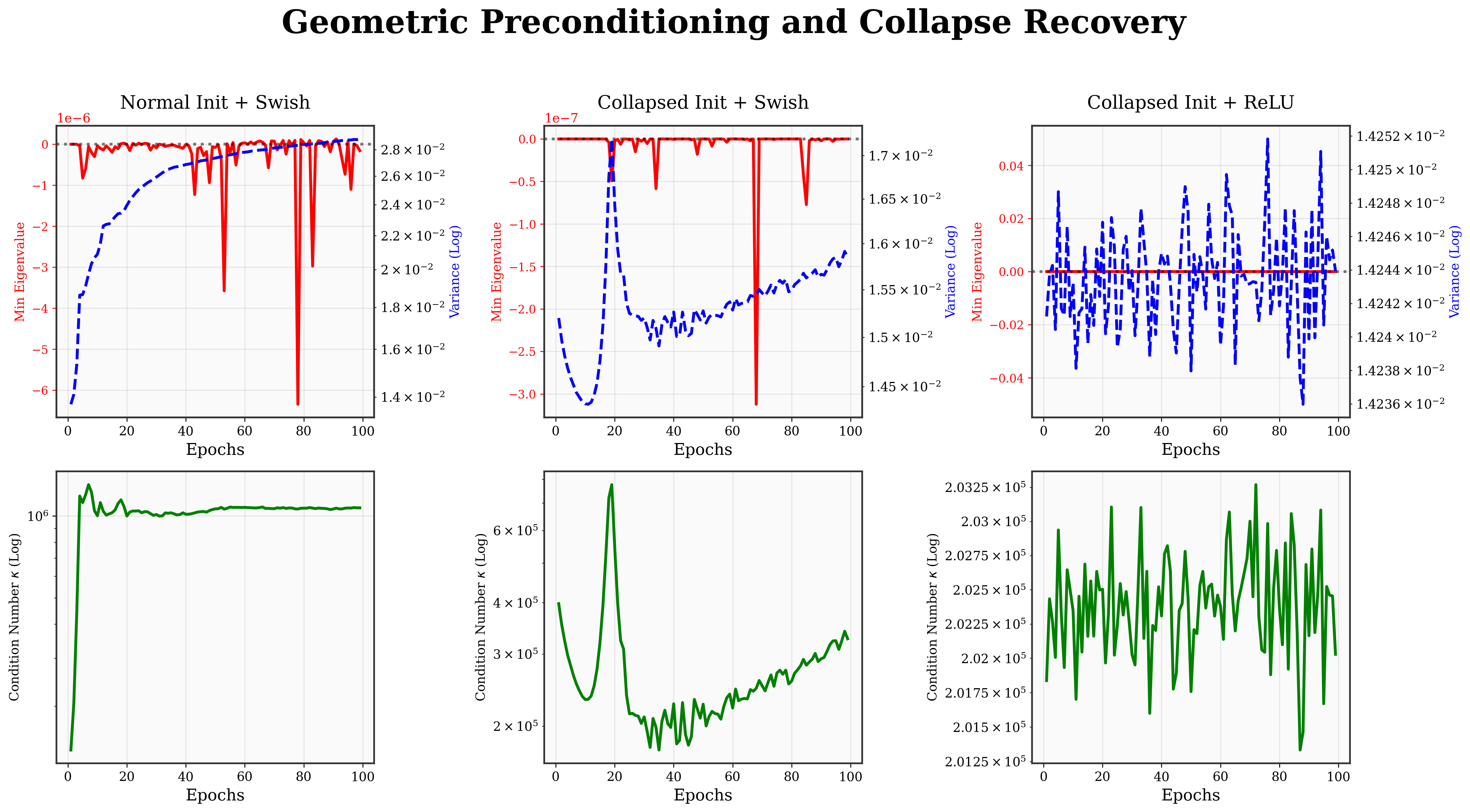}
    \caption{\textbf{Geometric preconditioning and collapse recovery (CIFAR-10, ResNet-18).} Smooth heads (Swish) natively inject negative curvature ($\lambda_{\min} < 0$) to navigate the landscape and aggressively escape collapsed equilibria, whereas pure ReLU networks lack this intrinsic mechanism.}
    \label{fig:spectral_tracking}
\end{figure*}

As seen in Figure~\ref{fig:spectral_tracking}, smooth heads actively inject negative eigenvalues into the effective Hessian. During standard SimSiam training from a normal initialization (left), the condition number $\kappa$ exhibits a rapid initial spike before settling into a stable plateau, indicating that the head quickly warps the space into a highly anisotropic but consistently navigable geometry. Quantifying this relationship, we find a positive Spearman correlation ($\rho_s = 0.339$) between representation variance and condition number, supporting the argument that a healthy, expressive head natively requires a degree of local anisotropy to structure the representation space.

Further, in the case of a pseudo-collapsed initialization (middle), we observe that the first time representation variance increases is precisely when a negative eigenvalue spike occurs, forcing gradient descent away from the collapsed state. This topological phase transition triggers a violent spike in the condition number, with a Spearman correlation of $\rho_s = 0.609$ as the local basin stretches into a steep, nonconvex saddle point before dropping and gradually rising as the head dynamically reshapes the space. Notably, in the case of ReLU (right), these structural phase transitions do not occur: there is no negative curvature, and the representation variance and condition number oscillate statically around their starting values. The Spearman correlation is still high ($\rho_s = 0.669$) but instead indicates that anisotropy without a structural mechanism for escape simply locks the network in place.

We track these same dynamics from a pseudo-collapsed initialization in different conditions. As shown in Figure~\ref{fig:collapse_instability}, smooth nonlinearities (left) natively destabilize the collapsed equilibrium, triggering a mechanical escape that recovers representation variance. Conversely, because ReLU and linear heads lack intrinsic continuous-time curvature, they fail to generate explicit negative curvature ($\lambda_{\min} \ge 0$). While linear heads can eventually drift out, pure ReLU networks remain fundamentally trapped under continuous-like gradient flow. They survive in practice entirely by relying on discrete-time noise and variance constraints rather than intrinsic geometric stability (right).

\begin{figure*}[ht]
    \centering
    \begin{minipage}{0.48\textwidth}
        \centering
        \includegraphics[width=0.8\linewidth]{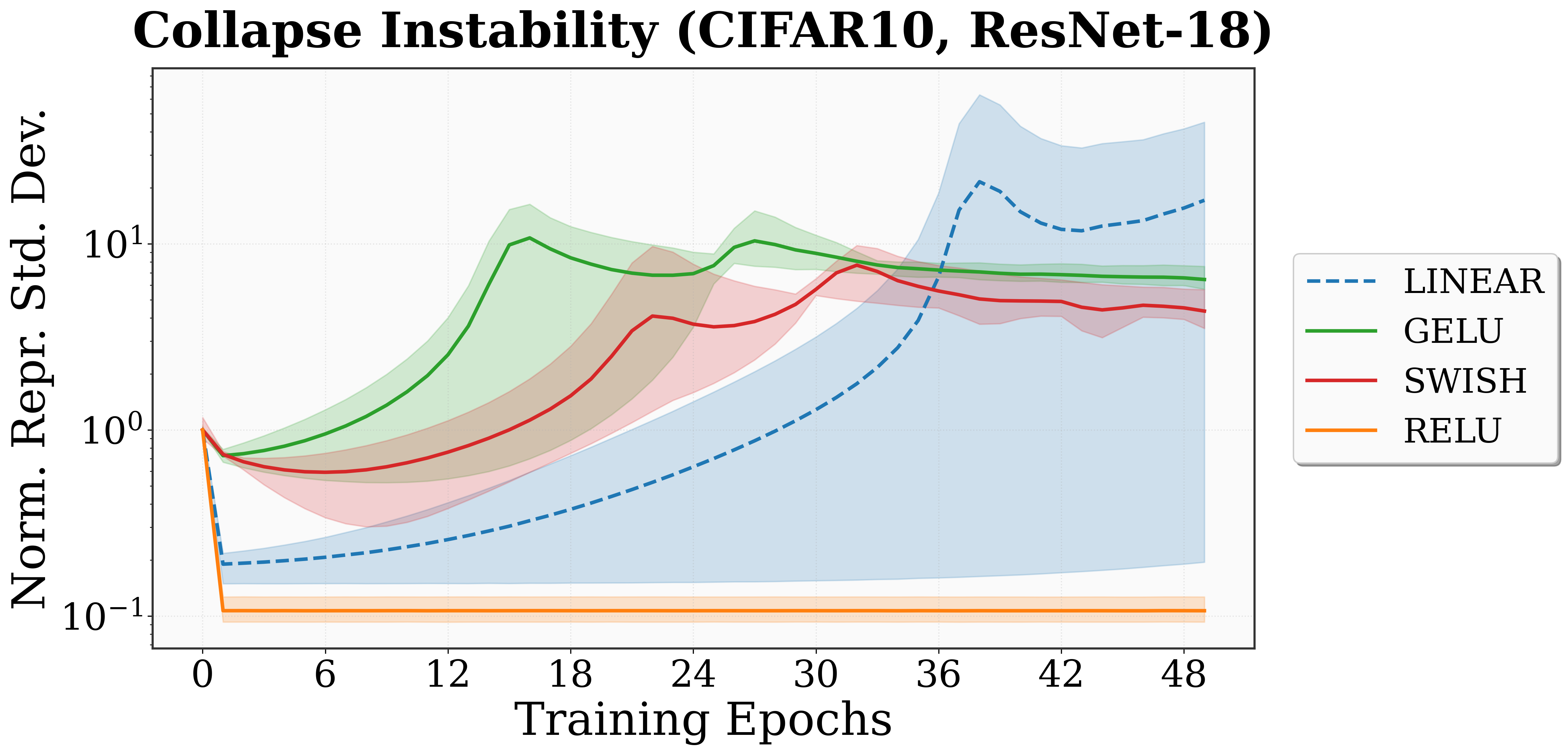}
    \end{minipage}\hfill
    \begin{minipage}{0.48\textwidth}
        \centering
        \includegraphics[width=0.8\linewidth]{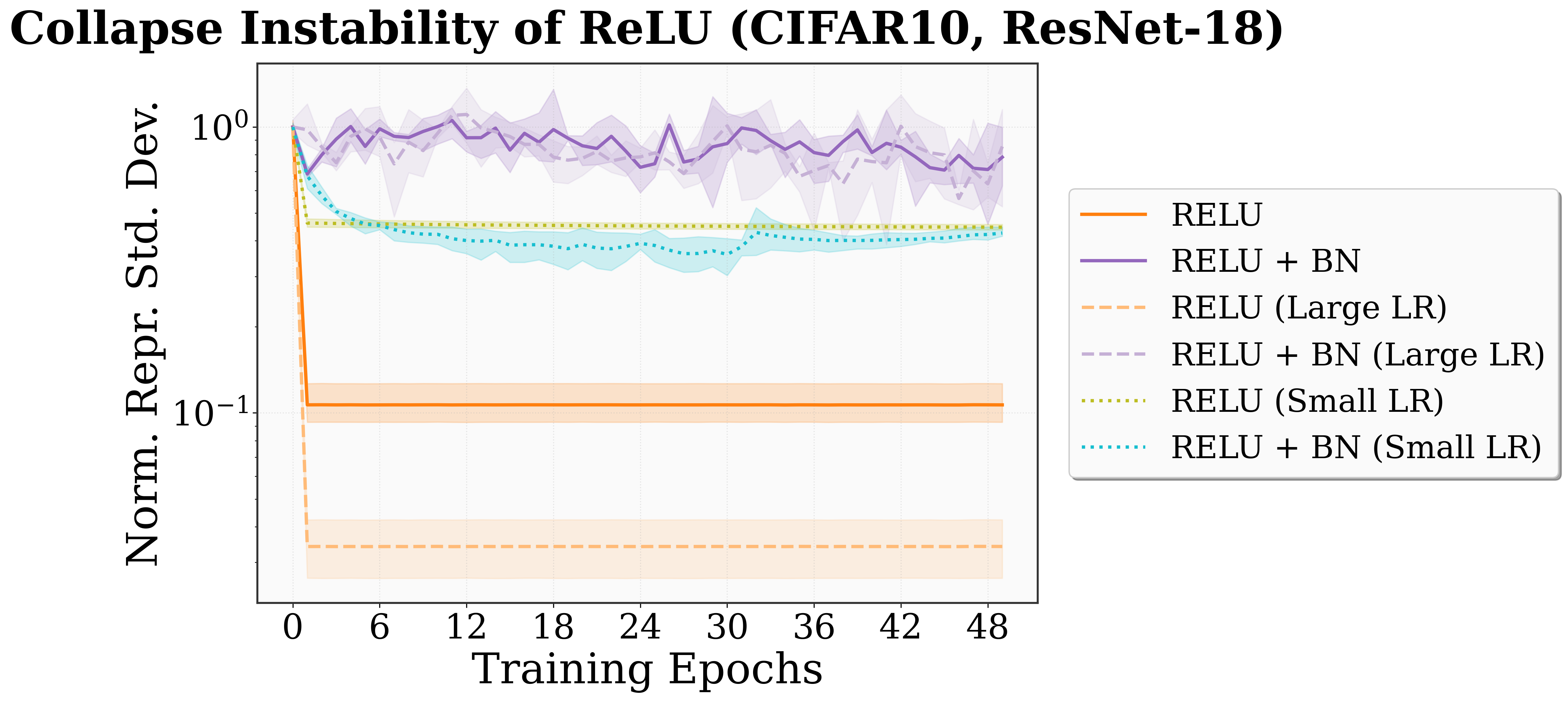}
    \end{minipage}
    \caption{\textbf{Collapse instability and the ReLU gap.} Smooth nonlinearities (left) explicitly destabilize the collapsed equilibrium, triggering a mechanical escape that drives representation variance upward. Lacking intrinsic continuous-time curvature, ReLU networks (right) fail to generate escape directions and suffer irreversible dimensional collapse under continuous-like gradient flow (small LR, no BN). Only with BN or large LRs can variance in representation variance and thus attempt at escape be observed.}
    \label{fig:collapse_instability}
    \vspace{-10pt}
\end{figure*}

\subsection{Visualizing Orbit Compression}

To provide a rigorous test of the metric singularity proposed in Proposition~\ref{prop:metric-singularity}, we analyze the high-dimensional geometry of augmentation orbits. Table~\ref{tab:master-metrics} summarizes the geometric and information-theoretic properties of the backbone and the head.

\begin{table*}[ht]
\centering
\scriptsize
\caption{\textbf{Metrics for CIFAR-10 (ResNet-18).} Statistics are reported as mean $\pm$ standard deviation. Orbit metrics are calculated over $n=15$ independent augmentation orbits; curvature metrics are calculated over $n=3$ independent training seeds; probing results on the downstream task are from the final 50-epoch SimCLR representative checkpoint.}
\label{tab:master-metrics}
\begin{tabular}{@{}llccc@{}}
\toprule
\textbf{Metric Type} & \textbf{Metric} & \textbf{Backbone ($z$)} & \textbf{Head ($h(z)$)} & \textbf{Ratio / Change} \\ 
\midrule
\textit{Geometric} & Local Curvature & $0.125 \pm 0.002$ & $0.242 \pm 0.004$ & \textbf{1.93$\times$ Higher} \\
      & Mean Orbit Spread ($\times 10^{-2}$) & $2.25 \pm 1.07$ & $0.10 \pm 0.06$ & \textbf{21.85$\times$ Collapse} \\
                   & Intra-orbit Distance ($D_{\text{intra}}$) & $0.211 \pm 0.045$ & $0.044 \pm 0.011$ & $4.76\times$ Reduction \\
                   & Inter-class Distance ($D_{\text{inter}}$) & $0.432 \pm 0.052$ & $0.111 \pm 0.014$ & $3.89\times$ Reduction \\
                   & Class/Orbit Ratio ($D_{\text{inter}} / D_{\text{intra}}$) & $2.04 \pm 0.52\times$ & $2.50\pm0.72\times$ & $1.22\times$ Separation \\ 
\midrule
\textit{Information} & Linear Probing Acc. & 52.27\% & 37.55\% & -14.72\% Abs. Loss \\
          & MLP Probing Acc. & 55.46\% & 43.56\% & -11.90\% Abs. Loss \\
                     & \textbf{Nonlinearity Gap ($\Delta_{\text{MLP-Lin}}$)} & +3.19\% & \textbf{+6.01\%} & \textbf{+2.82\% Entanglement} \\ 
\bottomrule
\end{tabular}
\end{table*}

The $21.85\times$ orbit compression provides direct empirical validation of Proposition~\ref{prop:metric-singularity}. By reducing the mean squared spread of rotation orbits from $0.0225$ to $0.0010$, the projection head effectively annihilates variance along $\mathcal{V}_{\text{aug}}$, the augmentation tangent space. This compression is not merely a byproduct of dimensionality reduction (the head is actually higher-dimensional, $2048$ vs $512$), but rather a targeted geometric collapse induced by the learned metric $G(z) = J_h(z)^\top J_h(z)$. As demonstrated by the higher local curvature ($1.93 \times$), the head does this by geometric warping. An illustrative visualization using principal component analysis (PCA) of the orbits is shown in Figure~\ref{fig:orbits-pca}.

\begin{figure*}[t]
    \centering
    \includegraphics[width=0.75\textwidth]{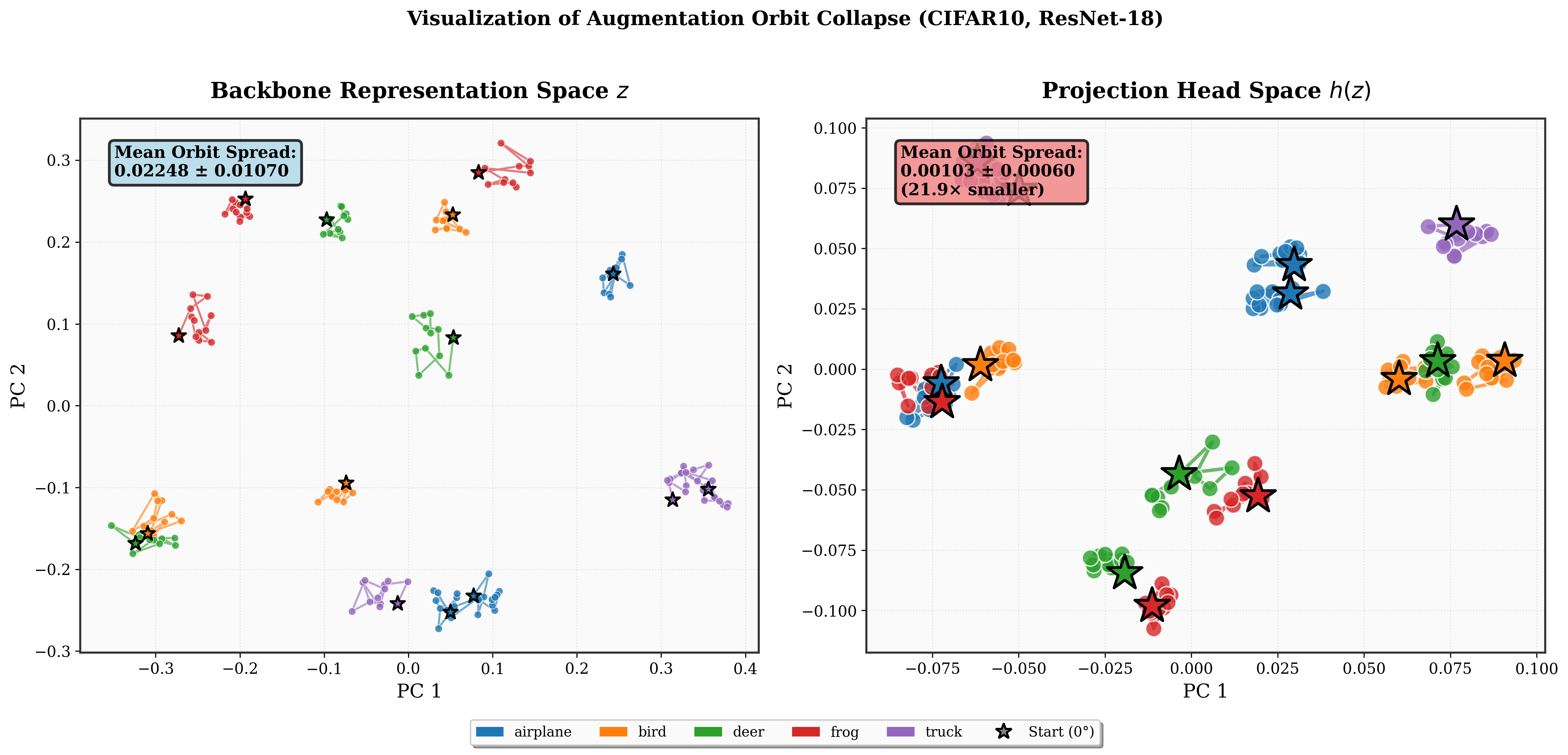}
    \caption{\textbf{Visualizing metric singularity (orbit collapse).} 
    A PCA projection of augmentation orbits constructed by rotation. The star marker denotes the representation of the original, unaugmented image, serving as the anchor for each augmentation orbit.
    The representation space of the backbone (left) preserves the geometric variance of the rotation transformation, allowing orientation information to be linearly recovered. The learned metric (right) singularizes the space along the augmentation tangent directions, collapsing orbits into equivalence classes with $21.85\times$ less spread.}
    \label{fig:orbits-pca}
    \vspace{-10pt}
\end{figure*}

Further, the class/orbit ratio improvement of $1.22\times$ indicates that this compression is selective: the head collapses nuisance directions (orbits shrink $4.76\times$) while relatively preserving semantic structure (classes only shrink $3.9\times$). This asymmetric compression is precisely the information-invariance trade-off formalized in Theorem~\ref{thm:information-hierarchy}. While the head drops linear probing accuracy by 14.72\%, the advantage that an MLP probe provides over a linear one nearly doubles (from 3.19\% to 6.01\%). This again supports the idea of the head entangling information in a nonlinear fashion rather than simply erasing it.

\subsection{Discussion of Foundation-Scale Models}

To demonstrate that these geometric phenomena are universal in SSL, we evaluated official, public foundation models across different architectures (ResNet-50, ViT-S) and loss paradigms (DINO, VICReg, Barlow Twins) that are released with the projection head. As detailed fully in Appendix~\ref{app:pretrained}, our analysis reveals a fundamental geometric dichotomy driven by the pretraining objective.

When the objective requires implicit whitening (e.g., DINO's self-distillation), the projection head acts as a geometric compressor, flattening the high-dimensional augmentation manifold into a lower-dimensional subspace to force local invariance. Conversely, when the objective requires explicit whitening (e.g., VICReg's covariance penalty), it demands full-rank covariance. To satisfy this constraint without destroying the backbone's semantic clustering, the projection head acts as a dimensional expander, absorbing a local curvature jump. In all paradigms, the deep nonlinear head must act as a buffer that decouples the backbone from the destructive constraints of the objective.

\section{Discussion and Conclusion}
\label{sec:conclusion}

In this work, we developed a geometric theory of projection heads in SSL, framing them as adaptive Riemannian metrics that can whiten the optimization landscape and destabilize collapsed equilibria. Our framework provides a mechanistic explanation for the empirical necessity of projection heads, solving two central puzzles: how non-contrastive networks avoid dimensional collapse, and why heads must be discarded for downstream transfer.

Practically, replacing ReLU with smooth activations natively solves collapse via intrinsic curvature, reducing reliance on hyperparameter tuning and batch-size scaling. Furthermore, maintaining sufficient depth and nonlinearity in the projection head that equilibrates sufficiently fast is not just an empirical heuristic, but a topological requirement to shield the backbone from metric degeneracy.

\textbf{Future work.} While our results prove the existential capacity of heads to whiten the landscape, analyzing the explicit stochastic gradient dynamics required to reach these optimal configurations remains open. Additionally, extending this geometric perspective to vision transformers---where self-attention induces an in-context, data-dependent metric---presents a nontrivial extension for understanding the loss landscapes of modern architectures. Furthermore, while we define local invariance via the tangent space of continuous parameters, extending this geometric framework to discrete transformations---where the lack of a smooth group structure precludes infinitesimal modeling---remains an important frontier. Finally, our proof of metric degeneracy suggests the possibility of designing mathematically `headless' objectives; by formulating an invertible or variance-preserving loss applied directly to the augmentation orbits, one could theoretically enforce invariance without inducing the singularity that necessitates a projection head.


\clearpage
\section*{Impact Statement}

This paper presents work whose goal is to advance the field of machine learning from a theoretical perspective. There are many potential societal consequences of our work, none which we feel must be specifically highlighted here.

\section*{Software Availability}

Code and results are available at the following~\href{https://github.com/farischaudhry/projection-head-geometry}{GitHub repository}.

\section*{Acknowledgements}

F.C.~thanks the anonymous reviewers for their thoughtful theoretical insights and constructive feedback, which improved both the empirical presentation and overall clarity of this work.

\bibliography{bibliography}
\bibliographystyle{icml2026}

\newpage
\appendix
\onecolumn

\section{Discussion on Theoretical Assumptions}
\label{app:assumptions}

\subsection{List of Assumptions}

We first list all theoretical assumptions. Note that Assumptions~\ref{ass:regularity} and \ref{ass:pairwise} are quite weak; they do not require specific assumptions about data distributions or the backbone architecture, minus the activation function used. Most geometric arguments rely on the chain rule properties of the gradient flow through $h_\phi$. Consequently, this theory broadly encompasses many contrastive (e.g., InfoNCE), non-contrastive (e.g., BYOL), and decorrelation (e.g., Barlow Twins) objectives.

\begin{assumption}[Smoothness and Curvature]
\label{ass:regularity}
The loss function is twice continuously differentiable ($\mathcal{L} \in C^2$). We assume the projection head $h_\phi$ is a neural network equipped with smooth activation functions such that $h_\phi \in C^2(\mathcal{Z}, \mathcal{H})$. 
\end{assumption}

\begin{assumption}[Pairwise Structure]
\label{ass:pairwise}
There exists a scalar function $\ell : \mathbb{R} \to \mathbb{R}$ and a similarity or distance measure $s : \mathcal{H} \times \mathcal{H} \to \mathbb{R}$ such that
\begin{equation*}
    \mathcal L(u,v) = \ell(s(u,v)).
\end{equation*}
\end{assumption}

While Assumptions~\ref{ass:regularity} and~\ref{ass:pairwise} are global, we require further structural assumptions for our stability and invariance results:

\begin{assumption}[Hessian Rank and Transverse Regularity]
\label{ass:hessian-rank}
The intrinsic loss Hessian $\nabla_h^2\mathcal{L}$ has constant rank $r \le k$ in the neighborhood of the optimum. Our conditioning results apply to the $r$-dimensional transverse subspace where $\nabla_h^2\mathcal{L}$ is positive definite.
\end{assumption}

\begin{assumption}[Nondegenerate Backbone]
\label{ass:backbone-rank}
The backbone network $f_\theta$ is a local submersion at the collapsed configuration $z^*$.
Equivalently, the parameter-to-representation Jacobian
\begin{equation*}
    J_{f_\theta} : T_\theta \Theta \to T_{z^*} \mathcal{Z}
\end{equation*}
is surjective (has full row rank). Thus, every tangent direction in representation space can be realized by a parameter perturbation. This assumption is generic for overparameterized MLPs at random initialization under standard schemes (e.g., Glorot, He).
\end{assumption}

\begin{assumption}[Generic Initialization]
\label{ass:generic-init}
The projection head parameters $\phi$ are initialized from a continuous distribution (e.g., Glorot uniform). We assume the induced distribution of the Hessian tensor $\nabla^2 h(z; \phi)$ has full support on the space of symmetric matrices. This ensures the Hessian is not structurally constrained to be positive semidefinite, but rather generically spans the space of indefinite matrices.
\end{assumption}

\begin{assumption}[Structural Asymmetry]
\label{ass:asymmetry}
Assume the optimization dynamics (via predictor lag, stop-gradients, or momentum) prevent the residual gradient $\nabla_h \mathcal{L}$ from vanishing identically in the neighborhood of collapsed states. While perfect collapse ($z=z'$) implies zero gradient in MSE losses, we assume the predictor/target misalignment maintains $\|\nabla_h \mathcal{L}\|_2 > 0$ during the collapse event.
\end{assumption}

\begin{assumption}[Timescale Separation]
\label{ass:timescale}
Following~\citet{tian2021understanding}, we assume the projection head adapts on a faster timescale than the backbone. This assumption is supported by empirical findings in transfer learning and meta-learning, where upper layers are observed to adapt significantly faster than lower layers~\citep{raghu2020rapid, fort2020deeplearningversuskernel}. This allows us to analyze the head's geometry as being locally equilibrated to the current backbone distribution, justifying the interpretation of the head as a preconditioner.
\end{assumption}

\begin{assumption}[Minimum-Norm Optimization]
\label{ass:min-norm}
Training minimizes a regularized objective $\mathcal{L} + \lambda \mathcal{R}(\theta)$, where $\mathcal{R}$ penalizes parameter or feature norms (e.g., via weight decay or induced feature shrinkage). Directions in $\mathcal{Z}$ that do not materially contribute to reducing $\mathcal{L}$ experience an effective shrinking pressure. This assumption is standard in analyses of implicit regularization in overparameterized regimes.
\end{assumption}

Assumption~\ref{ass:hessian-rank} is restricted to Section~\ref{sec:optimization}, enabling us to use spectral whitening arguments near stable minima. Assumptions \ref{ass:generic-init} and \ref{ass:asymmetry} are required for Section~\ref{sec:stability} to prove the instability of collapse; they ensure the existence of the negative curvature direction that destabilizes the trivial solution. Assumption~\ref{ass:min-norm} is used in Section~\ref{sec:invariance} to derive the rank propagation and invariance results. A more detailed discussion on the validity of these assumptions follows.

\subsection{Validity of Geometric Assumptions in Practice}

\subsubsection{Smoothness and Curvature (Assumption~\ref{ass:regularity})}
\label{app:smoothness}

Assume the projection head uses smooth activation functions (e.g., GELU, Swish, Softplus) so that the Hessian $\nabla^2_z h$ is well defined and nontrivial. While early SSL implementations, such as the original SimCLR and BYOL, employed ReLU activations which have zero second derivative almost everywhere, our analysis concerns the local, infinitesimal geometry of continuous-time gradient flow.

In practice, deep ReLU networks are routinely analyzed through smooth approximations (e.g., in neural tangent kernel~\citep{jacot2020ntk} and mean-field analyses), and many modern SSL architectures, such as vision transformers and ConvNeXt~\citep{dosovitskiy2021imageworth16x16words, liu2022convnet2020s}, explicitly adopt smooth activations. For ReLU-based projection heads, the curvature-induced instability identified here does not arise through local Hessian structure, but instead through higher-order and discrete optimization effects, such as finite step sizes crossing activation kinks or BatchNorm. These effects produce effective escape directions that qualitatively mirror the negative-curvature mechanism we derive analytically for smooth networks.

We therefore emphasize that Assumption~\ref{ass:regularity} isolates an idealized, analytically tractable mechanism for collapse instability; ReLU networks approximate this mechanism through discrete training dynamics or other architectural decisions rather than through local second-order curvature.

\subsubsection{Generality of Pairwise Structure (Assumption~\ref{ass:pairwise})}
\label{app:pairwise}

We assume the loss function decomposes into pairwise terms $\mathcal{L}(h(z_1), h(z_2))$. This explicitly covers:
\begin{itemize}
    \item \textbf{Contrastive (InfoNCE):} The gradient of the log-sum-exp loss decomposes into attractive forces (positives) and repulsive forces (negatives).
    \item \textbf{Non-contrastive (BYOL/SimSiam):} The objective is explicitly the MSE or Cosine distance between two views.
\end{itemize}
Methods that use global statistics, such as SwAV (clustering) or DINO (centering), can be interpreted as pairwise objectives with dynamic targets~\citep{caron2021unsupervised, caron2021emerging}. For example, centering acts as a dynamic bias correction on the pairwise similarity. Therefore, local rank and curvature results remain applicable to the core optimization dynamics of these methods.

\subsubsection{Hessian Rank and Transverse Regularity (Assumption~\ref{ass:hessian-rank})}

Although deep neural network objectives are globally nonconvex, our preconditioning results (Theorem~\ref{thm:linear-whitening}) rely on the spectral properties of the loss restricted to the active subspace. Standard SSL objectives satisfy the rank regularity conditions required for our analysis:
\begin{itemize}
    \item \textbf{InfoNCE (contrastive):}
    The InfoNCE loss is convex with respect to the logits. However, because the loss depends on cosine similarity, it is invariant to the radial scaling of the embeddings ($h(z) \to \alpha h(z)$). This induces a generic null direction in the intrinsic Hessian $H_{\mathcal{L}}$ corresponding to the radial ray. On the tangent space of the hypersphere (the transverse subspace), the Hessian restricted to the tangent space of the hypersphere is positive definite under mild diversity assumptions on negative samples, as empirically observed in large-batch regimes. Provided negative pairs are sufficiently diverse, $H_{\mathcal{L}}$ is positive definite on this $(k-1)$-dimensional active subspace, satisfying Assumption~\ref{ass:hessian-rank} with $r=k-1$.

    \item \textbf{MSE / BYOL (non-contrastive):}
    The mean-squared error objective $\mathcal{L} = \|p(z_1) - \mathrm{sg}(z_2)\|_2^2$ has a Hessian locally approximated by $2 J_p^\top J_p$. While $J_p$ may have a null space depending on the predictor architecture, the loss is quadratic and convex on the image of the predictor. Our analysis applies to the subspace spanned by the principal components of the predictor's Jacobian, where the curvature is strictly positive.
\end{itemize}

\subsubsection{Backbone Regularity (Assumption~\ref{ass:backbone-rank})}
\label{app:backbone-assumption}

To link the geometric instability derived in representation space $\mathcal{Z}$ (Theorem~\ref{thm:instability}) to the optimization of network parameters $\theta$, we rely on the nondegeneracy of the backbone mapping. This assumption ensures that the mapping from parameter space to representation space does not artificially delete directions of descent. If $H_{\text{eff}}$ has a negative eigenvalue along a direction $v \in T_{z^*} \mathcal{Z}$, surjectivity guarantees the existence of a parameter perturbation $u$ such that $J_{f_\theta} u = v$. Consequently, the negative curvature lifts to the parameter Hessian via the chain rule expansion $\nabla_\theta^2 \mathcal{L} \approx J_{f_\theta}^\top H_{\text{eff}} J_{f_\theta} + \nabla_z \mathcal{L} \cdot \nabla_\theta^2 f$. This is a generic property of overparameterized deep networks at initialization, where the number of parameters $|\theta|$ vastly exceeds the representation dimension $d$.

Indeed, this condition holds almost surely for standard feedforward networks under (i) random initialization from continuous distributions (e.g., Glorot, He), and (ii) nondegenerate nonlinearities (e.g., GELU, LeakyReLU, SoftPlus). See~\citet{sagun2018empirical, arora2019finegrained} for related statements on rank genericity in overparameterized regimes. This assumption fails only on a set of measure zero corresponding to degenerate critical points where neuron activations saturate or gradients vanish simultaneously; such cases are not representative of practical training dynamics.

\subsubsection{Generic Initialization and Nondegenerate Curvature (Assumption~\ref{ass:generic-init})}

Assumption~\ref{ass:generic-init} requires that the projection head parameters are initialized from a continuous distribution. In typical SSL architectures, heads are MLPs initialized using Gaussian or uniform schemes (e.g., Xavier/Kaiming). We assume the induced distribution of the Hessian tensor $\nabla^2_z h(z)$ has full support on the space of symmetric matrices. This is a standard property of random MLP initialization: the second-derivative tensor at initialization is a random variable that is not structurally constrained to be PSD (as it would be in a convex network or linear network). This means that the interaction term in the effective Hessian can generically access negative eigenvalues, driving the instability mechanism.

Furthermore, while the pullback term $J_h^\top \nabla^2_h \mathcal{L} J_h$ is PSD, it is typically rank-deficient at collapse. Under Assumption~\ref{ass:generic-init}, the interaction term $\sum_{i=1}^k \rho_i \nabla^2_z h_i$ has a distribution with nonzero projection onto indefinite directions in the space of symmetric matrices, ensuring that the total Hessian $H_{\text{eff}}$ inherits negative eigenvalues from the indefinite component almost surely in high dimensions.

Thus, the generic initialization assumption is not a restrictive modeling choice, but rather a weak regularity condition reflecting the almost-sure geometric properties of randomly initialized neural networks. It only rules out pathological configurations where curvature tensors vanish identically, which are irrelevant in both theory and practical training.

\subsubsection{Residual Gradients at Collapse (Assumption~\ref{ass:asymmetry})}

The instability analysis in Theorem~\ref{thm:instability} requires that the residual gradient $\nabla_h \mathcal L$ does not vanish at representations arbitrarily close to collapse. This guarantees that the curvature interaction term
\begin{equation*}
    \nabla_h \mathcal L \cdot \nabla^2 h
\end{equation*}
is active, generating negative curvature directions that destabilize the collapsed solution. In modern SSL architectures, this nonvanishing gradient condition is structurally enforced via explicit architectural asymmetry:

\begin{itemize}
    \item \textbf{BYOL \& momentum encoders:}
    In BYOL-style frameworks, the target network parameters $\omega$ are updated via an exponential moving average (EMA) of the online network parameters $\theta$. During training, the lag induced by EMA means that $\omega \neq \theta$ except at convergence. As a result, even when backbone representations approach collapse, the target projection $h_\omega(z)$ differs from the online projection $h_\theta(z)$, producing a nonzero prediction error and therefore a residual gradient $\nabla_\theta \mathcal L \neq 0$.

    \item \textbf{SimSiam \& predictor asymmetry:}
    In SimSiam-style methods, symmetry is broken by introducing a predictor network on only one branch together with a stop-gradient on the other. At initialization and throughout training, the predictor $p(\cdot)$ almost surely does not represent the identity map exactly. Thus, even when $z_1 = z_2$ (collapsed representations), the prediction error $\|p(z_1) - z_2\|_2$ remains nonzero, producing a persistent residual gradient signal.
\end{itemize}

A potential concern is that the predictor network $p(\cdot)$ might converge to the identity mapping (i.e., $p(z) \approx z$) faster than the backbone collapses, causing the residual gradient $\rho$ to vanish.
However, in practical SSL with strong data augmentations, the target $z'$ is a stochastic view of the input. Even if the predictor is optimal, it can only learn the conditional expectation of the target view given the online view. Due to the information loss inherent in augmentation (e.g., cropping), the prediction error (and thus the residual $\rho$) is theoretically lower-bounded by the irreducible variance of the augmentation distribution. Thus, strict convergence to $\rho=0$ is infeasible in the stochastic regime and the curvature interaction term remains active. This argument rules out vanishing residuals except at measure-zero coincidences of augmentation realizations and up to numerical precision more generally.

\subsubsection{Timescale Separation (Assumption~\ref{ass:timescale})}
\label{app:timescale}

This assumption is used in our stability analysis to justify treating the head Jacobian $J_h(z)$ and curvature $\nabla^2_z h(z)$ as locally equilibrated when analyzing the instantaneous Hessian seen by the backbone. While standard training updates both simultaneously, this effective separation is justified by the vanishing gradient tendency where upper layers receive larger gradient magnitudes. Recent studies on the loss landscape of deep networks~\citep{fort2020deeplearningversuskernel} and the rapid adaptation of heads in meta-learning~\citep{raghu2020rapid} provide empirical support for this separation, showing that head parameters move significantly further and faster in the optimization landscape than backbone features during critical phases of training. We also verify this empirically for SSL in Appendix~\ref{app:timescale-empirical}.

Intuitively, this effective separation is justified by three factors:
\begin{enumerate}
    \item \textbf{Gradient magnitude:} The projection head is topologically closer to the loss function. In standard backpropagation, gradients undergo attenuation as they pass through layers, a manifestation of the vanishing gradient phenomenon \citep{bengio1994}. Consequently, the weights of the head $\phi$ typically receive larger gradient updates than the deep backbone weights $\theta$, leading to faster local adaptation. This effect may be partially mitigated in architectures with residual or skip connections, which improve gradient propagation across depth \citep{he2016identitymappingsdeepresidual}.
    \item \textbf{Parameter complexity:} The projection head (typically a 2-3 layer MLP) has orders of magnitude fewer degrees of freedom than the backbone (e.g., ResNet-50 or ViT). Optimization theory suggests that lower-dimensional systems converge faster than high-dimensional ones. Thus, the head can rapidly minimize the loss relative to the current representation $z$, effectively tracking the backbone's slow drift.
    \item \textbf{Empirical evidence:} The success of linear probing (where a head is trained on a frozen backbone) demonstrates that heads can converge rapidly even when the backbone is static. In the joint training limit, this implies the head effectively solves the local optimization problem at each step of the backbone's global trajectory.
\end{enumerate}

\subsubsection{Minimum-Norm Assumption~\ref{ass:min-norm})}
\label{app:min-norm}

The assumption of minimum-norm bias is satisfied in virtually all practical SSL settings through several mechanisms. First, explicit regularization in standard frameworks (e.g., SimCLR, BYOL) via weight decay ($\ell_2$ penalty) imposes a persistent contractive force, driving feature parameters in the head's nullspace to zero since the loss gradient is zero in those directions. Second, even without explicit regularization, SGD on overparameterized networks exhibits an implicit bias toward minimum-norm solutions~\citep{soudry2024implicitbias, gunasekar2019implicitbias}. Third, BatchNorm combined with weight decay on the affine scale parameters $\gamma$ will drive nullspace features to collapse (rank loss), preventing BatchNorm from maintaining variance in useless directions.

\section{Proofs of Results}

\subsection{Proof of Theorem~\ref{thm:linear-whitening} (Local Subspace Whitening)}

\begin{proof}
Let $H = \nabla^2_h \mathcal{L} |_{h(z^*)}$ be the $k \times k$ intrinsic loss Hessian. By Assumption~\ref{ass:hessian-rank}, $H$ has $r$ strictly positive eigenvalues. We perform the eigendecomposition $H U_r = U_r \Lambda_r$, where $\Lambda_r \in \mathbb{R}^{r \times r}$ is the diagonal matrix of positive eigenvalues and $U_r \in \mathbb{R}^{k \times r}$ contains the corresponding eigenvectors.

We construct the head explicitly as $W = U_r \Lambda_r^{-1/2} Q$, where $Q \in \mathbb{R}^{r \times d}$ is a semi-orthogonal matrix ($Q Q^\top = I_r$) whose row spans the target subspace $\mathcal{S} \subset T_{z^*} \mathcal{Z}$. Substituting this into the effective Hessian definition for a linear head ($J_h = W$):
\begin{align*}
    H_{\text{eff}}(z^*) &= W^\top H W \\
    &= (Q^\top \Lambda_r^{-1/2} U_r^\top) H (U_r \Lambda_r^{-1/2} Q).
\end{align*}
Using the eigenvector property $H U_r = U_r \Lambda_r$ and orthogonality $U_r^\top U_r = I_r$, the core term simplifies to $\Lambda_r^{-1/2} \Lambda_r \Lambda_r^{-1/2} = I_r$. Thus, the $d \times d$ effective Hessian becomes $H_{\text{eff}}(z^*) = Q^\top I_r Q$. Restricted to any vector $v \in \mathcal{S}$ in the row space of $Q$, we have $Qv \in \mathbb{R}^r$ with $\|Qv\| = \|v\|$. Therefore:
\begin{equation*}
    v^\top H_{\text{eff}}(z^*) v = v^\top Q^\top Q v = \|Qv\|^2_2 = \|v\|^2_2,
\end{equation*}
proving that the effective Hessian restricted to $\mathcal{S}$ is isometric to the identity.
\end{proof}

\begin{remark}[Local Convexity]
While deep representation learning objectives are globally nonconvex, Theorem~\ref{thm:linear-whitening} and the subsequent conditioning results concern the local geometry of optimization trajectories or basins of attraction. In such regimes, the projected Hessian is PSD on the active subspace $\mathcal S$. This holds for standard self-supervised losses: the log-sum-exp structure of InfoNCE is convex in the logits, and squared-error objectives (e.g., BYOL, SimSiam) are quadratic. Our whitening construction relies on Assumption~\ref{ass:hessian-rank} to guarantee the existence of the real matrix square root $\Lambda^{-1/2}$.
\end{remark}

\subsection{Proof of Theorem~\ref{thm:nonlinear-adaptation} (Trajectory Linearization)}

\begin{proof}
Let $\gamma:[0,T]\to\mathcal{Z}$ be a smooth regular trajectory and define the $d \times d$ pullback metric $G(z) = J_h(z)^\top \nabla_h^2\mathcal{L}(h(z)) J_h(z)$~\citep{lee2018riemannian}. By Assumption~\ref{ass:hessian-rank}, $\operatorname{rank}(G(z)) = r$ for all $z\in\gamma([0,T])$. Let $\mathcal{S}_z \subset T_z\mathcal{Z}$ be the span of the eigenvectors associated with the $r$ nonzero eigenvalues of $G(z)$. Denote by $P_{\mathcal{S}_z}$ the orthogonal projection onto $\mathcal{S}_z$.

\textbf{Smooth subbundle and induced metric.} By Assumption~\ref{ass:regularity}, $\mathcal{L}$ is $C^2$, so $G(z)$ is continuous. Because the rank is constant on the compact set $\gamma([0,T])$, the nonzero eigenvalues are uniformly separated from zero. By Rellich's theorem for symmetric matrices~\citep{kato1995perturbation}, the spectral projector $z\mapsto P_{\mathcal{S}_z}$ is smooth, and thus the assignment $z\mapsto \mathcal{S}_z$ defines a smooth rank-$r$ subbundle of $T\mathcal{Z}|_{\gamma}$.

We define a bilinear form $g_z$ on $\mathcal{S}_z$ by $g_z(v,w)= v^\top G(z) w$. This defines a smoothly varying inner product. We extend $g_z$ to a full Riemannian metric $\tilde g$ on $T\mathcal{Z}$ by choosing a smooth inner product on the orthogonal complement $\mathcal{S}_z^\perp$. Since $\gamma$ is a regular smooth curve, the tubular neighborhood theorem guarantees a neighborhood $U$ of $\gamma([0,T])$ and a coordinate chart $\psi:U\to\mathbb{R}^d$ such that, in these coordinates, the metric tensor of $\tilde g$ satisfies $\tilde g_{ij}(z) = \delta_{ij}$ for $1\le i,j\le r$~\citep{lee2012smooth}.

\textbf{Ideal map and exact whitening.} Define the ideal projection head $h^*:U\to\mathbb{R}^k$ ($k\ge r$) by taking the first $r$ coordinate functions of $\psi$ and padding with zeros. Along the trajectory $\gamma$, the Jacobian $J_{h^*}(z)$ restricts to an isometry from $(\mathcal{S}_z,g_z)$ to $(\mathbb{R}^r,\langle\cdot,\cdot\rangle)$. By the definition of $g_z$ and the isometry property, for all $v,w\in \mathcal{S}_z$:
\begin{equation*}
    v^\top G(z) w 
    = g_z(v,w) 
    = \langle J_{h^*}(z)v,\, J_{h^*}(z)w\rangle 
    = v^\top J_{h^*}(z)^\top J_{h^*}(z) w.
\end{equation*}
This implies the perfect whitening condition on the active subspace:
\begin{equation*}
    P_{\mathcal{S}_z}^\top \left( J_{h^*}(z)^\top \nabla_h^2\mathcal{L}(h(z)) J_{h^*}(z) - I_r\right) P_{\mathcal{S}_z}=0, \qquad\forall z\in\gamma([0,T]).
\end{equation*}

\textbf{Neural approximation and error control.}
The set $K=\gamma([0,T])$ is compact. By the $C^1$-universal approximation theorem~\citep{hornik1991approximation, pinkus1999approximation}, for any $\delta>0$ there exists an MLP $h_\phi$ with smooth activations such that $\sup_{z\in K}\|h_\phi(z)-h^*(z)\|<\delta$ and $\sup_{z\in K}\|J_{h_\phi}(z)-J_{h^*}(z)\|_F<\delta$. 

Let $H(u) = \nabla_u^2 \mathcal{L}(u)$ denote the $k \times k$ loss Hessian. On the compact domain containing the trajectory, let $M = \sup_{z} \|H(h_\phi(z))\|_F$ and let $L_H$ be the Lipschitz constant of $H$. We decompose the error $\Delta(z)$ between the induced effective Hessian and the identity $I_r$ on the active subspace. Writing $E(z) = J_{h_\phi}(z) - J_{h^*}(z)$ and $C_J = \sup_{z \in K} \|J_{h^*}(z)\|_F$, we expand the term via the triangle inequality:
\begin{align*}
    \Delta(z) &= \| P_{\mathcal{S}_z}^\top (J_{h_\phi}^\top H(h_\phi) J_{h_\phi} - J_{h^*}^\top H(h^*) J_{h^*}) P_{\mathcal{S}_z} \|_F \\
    &\le \| J_{h^*}^\top (H(h_\phi) - H(h^*)) J_{h^*} \|_F + \| E^\top H(h_\phi) J_{h^*} + J_{h^*}^\top H(h_\phi) E + E^\top H(h_\phi) E \|_F \\
    &\le C_J^2 L_H \delta + 2 \delta M C_J + M \delta^2.
\end{align*}
Because $\Delta(z) = \mathcal{O}(\delta)$, for any $\epsilon > 0$, we can choose a sufficiently small $\delta$ (realized by a finite MLP width) such that $\Delta(z) \le \epsilon$ for all $z \in K$, satisfying the isotropic condition along the trajectory.
\end{proof}

\subsection{Proof of Proposition~\ref{prop:linear-curvature-barrier} (Curvature Barrier for Linear Heads)}

\begin{proof}
The condition for global trajectory whitening (Theorem~\ref{thm:nonlinear-adaptation}) requires the effective Hessian $H_{\text{eff}}(z)$ to be isotropic at every point $z \in \mathcal{Z}$. In geometric terms, this requires the induced pullback metric $G(z)$ to be globally flat ($R_G \equiv 0$) and nondegenerate (as the second-order interaction term of $H_{\text{eff}}(z)$ vanishes in this linear case).

Consider a linear projection head $h(z) = Wz$, for which the Jacobian is the constant matrix $J_h = W$. If $W$ is rank-deficient on the tangent space $T_z \mathcal{Z}$, then $G(z)$ is degenerate and fails to define a Riemannian metric; in this case, the effective Hessian lacks a learning signal in the null space of $W$, failing the whitening condition by definition. We therefore restrict attention to the case where $W$ is a linear immersion.

In this case, the image $W(\mathcal{Z})$ is an immersed affine submanifold of $\mathcal{H}$. The metric $G = W^* g_{\mathcal{L}}$ coincides with the Riemannian metric induced on $W(\mathcal{Z})$ by the ambient loss metric $g_{\mathcal{L}}$. Because $W$ is a linear map, the second fundamental form of this immersion vanishes identically. By the Gauss equation, the Riemann curvature tensor of the pullback metric $R_G$ is given simply by the restriction of the ambient curvature tensor $R_{\mathcal{L}}$ to the tangent vectors in the image $W(\mathcal{Z})$.

By assumption, the loss manifold $(\mathcal{H}, g_{\mathcal{L}})$ has nonvanishing curvature ($R_{\mathcal{L}} \not\equiv 0$), such as the constant positive curvature of a hypersphere. Since the ambient curvature tensor $R_{\mathcal{L}}$ does not vanish identically, its restriction to any nondegenerate immersed affine submanifold $W(\mathcal{Z})$ cannot vanish identically. Consequently, the pullback metric $G$ necessarily inherits nonzero curvature ($R_G \not\equiv 0$). However, because a globally Euclidean metric requires $R_G \equiv 0$ everywhere, it follows that no constant linear head can satisfy the global whitening condition. Achieving global flattening of a curved landscape necessitates a state-dependent Jacobian $J_h(z)$ that can locally adapt the coordinate frame, a property unique to nonlinear heads.
\end{proof}

\subsection{Proof of Proposition~\ref{prop:capacity-limits} (Perturbation via Capacity Limits)}
\label{proof:prop-capacity}

\begin{proof}
Let $J^* \equiv J_{h^*}(z)$ and $J_\phi \equiv J_{h_\phi}(z)$. The approximation error is given by the error matrix $E = J_\phi - J^*$, where by assumption, the spectral norm is bounded by $\| E \|_2 \le \epsilon$. 
Thus, we can write the approximate Jacobian as $J_\phi = J^* + E$.

Under the Gauss-Newton approximation of the optimization landscape, the effective Hessian with respect to the backbone representation $z$ is driven by the pullback of the loss Hessian $H_{\text{loss}}$ through the head's Jacobian. 
For the ideal head, this is $H_{\text{eff}}^* = (J^*)^\top H_{\text{loss}} J^*$. 
For the approximate head, this is $H_{\text{eff}}^\phi = J_\phi^\top H_{\text{loss}} J_\phi$.

Substituting $J_\phi = J^* + E$ into the expression for $H_{\text{eff}}^\phi$:
\begin{align*}
    H_{\text{eff}}^\phi &= (J^* + E)^\top H_{\text{loss}} (J^* + E) \\
    &= (J^*)^\top H_{\text{loss}} J^* + (J^*)^\top H_{\text{loss}} E + E^\top H_{\text{loss}} J^* + E^\top H_{\text{loss}} E \\
    &= H_{\text{eff}}^* + (J^*)^\top H_{\text{loss}} E + E^\top H_{\text{loss}} J^* + E^\top H_{\text{loss}} E.
\end{align*}

We are interested in the spectral perturbation $\| H_{\text{eff}}^\phi - H_{\text{eff}}^* \|_2$. Subtracting $H_{\text{eff}}^*$ from both sides and applying the triangle inequality yields:
\begin{equation*}
    \| H_{\text{eff}}^\phi - H_{\text{eff}}^* \|_2 \le \| (J^*)^\top H_{\text{loss}} E \|_2 + \| E^\top H_{\text{loss}} J^* \|_2 + \| E^\top H_{\text{loss}} E \|_2.
\end{equation*}

Since the spectral norm is submultiplicative ($\|AB\|_2 \le \|A\|_2 \|B\|_2$) and symmetric ($\|A^\top\|_2 = \|A\|_2$), we can bound each term individually:
\begin{align*}
    \| (J^*)^\top H_{\text{loss}} E \|_2 &\le \| J^* \|_2 \| H_{\text{loss}} \|_2 \| E \|_2 \le L M \epsilon \\
    \| E^\top H_{\text{loss}} J^* \|_2 &\le \| E \|_2 \| H_{\text{loss}} \|_2 \| J^* \|_2 \le \epsilon M L = L M \epsilon \\
    \| E^\top H_{\text{loss}} E \|_2 &\le \| E \|_2 \| H_{\text{loss}} \|_2 \| E \|_2 \le M \epsilon^2.
\end{align*}

Summing these bounds provides the strict upper bound on the effective Hessian perturbation:
\begin{equation*}
    \| H_{\text{eff}}^\phi - H_{\text{eff}}^* \|_2 \le 2 L M \epsilon + M \epsilon^2.
\end{equation*}

For a sufficiently expressive architecture where the approximation error $\epsilon$ is small, the $\epsilon^2$ term vanishes rapidly, yielding the first-order perturbation bound $2LM\epsilon$. However, if the network lacks sufficient depth to approximate $h^*$, the error $\epsilon$ grows large, and this perturbation actively degrades the lowest eigenvalues of $H_{\text{eff}}^*$, preventing the isotropic conditioning necessary to prevent dimensional collapse.
\end{proof}

\subsection{Proof of Corollary~\ref{cor:failure-threshold} (Conditioning Failure Threshold)}
\label{proof:cor-failure}

\begin{proof}
The effective Hessians $H_{\text{eff}}^*$ and $H_{\text{eff}}^\phi$ are real symmetric matrices. By Weyl's Inequality for Hermitian matrices~\citep{stewart1990matrix}, the spectral perturbation of their eigenvalues is strictly bounded by the spectral norm of their difference. Therefore, the smallest eigenvalue of the approximated Hessian is bounded by:
\begin{equation*}
    \lambda_{\min}(H_{\text{eff}}^\phi) \ge \lambda_{\min}(H_{\text{eff}}^*) - \| H_{\text{eff}}^\phi - H_{\text{eff}}^* \|_2.
\end{equation*}

Substituting the spectral perturbation bound from Proposition~\ref{prop:capacity-limits}, we obtain:
\begin{equation*}
    \lambda_{\min}(H_{\text{eff}}^\phi) \ge \lambda_{\min}(H_{\text{eff}}^*) - (2 L M \epsilon + M \epsilon^2).
\end{equation*}

To maintain the geometric guarantee of isotropic conditioning (and thus prevent flat or negative curvature directions from flattening the landscape), we require $\lambda_{\min}(H_{\text{eff}}^\phi) > 0$. 

Assuming $\epsilon$ is small such that the first-order term dominates, this strictly requires:
\begin{equation*}
    2 L M \epsilon < \lambda_{\min}(H_{\text{eff}}^*) \implies \epsilon < \frac{\lambda_{\min}(H_{\text{eff}}^*)}{2 L M}.
\end{equation*}

If the projection head architecture is too shallow or lacks the nonlinear capacity to adapt the metric (for example, a linear head attempting to fit a highly curved $h^*$), the Jacobian approximation error $\epsilon$ becomes large. When $2LM\epsilon \ge \lambda_{\min}(H_{\text{eff}}^*)$, the lower bound drops below zero. In this regime, the approximation error dominates the spectrum, the condition number diverges, and the theoretical guarantees of collapse avoidance are voided.
\end{proof}

\subsection{Proof of Theorem~\ref{thm:instability} (Instability of Collapse)}

\begin{proof}
We consider the full effective Hessian $H_{\text{eff}}(z^*) = G(z^*) + M(z^*)$ defined in~\eqref{eq:hessian-full}, analyzing the quadratic form $v^\top H_{\text{eff}}(z^*) v$ along a direction $v$. The first term, the pullback metric $G(z^*) = J_h^\top \nabla_h^2 \mathcal{L} J_h$, is PSD. By Definition~\ref{def:intrinsic-rank}, $\operatorname{rank}(\nabla_h^2 \mathcal{L}) = r \le k$. In high-dimensional representation learning, we typically have $r < d$. Consequently, the $d \times d$ matrix $G(z^*)$ is singular and possesses a nontrivial nullspace $\ker(G)$ of dimension at least $d-r$.

The stability is thus determined by the interaction term:
\begin{equation*}
    Q_{\text{int}}(v) = v^\top \left( \sum_{i=1}^k \rho_i \nabla^2 h_i(z^*) \right) v,
\end{equation*}
where $\rho = \nabla_h \mathcal{L}$ is the residual gradient vector (nonzero by Assumption~\ref{ass:asymmetry}).

\textbf{Case 1: Linear Head.} If $h(z) = Wz$, then $\nabla^2 h_k(z) = 0$ identically. Thus $Q_{\text{int}}(v) = 0$. The total Hessian is dominated by the PSD Pullback term. Thus, $H_{\text{eff}} \succeq 0$. Since there are no negative eigenvalues, there is no direction of strict descent (to second order), making the state non-repelling.

\textbf{Case 2: Nonlinear Head.} Under Assumption~\ref{ass:generic-init}, the interaction matrix $M(\phi) = \sum_k \rho_k \nabla^2 h_k(z^*; \phi)$ is a random element of the space of symmetric matrices $\mathbb{S}^d$ with full support. The total effective Hessian is $H_{\text{eff}} = Q_{GN} + M(\phi)$.

We observe that the Gauss-Newton term $Q_{GN}$ is derived from the intrinsic loss Hessian of rank $r < d$ (Definition~\ref{def:intrinsic-rank}), and is therefore singular. Consequently, $Q_{GN}$ possesses a nontrivial nullspace where the stability of the equilibrium depends entirely on the indefinite interaction term $M(\phi)$. In the space of symmetric matrices, the set of matrices possessing at least one negative eigenvalue is open and dense.

Because the distribution of $M(\phi)$ has full support and $Q_{GN}$ is rank-deficient, the configuration of parameters $\phi$ required to keep $H_{\text{eff}}$ PSD (thereby stabilizing collapse) belongs to a proper subset of the parameter space. While this subset has nonempty interior in finite dimensions, its relative volume vanishes as the representation dimension $d$ increases, rendering the local instability generic.
\end{proof}

\begin{remark}[Motivation from Random Matrix Theory]
The destabilizing mechanism is motivated by Wigner's semicircle law~\citep{rudelson2010nonasymptotic}. For a random symmetric matrix $M \in \mathbb{S}^d$, the eigenvalues are distributed over an interval proportional to $[-\sqrt{d}, \sqrt{d}]$. As $d$ increases, the spectral radius of the indefinite interaction term grows as $O(\sqrt{d})$, eventually overpowering the fixed positive eigenvalues of the Gauss-Newton term $Q_{GN}$. Even for moderate dimensions, $H_{\text{eff}}$ is overwhelmingly likely to possess negative eigenvalues, providing the infinitesimal escape directions necessary for the backbone to avoid dimensional collapse.
\end{remark}

\subsection{Proof of Corollary~\ref{cor:parameter-instability} (Instability in Parameter Space)}

\begin{proof}
We link representation-space instability to the full network parameters $\theta$. The full parameter Hessian $\nabla_\theta^2 \mathcal{L}$ follows the chain rule expansion:
\begin{equation*}
    \label{eq:full-parameter-hessian}
    \nabla_\theta^2 \mathcal{L} = J_{f_\theta}^\top H_{\text{eff}} J_{f_\theta} + \nabla_z \mathcal{L} \cdot \nabla_\theta^2 f.
\end{equation*}
From Theorem~\ref{thm:instability}, there exists a direction $v \in T_z \mathcal{Z}$ such that $v^\top H_{\text{eff}} v < 0$. Under Assumption~\ref{ass:backbone-rank}, the Jacobian $J_{f_\theta}$ is surjective. Moreover, since the number of parameters $|\Theta|$ typically far exceeds the representation dimension $d$, the preimage of $v$ is a high-dimensional affine subspace in $T_\theta \Theta$.

Applying a perturbation $u$ such that $J_{f_\theta} u = v$ to the first term yields $u^\top (J_{f_\theta}^\top H_{\text{eff}} J_{f_\theta}) u = v^\top H_{\text{eff}} v < 0$. Regarding the second term, at a collapsed state or random initialization, the backbone second derivatives $\nabla_\theta^2 f$ are structurally uncorrelated with the residual gradient of the head. In high-dimensional parameter spaces, it is generically possible to choose a direction $u$ in the preimage of $v$ such that the negative curvature of the first term dominates the indefinite contribution of the second term. Consequently, the total parameter Hessian $\nabla_\theta^2 \mathcal{L}$ inherits the negative eigenvalue direction. Since the existence of a negative eigenvalue defines a direction of descent even in the presence of a residual gradient, the configuration is dynamically unstable.
\end{proof}

\subsection{Proof of Theorem~\ref{thm:linear-rank} (Linear Rank Propagation)}

\begin{remark}[First-order Gradient Flow Approximation]
    \label{rem:gradient-flow-approx}
    For the remaining proofs, we analyze the optimization process using the continuous-time gradient flow approximation. All statements regarding representation dynamics are derived from the linearization of the network map $f_\theta$ in a local neighborhood of the parameters. Claims are to be interpreted in the sense of first-order approximations to the training dynamics under the smoothness conditions of Assumption~\ref{ass:regularity}.
\end{remark}

\begin{proof}
Let $h(z) = Wz$ where $W \in \mathbb{R}^{k \times d}$. Let $\Sigma_z(t) = \mathbb{E}[z(t) z(t)^\top]$ be the covariance of the backbone representations at training time $t$. We decompose the representation space $\mathcal{Z} = \mathbb{R}^d$ into the row space of the head $\mathcal{S}_{\parallel} = \operatorname{Row}(W)$ and its nullspace $\mathcal{S}_{\perp} = \ker(W)$.
Any representation vector $z$ can be written as $z = z_{\parallel} + z_{\perp}$.

The loss function $\mathcal{L}$ depends on $z$ only through the projection $h(z) = W z_{\parallel}$. Consequently, the gradient of the loss with respect to the nullspace component is identically zero: $\nabla_{z_{\perp}} \mathcal{L} = 0$.

Under Assumption~\ref{ass:min-norm}, the training dynamics include a weight decay term $\lambda \|\theta\|^2_2$. While this acts on parameters, we analyze its effect on representations via the backbone Jacobian $J_{f_\theta} = \partial z / \partial \theta$. Assuming $J_{f_\theta}$ is nondegenerate (full rank) on the support of the data, the parameter decay $\dot{\theta} = -\lambda \theta$ induces a contraction in representation space. 
Furthermore, since modern backbones $f_\theta$ (e.g., ResNets and ViTs without bias or with BatchNorm) are often locally homogeneous or bias-free, the contraction $\theta \to 0$ implies $z \to 0$ along unprotected directions.

To a first-order approximation for the nullspace component $z_{\perp}$:
\begin{equation*}
    \dot{z}_{\perp}(t) \approx J_{f_\theta} \dot{\theta} \approx -\alpha z_{\perp}(t),
\end{equation*}
where $\alpha > 0$ is an effective decay rate determined by the singular values of $J_{f_\theta}$ and $\lambda$.
This linear differential equation implies $z_{\perp}(t) \to 0$ as $t \to \infty$.

Thus, asymptotically, the representation $z$ is confined strictly to the row space $\mathcal{S}_{\parallel}$. More formally, the limiting support of the measure induced by $z(t)$ lies in a subspace of dimension at most $\operatorname{rank}(W)$. If the loss forces the output covariance $\Sigma_h$ to have rank $r < k$, then the active row space of $W$ aligns with this $r$-dimensional subspace, and $\operatorname{rank}(\Sigma_z) \to r$. In either case, the rank is strictly upper-bounded by the head dimension: $\operatorname{rank}(\Sigma_z) \le k$.
\end{proof}

\subsection{Proof of Proposition~\ref{prop:metric-singularity} (Metric Singularity)}

\begin{proof}
Let $\xi \in \Xi_{\text{cont}} \subseteq \mathbb{R}^m$ parameterize the continuous augmentations, and let $t_\xi$ be the corresponding augmentation operator. We define the augmentation orbit passing through $z$ as the image of the mapping $\gamma(\xi) = f_\theta(t_\xi(x))$. Since $h$ and $t_\xi$ are assumed $C^1$ in a neighborhood of $z$, the Jacobians $J_h(z)$ and $J_\gamma(0)$ are well-defined.

The augmentation tangent space $\mathcal{V}_{\text{aug}}(z)$ is spanned by the partial derivatives of this mapping at the identity ($\xi=0$):
\begin{equation*}
    \mathcal{V}_{\text{aug}}(z) = \operatorname{span} \left\{ \frac{\partial \gamma}{\partial \xi_1}, \dots, \frac{\partial \gamma}{\partial \xi_m} \right\} \Bigg|_{\xi=0}.
\end{equation*}
Local invariance implies that the projection head output is constant to first order with respect to small changes in $\xi$. Mathematically, $\nabla_\xi (h \circ \gamma)(\xi) |_{\xi=0} = 0$.
Applying the chain rule:
\begin{equation*}
    J_h(z) \cdot J_\gamma(0) = 0.
\end{equation*}
The columns of $J_\gamma(0)$ are precisely the basis vectors $v \in \mathcal{V}_{\text{aug}}(z)$. Thus, every such tangent vector $v$ lies in the kernel of the head Jacobian $J_h(z)$.
The pullback metric is defined as $G(z) = J_h(z)^\top J_h(z)$. For any $v \in \mathcal{V}_{\text{aug}}(z)$:
\begin{equation*}
    v^\top G(z) v = v^\top J_h(z)^\top J_h(z) v = \| J_h(z) v \|^2_2 = \| 0 \|^2_2 = 0.
\end{equation*}
Since $G(z)$ is PSD, $v^\top G v = 0$ implies the metric is singular along these directions.
\end{proof} 

\subsection{Proof of Theorem~\ref{thm:information-hierarchy} (Information Hierarchy)}

\begin{proof}
We quantify information using the FIM with respect to the augmentation parameters $\xi$. We adopt the standard approximation in nonlinear information geometry that the local FIM is proportional to the Gram matrix of the Jacobian ($J^\top J$) under the assumption of additive, isotropic observation noise. In this setting, rank analysis of the FIM is equivalent to rank analysis of the sensitivity matrix $S_y = \partial y / \partial \xi$.

Let $S_z \in \mathbb{R}^{d \times m}$ be the sensitivity of the backbone $z$ to augmentations. Its column space is exactly $\mathcal{V}_{\text{aug}}(z)$. Assuming the backbone is not already invariant to augmentations (a condition guaranteed by the nondegeneracy of the random initialization and the backbone's high capacity), $\operatorname{rank}(S_z) = \dim(\mathcal{V}_{\text{aug}})$.

Let $S_h \in \mathbb{R}^{k \times m}$ be the sensitivity of the projection $h(z)$. By the chain rule:
\begin{equation*}
    S_h = J_h(z) S_z.
\end{equation*}
From Proposition~\ref{prop:metric-singularity}, if the head enforces invariance, then $\ker(J_h(z)) \supseteq \operatorname{Col}(S_z)$.
This implies that $J_h(z)$ annihilates the columns of $S_z$. In the ideal case of perfect invariance, $S_h = 0$, and the projection carries rank-0 information about $\xi$.

In the case of approximate invariance, the singular values of $S_h$ corresponding to augmentation directions vanish in the limit of minimizing the loss. Applying the rank-nullity theorem to the restriction of the linear map $J_h|_{\mathcal{V}_{\text{aug}}}$:
\begin{equation*}
    \operatorname{rank}(S_h) = \operatorname{rank}(S_z) - \dim(\ker(J_h) \cap \mathcal{V}_{\text{aug}}).
\end{equation*}
Since invariance requires $\mathcal{V}_{\text{aug}} \subseteq \ker(J_h)$, the second term is $\dim(\mathcal{V}_{\text{aug}})$.
Therefore, the FIM at the projection head is strictly rank-deficient relative to that at the backbone:
\begin{equation*}
    \operatorname{rank}(\mathcal{I}_{h(z)}) \approx \operatorname{rank}(S_h) \le \operatorname{rank}(S_z) - \dim(\mathcal{V}_{\text{aug}}).
\end{equation*}
\end{proof}

\section{Experimental Setup and Implementation Details}
\label{app:exp-details}

\subsection{Overview of Experiments}
Table~\ref{tab:exp-overview} summarizes the experiments, datasets, backbones, and key metrics which we introduce in this appendix.

\begin{table*}[h]
\centering
\small
\caption{Overview of experiments and configurations.}
\label{tab:exp-overview}
\begin{tabular}{lcccc}
\toprule
Experiment & Objective & Dataset(s) & Backbone(s) & Key metrics \\
\midrule
Collapse instability & SimSiam-style & CIFAR-10/100 & ResNet-18, ViT-Tiny & variance, condition number, update ratios \\
Guillotine + probes & SimCLR & CIFAR-10/100 & ResNet-18, ViT-Tiny & linear/MLP probe accuracy \\
Curvature + orbits & SimCLR & CIFAR-10/100 & ResNet-18, ViT-Tiny & curvature, orbit metrics \\
Head depth ablation & SimCLR & CIFAR-10 & ResNet-18 & probe acc., curvature, orbit metrics \\
Hessian tracking & SimSiam-style & CIFAR-10 & ResNet-18 & $\lambda_{\min}$, variance, condition number \\
Pretrained analysis & Fixed checkpoints & CIFAR-10 test & ResNet-50, ViT-S/16 & rank, curvature, variance, alignment \\
\bottomrule
\end{tabular}
\end{table*}

Unless otherwise stated, experiments use CIFAR datasets with standard normalization, batch size 512, and three random seeds. For SimSiam-style collapse experiments we use SGD (lr=0.05, momentum=0.9). For SimCLR pretraining and probes we use Adam (lr=$10^{-3}$, temperature $\tau=0.1$) with 5 epochs for probe training. When relevant, we report mean and standard deviation over seeds.

\subsection{Metric Definitions}
To quantify the geometry of the representation space, the following metrics are reported across experiments:

\begin{itemize}[leftmargin=*]
    \item \textbf{Representation variance:} For a batch of normalized representations $z \in \mathbb{R}^{B \times d}$, we report the mean standard deviation across dimensions: $\mathrm{Var}(z) = \frac{1}{d}\sum_{i=1}^d \mathrm{Std}(z_i)$.
    \item \textbf{Condition number:} For the covariance matrix of normalized representations, $\kappa = \lambda_{\max} / \lambda_{\min}$, with $\lambda_{\min}$ clamped to $10^{-7}$ to avoid numerical instability during perfect collapse.
    \item \textbf{Relative update size:} To evaluate timescale separation, we compute $\|\eta \nabla w\|_2 / \|w\|_2$ independently for the parameters of the projection head and the backbone.
    \item \textbf{Local curvature ($\kappa$):} For a continuous augmentation trajectory $z_t$ (e.g., sweeping rotation angles) over $T$ steps, we estimate the local manifold curvature using central finite differences:
    \begin{equation*}
        \kappa = \frac{1}{T-2} \sum_{t=2}^{T-1} \|z_{t+1} - 2z_t + z_{t-1}\|_2.
    \end{equation*}
    \item \textbf{Mean orbit spread:} The mean squared Euclidean distance of points within a single augmentation orbit from their specific orbit centroid, representing the total variance captured by the nuisance transformation.
    \item \textbf{Intra-orbit distance ($D_{\text{intra}}$):} The average pairwise Euclidean distance between all augmented views generated from a single anchor image $x$, measuring the effective width of the nuisance manifold.
    \item \textbf{Inter-class distance ($D_{\text{inter}}$):} The average Euclidean distance between the centroids of different semantic categories, measuring the global separation of semantic clusters: $D_{\text{inter}} = \mathbb{E}_{c \neq c'} [\|\mu_c - \mu_{c'}\|_2]$.
    \item \textbf{Class/orbit ratio:} A geometric signal-to-noise ratio defined as $D_{\text{inter}} / (D_{\text{intra}} + \epsilon)$. A higher ratio indicates a space where semantic categories are better separated relative to the variance induced by data augmentations.
    \item \textbf{Effective rank (dim):} To determine the effective dimensionality of an augmentation orbit, we compute the entropy $H$ of the squared singular values $\sigma_k^2$ of the centered trajectory matrix:
    \begin{equation*}
        \text{Rank}(z) = \exp\left( - \sum_k p_k \log p_k \right), \quad \text{where } p_k = \frac{\sigma_k^2}{\sum_i \sigma_i^2}.
    \end{equation*}
    \item \textbf{Alignment gain ($\Delta_{\text{cos}}$):} The difference in the mean cosine similarity of augmented samples to the unaugmented anchor image, measured between the head output and the backbone output:
    \begin{equation*}
        \Delta_{\text{cos}} = \mathbb{E}_{\xi} [\cos(h(f(x)), h(f(t_\xi(x))))] - \mathbb{E}_{\xi} [\cos(f(x), f(t_\xi(x)))].
    \end{equation*}
    A positive gain indicates the head increases invariance, while a negative gain implies the head actively decorrelates the views (characteristic of redundancy reduction methods).
\end{itemize}

\subsection{Experiment-Specific Details}

\paragraph{Collapse instability and ablations.} We isolate the attractor dynamics of the positive term using a SimSiam-style objective (random resized crop and horizontal flip). The network is initialized using standard schemes (Kaiming uniform for ResNet-18, truncated normal for ViT-Tiny). The projection head is then explicitly initialized in a pseudo-collapsed state by scaling its linear weights by a factor of $\alpha = 0.1$. This places the network in a pseudo-collapsed state with representation variance $\text{Var}(z) \in [10^{-5}, 10^{-3}]$. For reference, standard initialization yields $\text{Var}(z) \approx 0.05$--$0.15$, indicating our procedure reduces variance by two orders of magnitude. This approximates the basin of attraction of a collapsed fixed point $z^* = c$ (constant representation) while remaining tractable---exact collapse would require solving an intractable constraint satisfaction problem to find $\theta$ such that $f_\theta(x) = c$ for all $x$. The scaling factor $\alpha = 0.1$ was chosen to balance collapse depth (sufficient to observe instability) with numerical stability (avoiding gradient underflow); values of $\alpha \in [0.05, 0.2]$ yield qualitatively similar dynamics. We ablate activation functions (linear, ReLU, GELU, Swish), BatchNorm (on/off), and learning rate scale (base=0.05, large=0.5, small=0.005) while tracking variance, condition number, and update ratios.

\paragraph{Guillotine effect and probes.} We pretrain with SimCLR augmentations (crop, flip, rotation, color jitter). To quantify information loss, we freeze the network and train probes to predict the specific rotation angle applied to the image. We train both linear and 2-layer MLP probes on the backbone $z$ and head $h(z)$ to isolate linearly separable information from entangled information.

\paragraph{Curvature and orbit visualization.} To construct smooth augmentation trajectories for local curvature estimation during pretraining, we sweep rotation from $0^\circ$ to $45^\circ$ in $5^\circ$ increments (10 evenly spaced steps) on $\ell_2$-normalized representations. For the high-dimensional orbit visualization and metric calculations, we sample 5 distinct classes, 3 images per class, and sweep through 12 rotations spanning the full $[0^\circ, 360^\circ)$ circle.

\paragraph{Head depth ablation.} We train SimCLR models with a Swish-activated projection head of varying depths ($L \in \{1, 2, 3\}$) on CIFAR-10 using ResNet-18. We compute linear probe accuracy (evaluating both the backbone and the head), manifold curvature, and orbit compression metrics for each depth architecture.

\paragraph{Hessian tracking.} We estimate the minimum eigenvalue ($\lambda_{\min}$) of the effective Hessian without materializing the full matrix. We compute exact Hessian-vector products iteratively via double backpropagation, using shifted power iteration (estimating $\lambda_{\max}$ and computing the dominant eigenvector of the shifted operator $H_{\text{shift}}(v) = Hv - \lambda_{\max}v$) with 20 iterations per estimate. Estimates are computed on the first few batches of each epoch.

\paragraph{Pretrained checkpoint analysis.} We analyze public, pretrained checkpoints where the projection head is available (DINO ViT-S, DINO ResNet-50, VICReg ResNet-50, Barlow Twins ResNet-50) using CIFAR-10 test images resized to $224\times224$ with standard ImageNet normalization. To compute unbounded geometric distortions, we sweep four separate continuous orbits on the raw, unnormalized representations: rotation $[0^\circ,45^\circ]$, hue $[-0.4,0.4]$, saturation $[0,2]$, and Gaussian blur $\sigma \in [0.1,3.0]$, using 12 interpolation steps for each. We report mean statistics over 50 sampled image trajectories and provide standard error intervals for collapse ratios.

\section{Extended Empirical Results}
\label{app:extra-results}

\subsection{Empirical Verification of Timescale Separation}
\label{app:timescale-empirical}
A load-bearing assumption for our stability analysis (Assumption~\ref{ass:timescale}) is that the projection head adapts faster than the backbone. To empirically ground this, we logged the relative update magnitudes $\frac{\eta \|\nabla w\|_2}{\|w\|_2}$ for both the head and the backbone during early training (Table~\ref{tab:timescale-ratios}). 

\begin{table}[ht]
\centering
\scriptsize
\caption{\textbf{Timescale separation ratios.} The ratio of the relative update magnitude of the projection head to the backbone ($\frac{\eta \|\nabla w\|_2}{\|w\|_2}$). Without BatchNorm, the projection head natively adapts orders of magnitude faster than the backbone. Introducing BatchNorm suppresses this natural timescale separation to near-parity across all learning rates.}
\label{tab:timescale-ratios}
\begin{tabular}{llc}
\toprule
\textbf{Activation} & \textbf{Configuration} & \textbf{Ratio (Head / Backbone)} \\ 
\midrule
Linear              & Base LR                & $118.5\times$ \\
GELU                & Base LR                & $56.3\times$ \\
\midrule
\multirow{6}{*}{Swish} 
                    & Base LR                & $90.3\times$ \\
                    & Base LR + BN           & $0.9\times$ \\
                    & Large LR               & $66.7\times$ \\
                    & Large LR + BN          & $0.5\times$ \\
                    & Small LR               & $446.6\times$ \\
                    & Small LR + BN          & $2.7\times$ \\
\midrule
\multirow{6}{*}{ReLU}  
                    & Base LR                & $2544.1\times$ \\
                    & Base LR + BN           & $0.9\times$ \\
                    & Large LR               & $4234.8\times$ \\
                    & Large LR + BN          & $0.6\times$ \\
                    & Small LR               & $876.2\times$ \\
                    & Small LR + BN          & $1.6\times$ \\
\bottomrule
\end{tabular}
\end{table}

When BatchNorm is removed (approximating pure continuous flow), we observed that timescale separation naturally emerges to a massive degree: the projection head updates are orders of magnitude larger relative to its weights than the backbone updates (e.g., $90\times$ for Swish, $56\times$ for GELU, and over $2500\times$ for ReLU). Interestingly, applying BatchNorm artificially suppresses this ratio to near-parity ($<1\times$ to $3\times$ across all learning rates). This demonstrates that massive timescale separation is a native property of the gradient flow, whereas normalization heuristics can heavily manipulate and constrain these dynamics.

\subsection{Empirical Verification of Residual Gradients and the Robustness of Swish}
\label{app:residual}

To validate Assumption~\ref{ass:asymmetry}, we tracked the residual gradient norms $\|\nabla_h \mathcal{L}\|_2$ near the pseudo-collapsed state. As shown in Figure~\ref{app:residual} (top left), for most heads, the residual gradient norm remains bounded away from zero.

\begin{figure}[ht]
    \centering
    \begin{minipage}{0.48\textwidth}
        \centering
        \includegraphics[width=\linewidth]{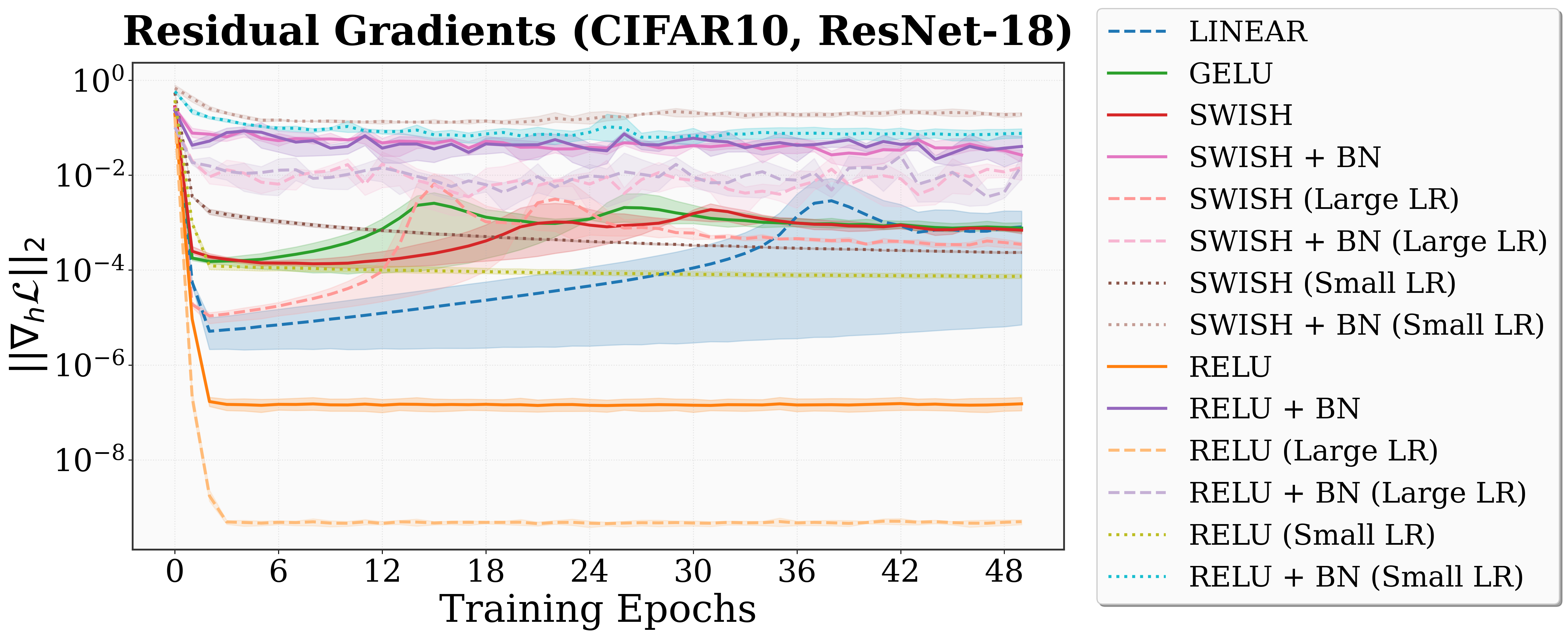}
    \end{minipage}\hfill
    \begin{minipage}{0.48\textwidth}
        \centering
        \includegraphics[width=\linewidth]{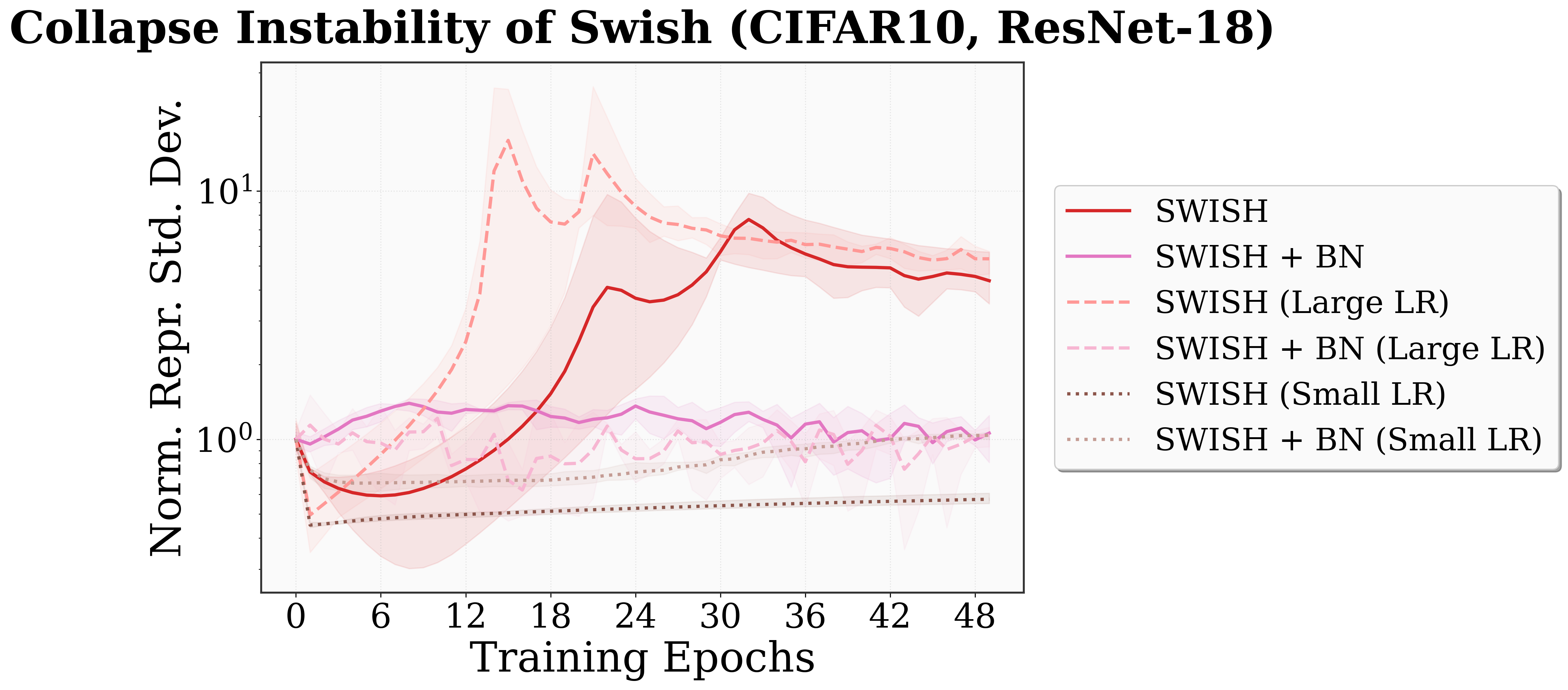}
    \end{minipage}
    \begin{minipage}{0.48\textwidth}
        \centering
        \includegraphics[width=\linewidth]{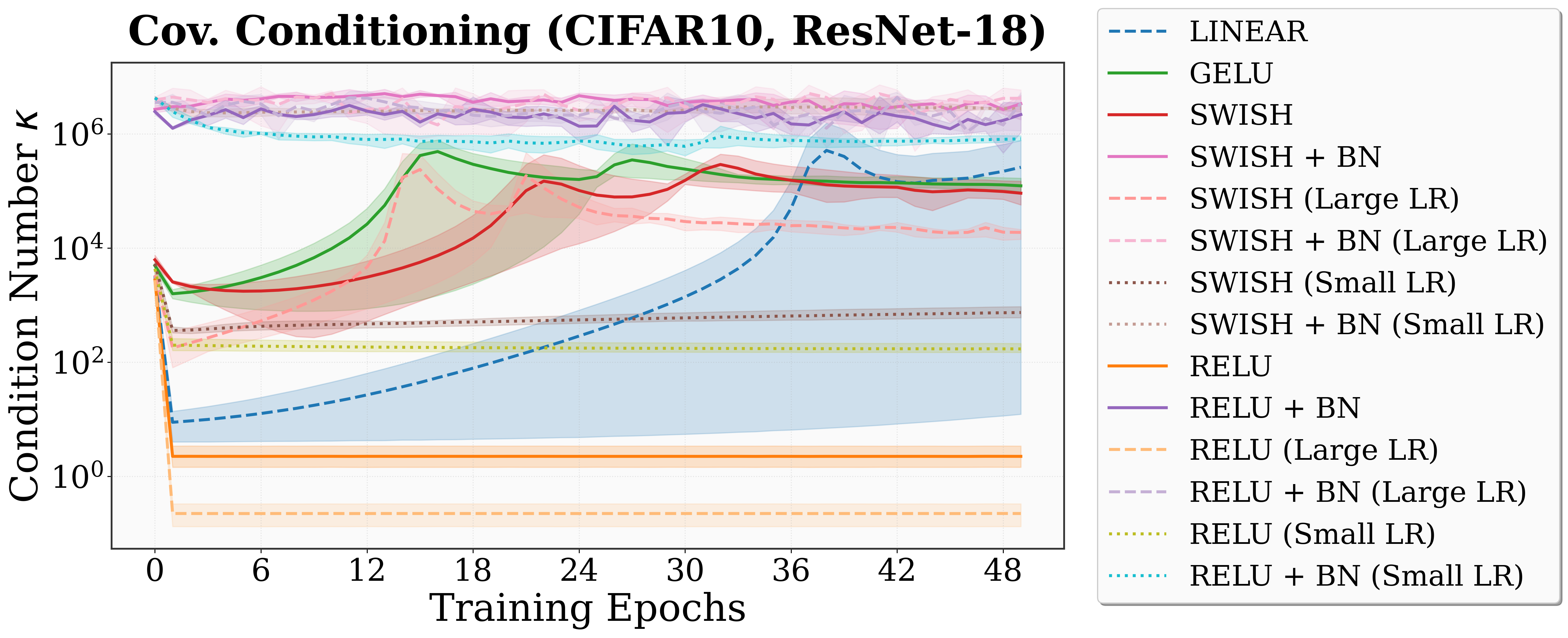}
    \end{minipage}
    \caption{\textbf{Residual gradients and Swish instability (CIFAR-10, ResNet-18).} Most heads maintain nonvanishing residual gradients (top left), satisfying Assumption~\ref{ass:asymmetry}. Unlike ReLU, Swish can escape collapse across all learning rates and normalization settings (top right). A healthy projection head should have reasonably high condition number (bottom) in order to warp the geometry of the space. This is observed across most tested configurations.}
    \label{fig:res-swish-cond}
\end{figure}

Furthermore, expanding on Section~\ref{sec:experiments}, we ran ablations on Swish with other configurations of learning rates and BatchNorm. As shown in Figure~\ref{fig:res-swish-cond} (top right), Swish is able to begin escaping with small steps and without BatchNorm, unlike ReLU which had completely flat representation variance. This provides evidence that its geometry natively solves the collapse problem without relying on discretization noise or explicit variance bounds.

\subsection{Robustness to Semantic Density: CIFAR-100}
\label{app:cifar100}

We replicated these tests on CIFAR-100 to ensure the results were not artifacts of low category density. This dataset presents a more challenging optimization landscape with ten times the category density of CIFAR-10, requiring more fine-grained feature preservation in the backbone. Table~\ref{tab:cifar100-metrics} summarizes the geometric and information-theoretic signatures observed.

\begin{table}[ht]
\centering
\scriptsize
\caption{\textbf{Metrics for CIFAR-100 (ResNet-18).} Statistics are reported as mean $\pm$ standard deviation. Curvature metrics are calculated over $n=3$ independent training seeds; probing results are from the final 50-epoch SimCLR representative checkpoint. The consistency of the curvature ratio (1.84$\times$) with CIFAR-10 (1.93$\times$) suggests some universal geometric mechanism.}
\label{tab:cifar100-metrics}
\begin{tabular}{@{}llccc@{}}
\toprule
\textbf{Metric Type} & \textbf{Metric} & \textbf{Backbone ($z$)} & \textbf{Head ($h(z)$)} & \textbf{Ratio / Change} \\ 
\midrule
\textit{Geometric} & Local Curvature & $0.127 \pm 0.002$ & $0.235 \pm 0.001$ & \textbf{1.84$\times$ Higher} \\
\midrule
\textit{Information} & Linear Probing Acc. & 42.01\% & 31.19\% & -10.82\% Abs. Loss \\
          & MLP Probing Acc. & 44.53\% & 34.16\% & -10.37\% Abs. Loss \\
                     & \textbf{Nonlinearity Gap ($\Delta_{\text{MLP-Lin}}$)} & +2.52\% & \textbf{+2.97\%} & \textbf{+0.45\% Entanglement} \\ 
\bottomrule
\end{tabular}
\end{table}

As shown in Figure~\ref{fig:collapse-cifar100}, the gap between ReLU and nonlinear heads mostly persists on CIFAR-100. Despite the increased task complexity, ReLU heads remain trapped in the collapsed equilibrium (zero representation variance); linear heads seem to recover to the baseline after 20 epochs, but note that the initial decrease in the first few epochs is much steeper than nonlinear heads. In contrast, smooth nonlinearities (GELU, Swish) successfully destabilize the collapsed state, triggering an escape. Interestingly, the escape trajectory for GELU is more aggressive on CIFAR-100 than on CIFAR-10, suggesting that the interaction term in~\eqref{eq:hessian-full} may scale with the complexity of the data distribution.

\begin{figure}[ht]
    \centering
    \includegraphics[width=0.4\textwidth]{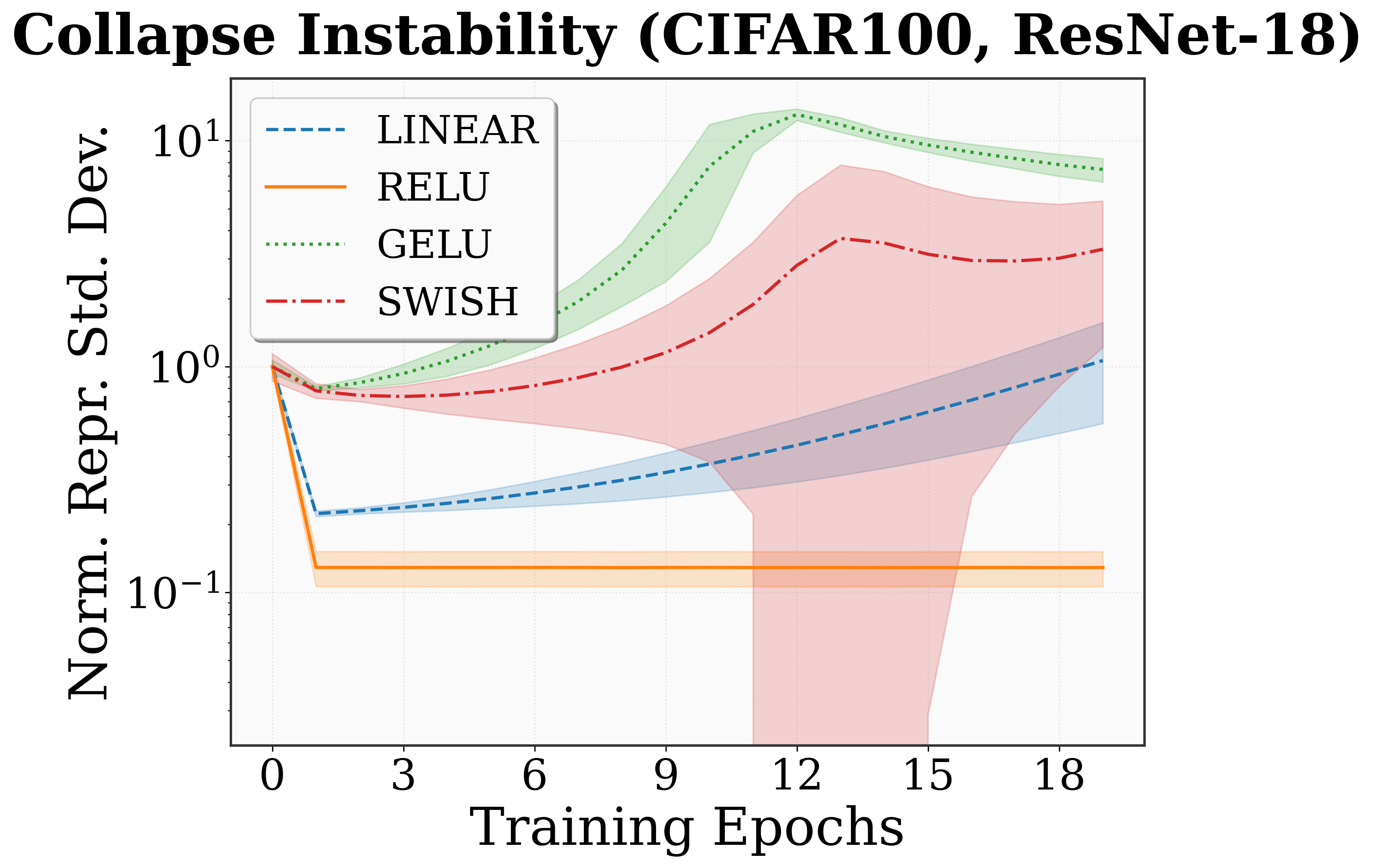}
    \caption{\textbf{Collapse instability (CIFAR-100).} Evolution of representation variance during training of 3 seeds. Smooth nonlinearities (GELU, Swish) possess the necessary curvature to destabilize the equilibrium and escape the collapsed basin. While the linear head exhibits recovery within these 20 epochs, it only ends up reaching its initial representation variance. Further, the initial decrease in representation variance is steeper for zero-curvature heads than smooth heads.}
    \label{fig:collapse-cifar100}
\end{figure}

The information hierarchy observed on CIFAR-10 remains highly significant on CIFAR-100. While the base accuracy for the rotation task is lower on CIFAR-100 due to increased semantic complexity, the guillotine effect is clearly preserved. This confirms that the projection head acts as a consistent information bottleneck across different levels of semantic density, effectively filtering out rotation information that the backbone chooses to retain. The gap between linear and MLP probing accuracy continues to increase in the head relative to the backbone ($+2.97\%$ vs $+2.52\%$), indicating that the head entangles nuisance variables into its higher-order geometry, though less so than CIFAR-10 in our experiment.

Strikingly, the curvature ratio remains remarkably stable across datasets. While CIFAR-10 exhibited a ratio of $1.93\times$, CIFAR-100 yields a nearly identical ratio of $1.84\times$. Despite the drastic change in the number of classes and the base task difficulty, the increase in curvature remains. This provides strong empirical evidence for Proposition~\ref{prop:metric-singularity}, suggesting that the projection head achieves invariance through a specific, quantifiable topological folding of the representation manifold that is characteristic of the MLP architecture itself.

\subsection{Universality and Stiffness in ViTs}
\label{app:vit}

To test the backbone-invariance of our geometric theory, we replicated our experiments using a vision transformer (ViT-Tiny) architecture. This comparison is particularly salient as ViTs lack the convolutional inductive biases and the standard usage of BatchNorm (often cited as a collapse-avoidance mechanism) found in ResNets.

As shown in Table~\ref{tab:vit-unified-results}, the geometric mechanisms identified in CNNs are not only present in ViTs but are significantly amplified. While qualitatively consistent with our ResNet findings, the ViT results are quantitatively distinct in two key ways. The first is the more aggressive orbit compression. While the ResNet-18 head reduced orbit spread by $21.85\times$, the ViT head collapses it by a remarkable $4015.33\times$. We attribute this to the ViT backbone which, without the convolutional prior to pre-filter spatial nuisances, has augmented views which remain highly dispersed. Consequently, the projection head must induce a significantly more singular metric to achieve the invariance required by the SSL objective. This is reflected in the higher curvature ratio ($2.59\times$ in ViTs compared to $1.93\times$ in CNNs), indicating the head must warp the manifold more aggressively.  

Secondly, despite this extreme destruction of variance, the head remains a selective filter. As shown in Table~\ref{tab:vit-unified-results}, the intra-orbit noise ($D_{\text{intra}}$) is crushed by $64.13\times$, whereas the inter-class signal ($D_{\text{inter}}$) is reduced by only $55.95\times$. This asymmetry results in a $1.14\times$ improvement in the class/orbit ratio. That is, metric singularity specifically targets augmentation directions, preserving semantic separation even when the majority of the representation variance is collapsed.

\begin{table*}[ht]
\centering
\scriptsize
\caption{\textbf{Metrics for CIFAR-10 (ViT-Tiny).} The results demonstrate that the projection head acts as a universal geometric filter across backbones, independent of inductive biases.}
\label{tab:vit-unified-results}
\begin{tabular}{@{}llccc@{}}
\toprule
\textbf{Metric Type} & \textbf{Metric} & \textbf{Backbone ($z$)} & \textbf{Head ($h(z)$)} & \textbf{Ratio / Change} \\ 
\midrule
\textit{Geometric} & Local Curvature & $0.123 \pm 0.0002$ & $0.317 \pm 0.002$ & \textbf{2.59$\times$ Higher} \\
      & Mean Orbit Spread ($\times 10^{-2}$) & $1.09 \pm 0.37$ & $0.0003 \pm 0.0001$ & \textbf{4015.33$\times$ Collapse} \\
                   & Intra-orbit Distance ($D_{\text{intra}}$) & $0.148 \pm 0.024$ & $0.002 \pm 0.0004$ & $64.13\times$ Reduction \\
                   & Inter-class Distance ($D_{\text{inter}}$) & $0.313 \pm 0.039$ & $0.006 \pm 0.001$ & $55.95\times$ Reduction \\
                   & Class/Orbit Ratio ($D_{\text{inter}} / D_{\text{intra}}$) & $2.12 \pm 0.44\times$ & $2.42\pm0.58\times$ & $1.14\times$ Separation \\ 
\midrule
\textit{Information} & Linear Probing Acc. & 40.75\% & 34.92\% & -5.83\% Abs. Loss \\
          & MLP Probing Acc. & 44.22\% & 40.87\% & -3.35\% Abs. Loss \\
                     & \textbf{Nonlinearity Gap ($\Delta_{\text{MLP-Lin}}$)} & +3.47\% & \textbf{+5.83\%} & \textbf{+2.48\% Entanglement} \\ 
\bottomrule
\end{tabular}
\end{table*}

Notably, while Theorem~\ref{thm:instability} proves that the collapsed state remains a strict unstable region for ViTs, we observed that the transformer backbone exhibited significant optimization stiffness compared to ResNets. In our 20-epoch window, all projection heads remain trapped near the collapsed equilibrium (Figure~\ref{fig:vit-collapse}). This suggests that while the geometric instability exists (as theoretically proven), the lack of BatchNorm's landscape-smoothing properties~\citep{santurkar2019how} and the sharp nature of the ViT loss landscape~\citep{chen2021empirical, chen2022when} make these instabilities harder to escape via first-order methods.

\begin{figure}[ht]
    \centering
    \begin{minipage}{0.48\textwidth}
        \centering
        \includegraphics[width=\linewidth]{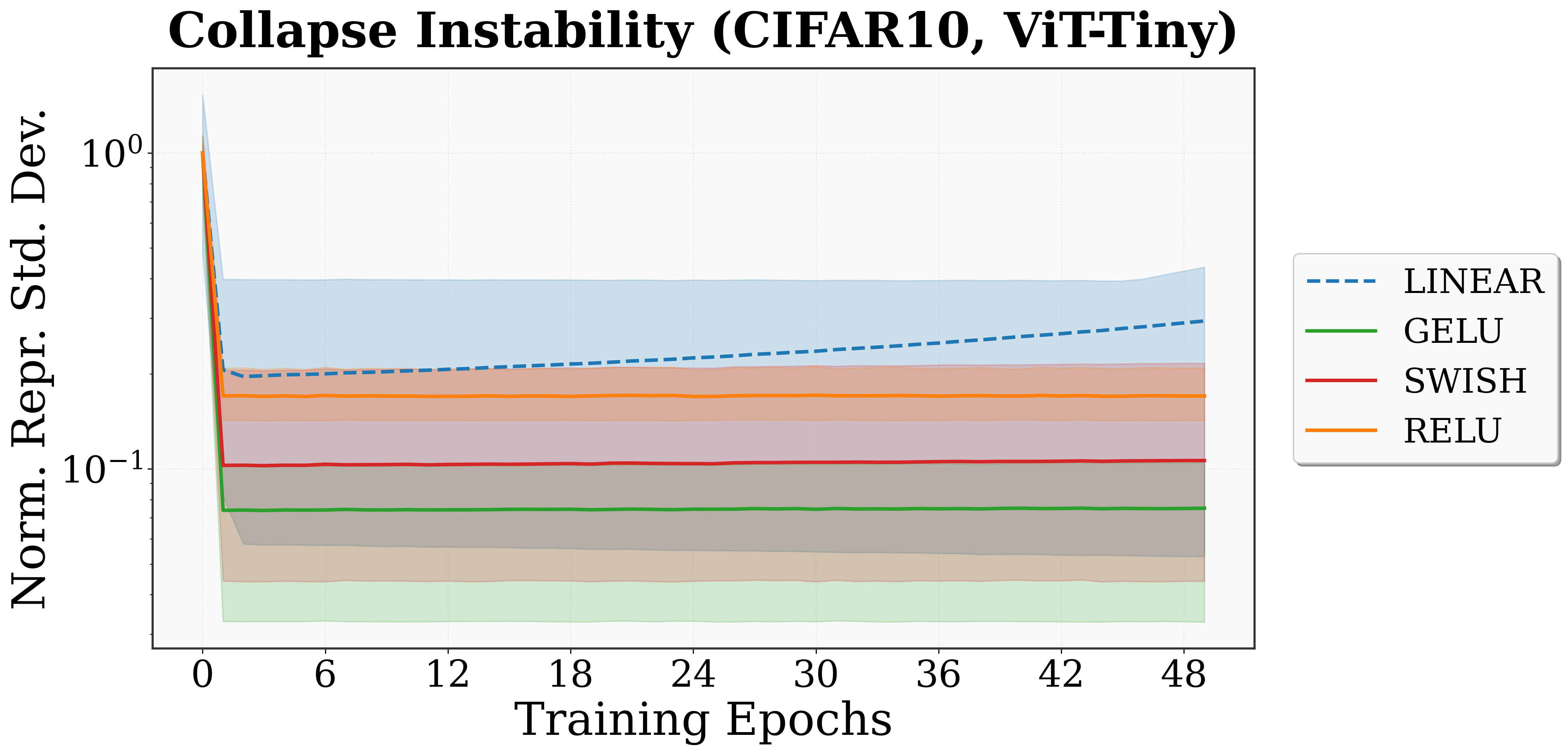}
    \end{minipage}\hfill
    \begin{minipage}{0.48\textwidth}
        \centering
        \includegraphics[width=\linewidth]{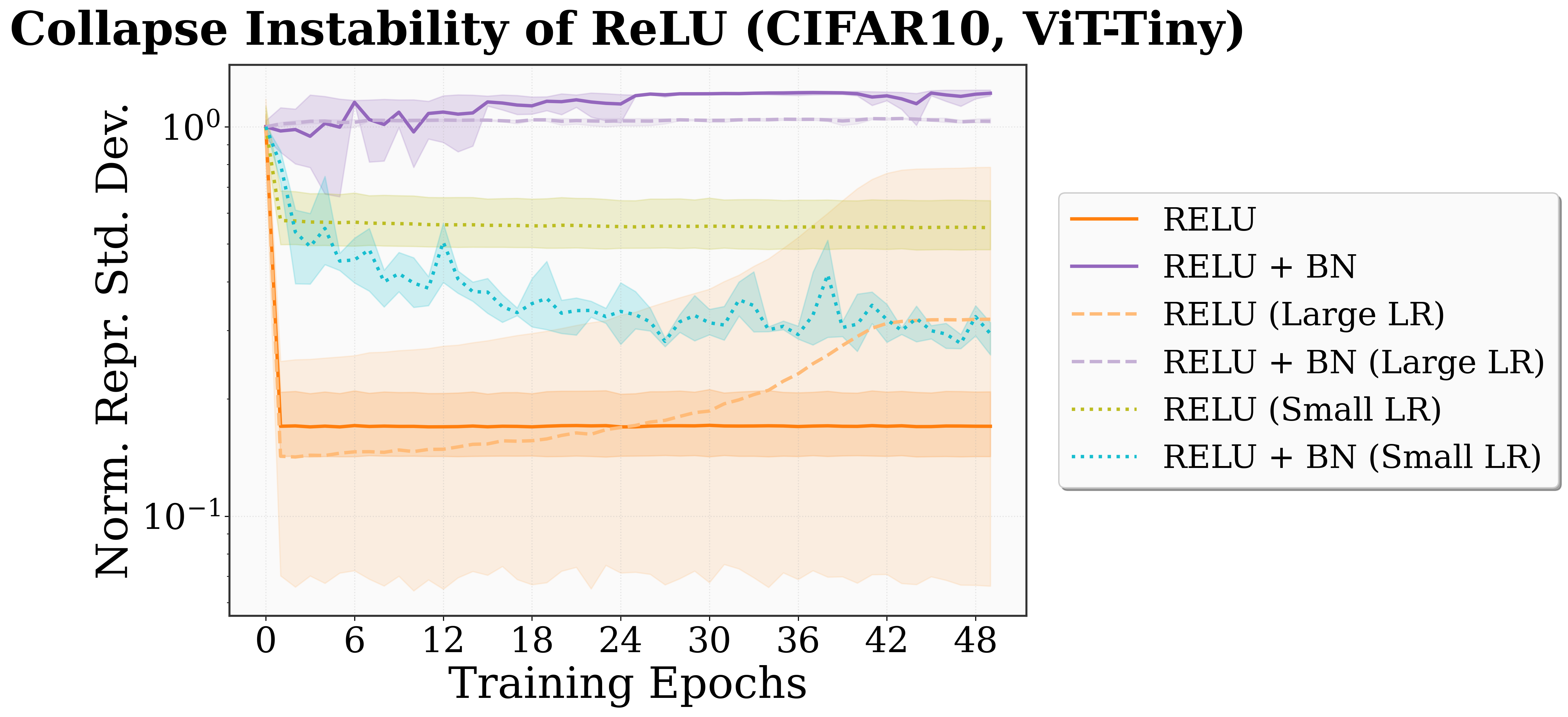} 
    \end{minipage}
    
    \vspace{1em}
    
    \begin{minipage}{0.48\textwidth}
        \centering
        \includegraphics[width=\linewidth]{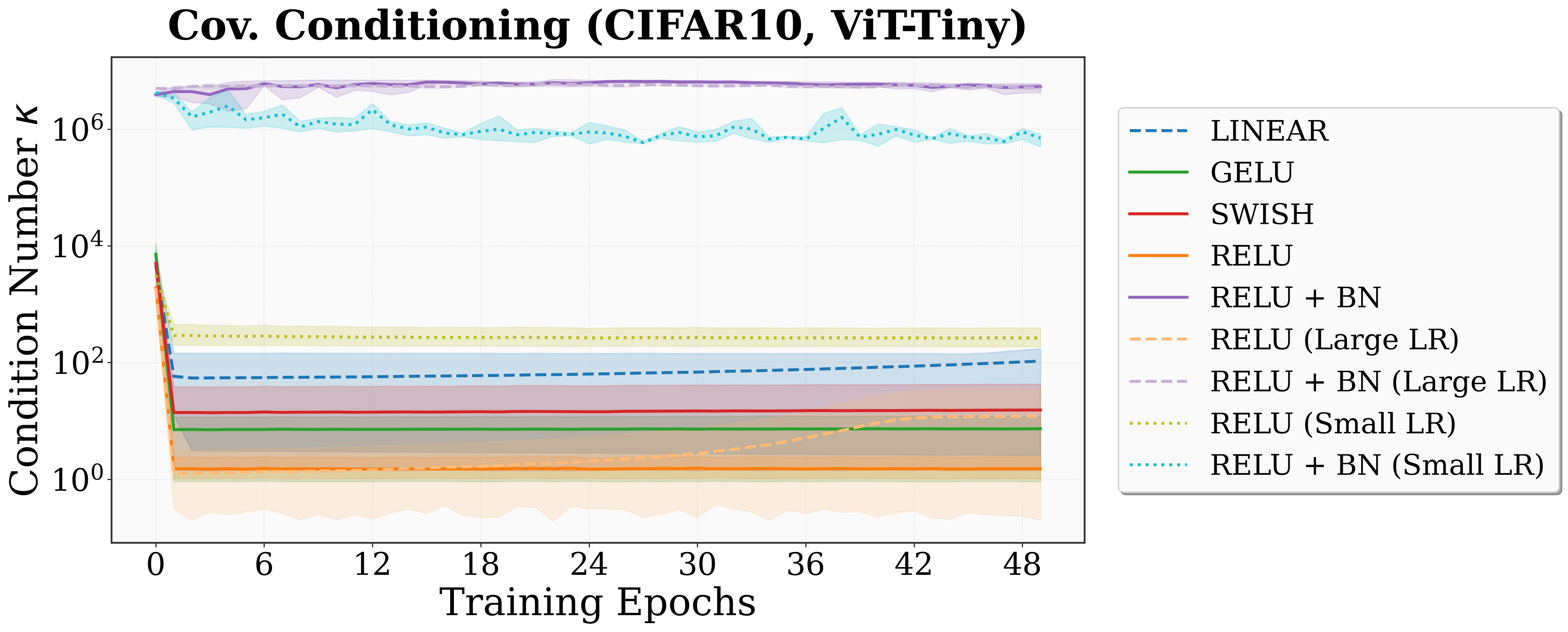} 
    \end{minipage}\hfill
    \begin{minipage}{0.48\textwidth}
        \centering
        \includegraphics[width=\linewidth]{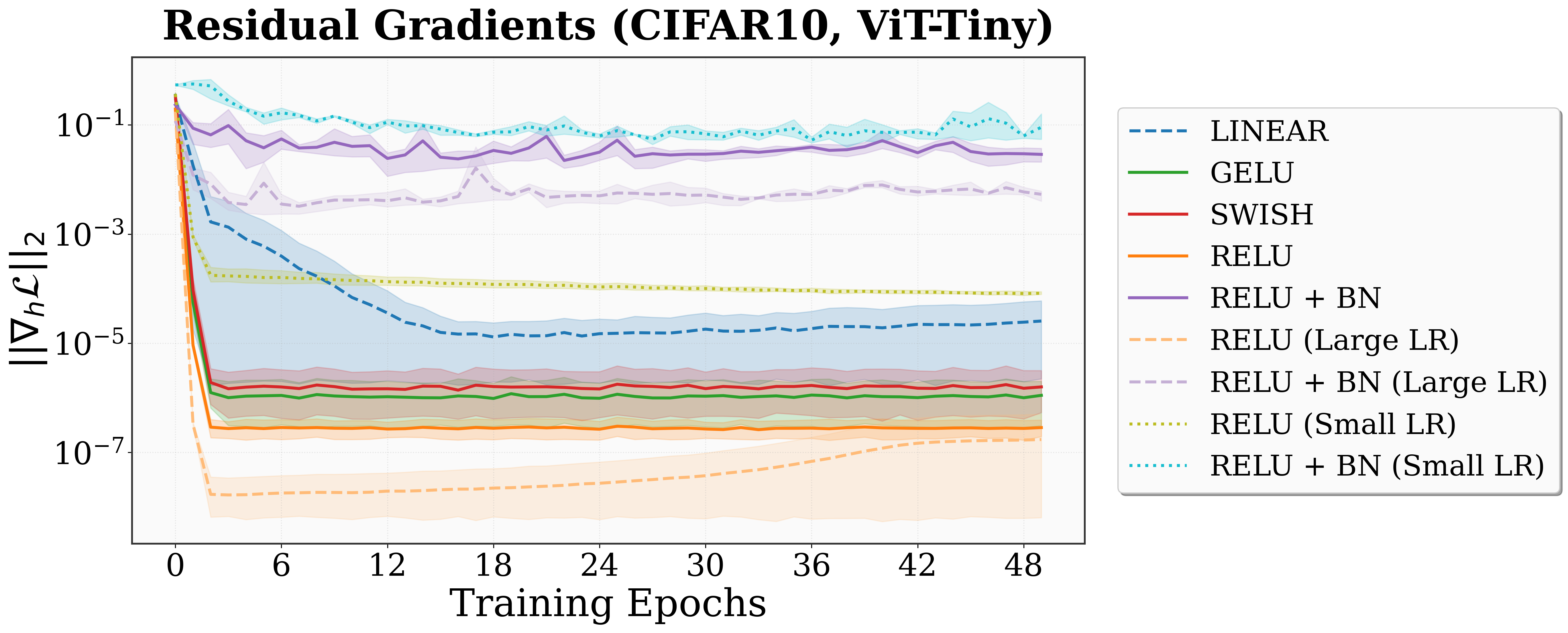}
    \end{minipage}
    \caption{\textbf{Optimization geometry (CIFAR-10, ViT-Tiny).} Unlike ResNets, the ViT backbone is exceptionally stiff. For all activations, representation variance (top left) remains trapped near zero. It is observed that adding BatchNorm for ReLU (top right) allows for some attempt at escape. The condition numbers (bottom left) remain mostly flat. Finally, the residual gradients are still bounded away from zero for most activations, but generally smaller than in CNNs. Overall, while smooth heads are theoretically necessary for stability, they are not alone sufficient to overcome ViT landscape sharpness without extended epochs or backbone regularization.}
    \label{fig:vit-collapse}
\end{figure}

These results as a whole demonstrate why ViTs are promising for geometric SSL. Unlike static convolutional hierarchies, the ViT backbone generates a data-dependent Jacobian via self-attention. The results show that even when the backbone geometry is dynamic, the projection head consistently induces a singular pullback metric to satisfy the invariance objective. This emphasizes the necessity of smooth activations like GELU~\citep{hendrycks2023gelu}: they provide the intrinsic curvature required to theoretically destabilize the collapsed state, even when the structural stiffness of the ViT backbone~\citep{sukenik2025neural} severely dampens the observable escape dynamics.

Our theory may offer a geometric explanation for the industry-wide shift from piecewise-linear activations (e.g., ReLU) to smooth nonlinearities (e.g., GELU, Swish) in SSL more generally. While early contrastive methods like SimCLR successfully used ReLU, they relied on large batch sizes and specific normalization layers to maintain variance. In contrast, modern frameworks (e.g., DINO, MAE) and transformer architectures favor GELU. Our results suggest that this shift provides an intrinsic geometric benefit which makes them more stable, reducing reliance on careful batch-size scaling.

\subsection{Projection Head Depth Ablation}
\label{app:depth-ablation}

As discussed in Section~\ref{sec:nonlinear-head-metric-adapt} and Proposition~\ref{prop:capacity-limits}, the projection head must possess sufficient geometric capacity to successfully isolate the metric singularity required for invariance. To empirically validate this, we ablated the depth of a Swish-activated projection head on CIFAR-10 using a ResNet-18 backbone, evaluating depth configurations from a purely linear projection ($L=1$) to a (deeper) nonlinear MLP ($L=3$). 

\begin{table}[ht]
\centering
\scriptsize
\caption{\textbf{Projection head depth ablation (CIFAR-10, ResNet-18).} As head depth increases, both local manifold curvature and orbit compression intensify. Deeper heads possess the geometric capacity to violently warp the space to absorb the metric singularity, sacrificing their own linear separability to protect the upstream backbone.}
\label{tab:depth_ablation}
\begin{tabular}{lcccc}
\toprule
\textbf{Head Architecture} & \textbf{Head Acc.} & \textbf{Backbone Acc.} & \textbf{Curvature Ratio} & \textbf{Orbit Comp.} \\
\midrule
Linear ($L=1$)             & 64.2\%             & 68.1\%                 & $1.08\times$             & $49.1\times$ \\
2-Layer MLP ($L=2$)        & 56.5\%             & 73.4\%                 & $1.93\times$             & $57.2\times$ \\
3-Layer MLP ($L=3$)        & 51.1\%             & \textbf{75.2\%}        & \textbf{2.54$\times$}    & \textbf{78.5$\times$} \\
\bottomrule
\end{tabular}
\end{table}

Table~\ref{tab:depth_ablation} exposes the mechanics of the guillotine effect. A shallow linear head forces a rank bottleneck that bleeds upstream; it fails to adequately warp the space (curvature ratio of $1.08\times$), resulting in weak orbit compression ($49.1\times$) and relatively worse downstream backbone accuracy ($68.1\%$). However, as depth increases to 3 layers, the head increases its capacity to warp the local manifold (curvature ratio jumps to $2.54\times$). By acting as a highly capable Riemannian preconditioner, the deeper head better isolates the singularity, causing extreme orbit compression ($78.5\times$) in its own latent space, which drops its linear probing accuracy to $51.1\%$, while insulating the backbone and increasing downstream accuracy to $75.2\%$. 

While it is recommended to have a projection head architecture which has sufficient expressive capacity as an approximator, practitioners should choose an architecture which equilibrates to the backbone fast to ensure timescale separation. The current trend of 2- or 3-layer MLPs seems to meet that criteria.

\subsection{An Analysis of Pretrained Foundation Models}
\label{app:pretrained}

Finally, to show the persistence of these effects at scale we consider a range of pretrained models where the projection head is available (DINO~\citep{caron2021emerging}, VICReg~\citep{bardes2022vicreg}, Barlow Twins~\citep{zbontar2021barlow}). Table~\ref{tab:pretrained-geometry} reveals a geometric dichotomy between clustering-based and redundancy-reduction methods.

\begin{table*}[t]
\centering
\scriptsize
\caption{\textbf{Geometric analysis of pretrained foundation models.} Comparison of backbone ($f$) vs.\ projection head ($h$) geometry. Rank measures the effective dimensionality of the orbit. Variance measures the total spread (scaled by $10^{-2}$). Collapse quantifies the compression of the augmentation manifold (in terms of variance of representations between the head and backbone). Curvature measures the warping of the manifold. Gain ($\Delta_{\text{cos}}$) measures the improvement in cosine invariance induced by the head. }
\label{tab:pretrained-geometry}
\begin{tabular}{@{}llcc|cc|c|ccc|c@{}}
\toprule
& & \multicolumn{2}{c|}{\textbf{Dimensionality} (Rank)} & \multicolumn{2}{c|}{\textbf{Variance} ($\times 10^{-2}$)} & \textbf{Collapse Ratio} & \multicolumn{3}{c|}{\textbf{Curvature}} & \textbf{Alignment} \\
\textbf{Model} & \textbf{Orbit} & Backbone & Head & Backbone & Head & (Mean $\pm$ CI) & Back. & Head & \textbf{Ratio} & \textbf{Gain} ($\Delta_{\text{cos}}$) \\ \midrule

\multirow{4}{*}{\textbf{DINO ViT-S}} 
& Rotation & 3.65 & 2.72 & 512.2 & 0.42 & \textbf{1284.4} $\pm$ 62.0 & 39.1 & 14.0 & \textbf{0.36}$\times$ & +0.34 \\
& Hue & 6.50 & 3.07 & 224.7 & 0.13 & \textbf{2498.6} $\pm$ 321.3 & 43.9 & 12.8 & \textbf{0.29}$\times$ & +0.23 \\
& Saturation & 2.45 & 2.36 & 200.2 & 0.09 & \textbf{2898.1} $\pm$ 435.7 & 11.3 & 3.5 & \textbf{0.31}$\times$ & +0.32 \\
& Blur & 2.01 & 1.73 & 90.5 & 0.05 & \textbf{3727.7} $\pm$ 857.9 & 4.6 & 1.4 & \textbf{0.30}$\times$ & +0.04 \\ \midrule

\multirow{4}{*}{\textbf{DINO ResNet50}} 
& Rotation & 4.35 & 2.39 & 0.35 & 0.00 & \textbf{176.2} $\pm$ 26.1 & 2.76 & 1.26 & \textbf{0.45}$\times$ & +0.41 \\
& Hue & 6.72 & 3.70 & 0.13 & 0.00 & \textbf{223.1} $\pm$ 31.6 & 2.71 & 1.06 & \textbf{0.41}$\times$ & +0.25 \\
& Saturation & 2.60 & 2.20 & 0.08 & 0.00 & \textbf{116.9} $\pm$ 22.3 & 0.68 & 0.40 & \textbf{0.61}$\times$ & +0.31 \\ 
& Blur & 1.86 & 1.66 & 0.17 & 0.00 & \textbf{227.2} $\pm$ 29.3 & 0.60 & 0.24 & \textbf{0.41}$\times$ & +0.11 \\ \midrule

\multirow{4}{*}{\textbf{VICReg ResNet50}} 
& Rotation & 3.82 & 3.32 & 4.71 & 4.60 & 1.4 $\pm$ 0.2 & 8.51 & 18.28 & \textbf{2.15}$\times$ & -0.37 \\
& Hue & 6.71 & 4.50 & 2.22 & 1.66 & 2.1 $\pm$ 0.5 & 11.73 & 19.37 & \textbf{1.66}$\times$ & -0.40 \\
& Saturation & 2.56 & 2.18 & 1.34 & 1.73 & 1.1 $\pm$ 0.2 & 2.68 & 6.13 & \textbf{2.31}$\times$ & -0.37 \\
& Blur & 1.79 & 1.75 & 0.84 & 1.54 & 0.9 $\pm$ 0.2 & 1.10 & 3.19 & \textbf{2.88}$\times$ & -0.04 \\ \midrule

\multirow{4}{*}{\textbf{Barlow Twins}} 
& Rotation & 3.82 & 3.48 & 0.35 & 0.18 & 2.6 $\pm$ 0.3 & 2.20 & 4.03 & \textbf{1.83}$\times$ & -0.36 \\
& Hue & 7.06 & 4.42 & 0.14 & 0.07 & 2.5 $\pm$ 0.3 & 2.98 & 4.39 & \textbf{1.49}$\times$ & -0.39 \\
& Saturation & 2.75 & 2.63 & 0.09 & 0.07 & 2.5 $\pm$ 0.7 & 0.79 & 1.49 & \textbf{1.90}$\times$ & -0.33 \\
& Blur & 1.82 & 1.72 & 0.06 & 0.07 & 1.4 $\pm$ 0.3 & 0.31 & 0.70 & \textbf{2.24}$\times$ & -0.05 \\ \bottomrule

\end{tabular}
\end{table*}

The results for DINO provide the strongest empirical validation of the metric singularity hypothesis. Across all augmentation orbits, the projection head induces extreme geometric compression. For the ViT-S backbone, the rotation orbit spread is reduced by a factor of $1284\times$, and hue variance is crushed by nearly $2500\times$. This massive collapse is accompanied by a reduction in effective rank (from $3.65$ to $2.72$ for rotation), indicating that the head is actively flattening the high-dimensional augmentation manifold into a lower-dimensional subspace. This geometric destruction yields a strictly positive alignment gain ($\Delta_{\text{cos}} > +0.2$ for most geometric orbits); here, the head’s primary role is to enforce invariance by mapping distinct augmented views to a single point. Notably, the curvature ratio is consistently below unity, implying that the head not only shrinks the augmentation orbits but actively straightens them. By smoothing out the high-frequency curvature of the backbone trajectory, the head simplifies the complex geometric path into a linear interpolation towards the cluster centroid.

In contrast, redundancy reduction methods exhibit a completely different geometric signature. The collapse ratios are near unity ($1$--$2\times$), indicating that the projection head preserves the total variance of the augmentation orbits. Most strikingly, the alignment gain is consistently negative ($\Delta_{\text{cos}} \approx -0.35$). This implies that the projection head makes the augmented views less similar to each other than they were in the backbone. The loss functions for VICReg and Barlow Twins explicitly penalize covariance correlations to prevent collapse. To satisfy this full-rank constraint without destroying the semantic structure of the backbone, the projection head acts as a dimensional expander of sorts: it absorbs the decorrelation stress by actively separating views (negative gain) and increasing manifold complexity (e.g., curvature increases from $8.51$ to $18.28$ for VICReg rotation), thereby allowing the backbone to remain semantically compressed and invariant.

Despite these opposing mechanisms, the projection head serves a universal purpose: it decouples the semantic backbone from the rigid geometric constraints of the training objective. Whether the objective demands extreme invariance (DINO) or extreme variance (VICReg), the MLP head provides the topological flexibility to satisfy these constraints---via singularity or expansion---shielding the backbone's linear separability. Future work may investigate the complementary question of how the backbone geometry itself adapts when this decoupling mechanism is removed or altered.

\end{document}